\documentclass{article}
\usepackage{amsmath}
\usepackage{times}
\usepackage{graphicx}
\usepackage{color}
\usepackage{multirow}
\usepackage{fullpage}

%
%\usepackage[authoryear]{natbib}
%
%\usepackage{rotating}
%\usepackage{bbm}
%\usepackage{latexsym}
%\DeclareGraphicsExtensions{-eps-converted-to.pdf,.png}

%%%% ADDED BY HK
\usepackage{hyperref}
\usepackage{epstopdf}
\usepackage{amssymb}
\usepackage{amsthm}
\usepackage{algorithm}
\usepackage{algorithmic}
\usepackage{subfigure} 
%%%
%%%
\newtheorem{theorem}{Theorem}
\newtheorem{prop}{Proposition}
\newtheorem{lemma}{Lemma}
%%%%

\renewcommand{\thefootnote}{\normalsize \arabic{footnote}} 	

%============================== Preamble =========================%
%---------Definitions----------

\newcommand{\Tab}[1]{Table~\ref{#1}}

\newcommand{\Eqn}[1]{Eq.~(\ref{#1})}

\newcommand{\Lem}[1]{Lemma~\ref{#1}}
\newcommand{\Thm}[1]{Theorem~\ref{#1}}
\newcommand{\Prop}[1]{Proposition~\ref{#1}}

\renewcommand{\>}{{\,\rightarrow\,}}
\renewcommand{\hat}{\widehat}
\renewcommand{\tilde}{\widetilde}

\newcommand{\argmax}{\textup{\textrm{argmax}}}
\newcommand{\argmin}{\textup{\textrm{argmin}}}
\newcommand{\arginf}{\textup{\textrm{arginf}}}
\newcommand{\R}{{\mathbb R}}
\newcommand{\Z}{{\mathbb Z}}
\newcommand{\N}{{\mathbb N}}
\renewcommand{\P}{{\mathbf P}}
\newcommand{\E}{{\mathbf E}}

\newcommand{\X}{{X}}

\newcommand{\T}{{\mathcal T}}
\newcommand{\D}{{\mathcal D}}

\newcommand{\C}{{\mathcal C}}
\newcommand{\F}{{\mathcal F}}

\newcommand{\1}{{\mathbf 1}}

\renewcommand{\a}{{\mathbf a}}

\newcommand{\bT}{{\mathbb T}}

\newcommand{\TPR}{\textup{\textrm{TPR}}}
\newcommand{\FPR}{\textup{\textrm{FPR}}}

\newcommand{\AUC}{\textup{\textrm{AUC}}}
\newcommand{\pAUC}{\textup{\textrm{pAUC}}}
\newcommand{\SVM}{\textup{\textrm{SVM}}}
\newcommand{\tight}{\textup{\textrm{tight}}}
\newcommand{\struct}{\textup{\textrm{struct}}}

\newcommand{\sign}{\textup{\textrm{sign}}}

\newcommand{\hinge}{\textup{\textrm{hinge}}}

\newcommand{\newtilde}{\raise.17ex\hbox{$\scriptstyle\mathtt{\sim}$}}
\newcommand{\dc}{\textup{\textrm{dc}}}

%-------- new def---------%
\newcommand{\term}{\textrm{term}}

\newcommand{\wt}{\widehat{t}}

\newcommand{\werr}{{\widehat{R}}}
\newcommand{\terr}{{\widetilde{R}}}
\newcommand{\berr}{{\bar{R}}}

%%%% Footnote without marker %%%
%\newcommand\blfootnote[1]{%
%  \begingroup
%  \renewcommand\thefootnote{}\footnote{#1}%
%  \addtocounter{footnote}{-1}%
%  \endgroup
%}

%=============================== END PREAMBLE ===============================
\allowdisplaybreaks

\begin{document}

%%%%%%%%%%%%%%%%%%%%%% Title & Authors %%%%%%%%%%%%%%%%%%%%%%%
\title{Support Vector Algorithms for Optimizing the\\ Partial Area Under the ROC Curve}
\author{
Harikrishna Narasimhan\footnote{This work was done when the author was a PhD student at the Indian Institute of Science, Bangalore.}\\
{John A. Paulson  School of Engineering and Applied Sciences}, Harvard University\\
{Cambridge, MA  02138, USA}\\
{Email: \href{mailto:hnarasimhan@seas.harvard.edu}{hnarasimhan@seas.harvard.edu}}\\
\and
Shivani Agarwal\\
Radcliffe Institute for Advanced Study, Harvard University\\
Cambridge, MA 02138, USA\\
{Department of Computer Science and Automation, Indian Institute of Science}\\
{Bangalore  560012, INDIA}\\
{Email: \href{mailto:shivani@csa.iisc.ernet.in}{shivani@csa.iisc.ernet.in}}
}
\date{}
\maketitle
%%%%%%%%%%%%%%%%%%%%%%%%%%%%%%%%%%%%%%%%%%%%%%%%%%%%%%%%%%%%%%

%%%%%%%%%%%%%%%%%%%%%%%%%%% Keywords %%%%%%%%%%%%%%%%%%%%%%%%%%
%{\bf Keywords:} ROC curve, partial AUC, structural SVM, cutting plane method, DC programming, generalization bound
%%%%%%%%%%%%%%%%%%%%%%%%%%%%%%%%%%%%%%%%%%%%%%%%%%%%%%%%%%%%%%%

\thispagestyle{empty}
%\markboth{}{Support Vector Algorithms for Optimizing pAUC}
%\ \vspace{-0mm}\\

%%%%%%%%%%%%%%%%%%%%%% Abstract %%%%%%%%%%%%%%%%%%%%%%%
{\let\thefootnote\relax\footnotetext{~\\[-10pt]\indent Parts of this paper were presented at the 30th International Conference on Machine Learning, 2013 \cite{NarasimhanAg13a} and at the 19th ACM SIGKDD Conference on Knowledge Discovery and Data Mining, 2013 \cite{NarasimhanAg13b}.}}
\begin{abstract}
The area under the ROC curve (AUC) is a widely used performance measure in machine learning.  Increasingly, however, in several applications, ranging from ranking to biometric screening to medicine, performance is measured not in terms of the full area under the ROC curve, but in terms of the \emph{partial} area under the ROC curve between two false positive rates. In this paper, we develop support vector algorithms for directly optimizing the partial AUC between any two false positive rates. Our methods are based on minimizing a suitable proxy or surrogate objective for the partial AUC error. In the case of the full AUC, one can readily construct and optimize convex surrogates by expressing the performance measure as a summation of  pairwise terms. The partial AUC, on the other hand, does not admit such a simple decomposable structure, making it more challenging to design and optimize (tight) convex surrogates for this measure.

Our approach builds on the structural SVM framework of Joachims (2005) to design convex surrogates for partial AUC, and solves the resulting optimization problem using a cutting plane solver. Unlike the full AUC, where the combinatorial optimization needed in each iteration of the cutting plane solver can be decomposed and solved efficiently, the corresponding problem for the partial AUC is harder to decompose. One of our main contributions is a polynomial time algorithm for solving the combinatorial optimization problem associated with partial AUC. We also develop an approach for optimizing a tighter non-convex hinge loss based surrogate for the partial AUC using 
difference-of-convex programming. Our experiments on a variety of real-world and benchmark tasks confirm the efficacy of the proposed methods. 
\end{abstract}
%%%%%%%%%%%%%%%%%%%%%%%%%%%%%%%%%%%%%%%%%%%%%%%%%%%%%%%

%%%%%%%%%%%%%%%%%%%%%% Sec 1: Intro %%%%%%%%%%%%%%%%%%%%%%%%%%
%\vspace{-8pt}
\section{Introduction}
\label{sec:intro}
%\vspace{-5pt}
The receiver operating characteristic (ROC) curve plays an important role as an evaluation tool in machine learning and data science. In particular, the area under the ROC curve (AUC) is widely used to summarize the performance of a scoring function in binary classification problems, and is often a performance measure of interest in bipartite ranking \cite{CortesMo04, Agarwal+05}. Increasingly, however, in several applications, the performance measure of interest is not the full area under the ROC curve, but instead, the \emph{partial} area under the ROC curve between two specified false positive rates (FPRs) (see Figure~1). For example, in ranking applications where accuracy at the top is critical, one is often interested in the left-most part of the ROC curve \cite{pnorm,infpush,sparseinfpush}; this corresponds to maximizing partial AUC in a false positive range of the form $[0,\beta]$. In biometric screening, where false positives are intolerable, one is again interested in
maximizing the partial AUC in a false positive range $[0,\beta]$ for some suitably small $\beta$. 
In the KDD Cup 2008 challenge on breast cancer detection, performance was measured in terms of the partial AUC in a specific false positive range $[\alpha,\beta]$ deemed clinically relevant  \cite{kddcup}.\footnote{More specifically, the KDD Cup 2008 challenge used the partial area under the \emph{free-response} operating characteristic curve, where a scaled version of usual FPR is used.} 

%In this paper, we develop support vector machine (SVM) based algorithms for directly optimizing the partial AUC between any two false positive rates $\alpha$ and $\beta$. Our methods are based on minimizing a suitable proxy or surrogate objective for the partial AUC error. In the case of the full AUC, where the evaluation measure can be expressed as a summation of pairwise indicator terms, it is straightforward to construct a convex surrogate by replacing the indicator terms with a suitable pairwise convex loss such as the pairwise hinge loss, and there are several efficient methods available to solve the resulting optimization problem \cite{ranksvm1,ranksvm2,svmperf}. The partial AUC, on the other hand, does not admit such a simple structure, as the set of negative instances associated with the specified false positive range can be different for different scoring models. As a result, a surrogate constructed for the partial AUC by replacing the indicators with the pairwise hinge loss is non-convex in general; even in the special case of FPR intervals of the form $[0, \beta]$, where the hinge loss based surrogate turns out to be convex, solving the resulting optimization problem is not straightforward.

In this paper, we develop support vector machine (SVM) based algorithms for directly optimizing the partial AUC between any two false positive rates $\alpha$ and $\beta$. Our methods are based on minimizing a suitable proxy or surrogate objective for the partial AUC error. In the case of the full AUC, where the evaluation measure can be expressed as a summation of pairwise indicator terms, one can readily construct and optimize surrogates by exploiting this structure. The partial AUC, on the other hand, does not admit such a decomposable structure, as the set of negative instances associated with the specified false positive range can be different for different scoring models; as a result, it becomes more challenging to design and optimize convex surrogates for this measure.

\begin{figure}[t]
\centering
\vspace{-10pt}
\includegraphics[scale=0.3]{./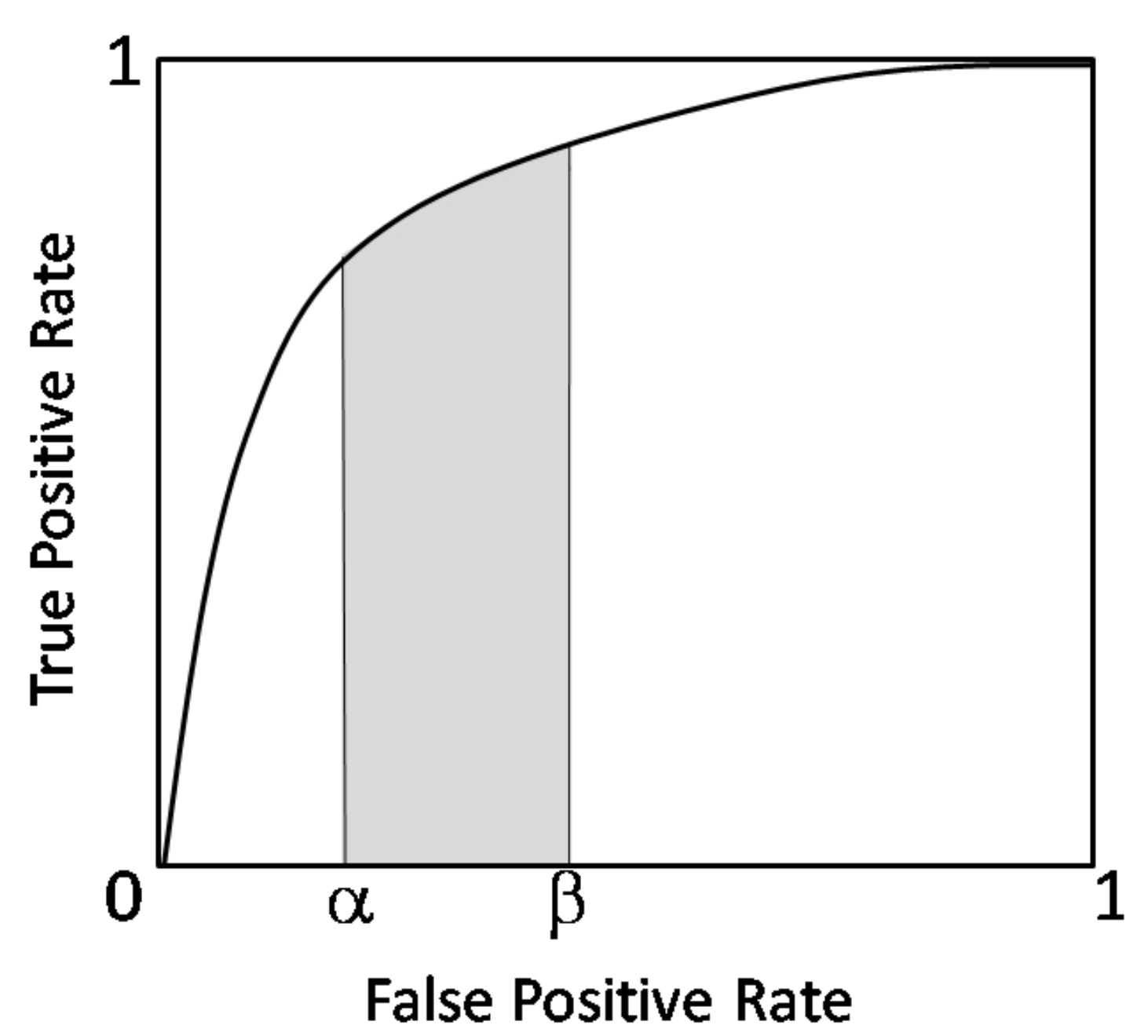}
%\vspace{-10pt}
\caption{Partial AUC in false positive range $[\alpha, \beta]$.}
\label{fig:pauc}
%\vspace{-5pt}
\end{figure}

For instance, a popular approach for constructing convex surrogates for the full AUC is to replace the indicator terms in its definition with a suitable pairwise convex loss such as the pairwise hinge loss; in fact, there are several efficient methods available to solve the resulting optimization problem \cite{ranksvm1,ranksvm2,svmperf}. This is not the case with the more complex partial AUC measure; here, a surrogate constructed by replacing the indicators with the pairwise hinge loss is non-convex in general. Even in the special case of FPR intervals of the form $[0, \beta]$, where the hinge loss based surrogate turns out to be convex, solving the resulting optimization problem is not straightforward.

In our approach, we construct and optimize convex surrogates on the partial AUC by building on the structural SVM formulation of Joachims (2005) developed for general complex performance measures  \cite{svmperf}. It is known that for the full AUC, this formulation recovers the corresponding hinge surrogate \cite{lintime}. On the other hand, a direct application of this framework to the partial AUC results in a loose approximation to the performance measure (in a sense that we will elaborate in later sections). Instead, we first rewrite the evaluation measure as a maximum of a certain term over subsets of negative instances, and leverage the structural SVM setup to construct a convex approximation to the inner term. This yields a tighter surrogate, which for the special case of partial AUC in the $[0,\beta]$ range, is equivalent to the hinge surrogate obtained by replacing the indicators with the pairwise hinge loss; for general FPR intervals $[\alpha, \beta]$, the surrogate obtained can be seen as a convex relaxation to the (non-convex) hinge surrogate.
%
%In the case of the partial AUC, the resulting surrogate involves a maximum over subsets of negative instances, and over binary matrices (representing relative orderings of positive and negative training instances). In the case of false positive ranges of form $[0,\beta]$, this function is the pairwise hinge loss for the full AUC
%
%For the full AUC, this formulation can essentially be seen as optimizing the pairwise hinge surrogate. For the partial AUC, the specific surrogate that we prescribe is a maximum of certain quantity over subsets of negative instances. 
% 
 %Unlike the full AUC, which is a sum of pairwise terms, the partial AUC does not admit a simple additive form; a key challenge in our work is to therefore deal with this non-additive structure. 
%We approach builds upon the structural SVM framework developed by Joachims (2005) for optimizing the full AUC \cite{svmperf}. %

We make use of the cutting plane method to optimize the proposed structural SVM surrogates. Each iteration of this solver requires a combinatorial search over subsets of instances and over binary matrices (representing relative orderings of positive and negative training instances) to find the currently most violated constraint. In the case of the full AUC (where the optimization is only over binary matrices), this problem decomposes neatly into one where each matrix entry can be chosen independently \cite{svmperf}.  Unfortunately, for the partial AUC, a straightforward decomposition is not possible, again because the negative instances involved in the relevant false positive range can be different for different orderings of instances. 
One of our main contributions in this paper is a polynomial time algorithm for solving the corresponding combinatorial optimization within the cutting plane method for the partial AUC, by breaking down the problem into smaller tractable ones. When the specified false positive range is of the form $[0,\beta]$, we show that after fixing the optimal subset of negatives to the top ranked negatives, one can still optimize the individual entries of the ordering matrix separately. For the general case $[\alpha,\beta]$, we require to further formulate an equivalent optimization problem over a restricted search space, where each \emph{row} of the matrix can be optimized separately -- and efficiently. %In both cases, the combinatorial optimization procedure has a computational complexity at least as good as that for optimizing the usual AUC. 

%The proposed structural SVM method for partial AUC in a $[0,\beta]$ false positive range can essentially be seen as optimizing a hinge loss based surrogate for the evaluation measure. However, the method for a general $[\alpha,\beta]$ range does not have such a neat interpretation. In this case, we develop a second approach for optimizing a tighter non-convex surrogate for partial AUC based on the hinge loss, where we use a difference-of-convex (DC) programming technique to solve the resulting optimization problem.
%compared to the structural SVM based convex surrogate

While the use of convex surrogates in the above approach allows for efficient optimization and guarantees convergence to the global surrogate optimum, it turns out that for the partial AUC in a general FPR interval $[\alpha,\beta]$, the previous non-convex hinge surrogate (obtained by replacing the indicators with the pairwise hinge loss) is a tighter approximation to the original evaluation measure. Hence, as a next step, we also develop a method for  directly optimizing this non-convex surrogate using a popular non-convex optimization technique based on difference-of-convex (DC) programming; here we exploit the fact that the partial AUC in  $[\alpha,\beta]$ can be written as a difference of (scaled) partial AUC values in $[0,\beta]$ and $[0,\alpha]$. %While this approach has the risk of getting stuck at locally optimal solutions, we find that in practice, there are cases where this approach performs better than the structural SVM based convex optimization method.

We evaluate the proposed methods on a variety of real-world applications where partial AUC is an evaluation measure of interest and on benchmark data sets. We find that in most cases, the proposed methods yield better partial AUC performance compared to an approach for optimizing the full AUC, thus confirming the focus of our methods on a select false positive range of the ROC curve. Our methods are also competitive with existing algorithms for optimizing partial AUC. For partial AUC in $[\alpha, \beta]$, we find that in some settings, the proposed DC programming method for optimizing the non-convex hinge surrogate (despite having the risk of getting stuck at a locally optimal solution) performs better than the structural SVM method, though overall there is no clear winner. % where the there is no clearare cases where one performs better than the other. %we find that there are cases where either of them performs better than the other.

%\subsection{Contributions}
%Below, we summarize the additional contributions that we make in this paper compared to the conference versions of this work \cite{NarasimhanAg13a, NarasimhanAg13b}:
%%There are several additions that we make compared to the conference versions of this paper \cite{NarasimhanAg13a, NarasimhanAg13b}.
%\begin{enumerate}
%\item We provide a more self-contained and comprehensive description of our structural SVM methods for partial AUC optimization. The surrogate construction is explained from ground up, and from a perspective that matches well with the readers' intuition about existing surrogates for AUC. Further, complete proofs are included for all theorems.
%\item In the case of partial AUC in the $[\alpha, \beta]$ range, we develop a new method for optimizing a tighter non-convex surrogate using DC programming.  
%%which performs better than the earlier proposed structural SVM approach in certain settings.
%\item We derive a generalization bound for partial AUC using VC dimension based uniform convergence arguments.
%\item Our experiments are more extensive and detailed, covering a range of applications and benchmark data sets. 
%\end{enumerate}

\subsection{Related Work}
There has been much work on developing algorithms to optimize the full AUC, mostly in the context of ranking \cite{ranksvm1, ranksvm2, rankboost, ranknet, svmperf}. There has also been interest in the ranking literature in optimizing measures focusing on the left end of the ROC curve, corresponding to maximizing accuracy at the top of the list \cite{pnorm}; in particular, the Infinite Push ranking algorithm \cite{infpush, sparseinfpush} can be viewed as maximizing the partial AUC in the range $\big[0,\frac{1}{n}\big]$, where $n$ is the number of negative training examples.  

While the AUC is widely used in practice, increasingly, the partial AUC is being preferred as an evaluation measure in several applications in bioinformatics and medical diagnosis \cite{pauc_two,ppiQi06,kddcup,Hsu+14}, and more recently even in domains like computer vision \cite{Paisitkriangkrai+13,Paisitkriangkrai+14}, personalized medicine \cite{Majumder+15}, and demand forecasting \cite{SchneiderGorr15}. The problem of optimizing the partial AUC in false positive ranges of the form $[0,\beta]$ has received some attention primarily in the bioinformatics and biometrics literature \cite{pauc_two, paucreg, pauc_marker_selection, paucmax, pauc_lin_max}; however in most cases, the algorithms developed are heuristic in nature. The asymmetric SVM algorithm of \cite{asvm} also aims to maximize the partial AUC in a range $[0,\beta]$ by using a variant of one-class SVM; but the optimization objective used does not directly approximate the partial AUC in the specified range, but instead seeks to indirectly promote good partial AUC performance through a fine-grained parameter tuning procedure. There has also been some work on optimizing the partial AUC in general false positive ranges of the form $[\alpha,\beta]$ 
%using boosting style methods \cite{paucboost,puaucboost}.
including the boosting-based algorithms pAUCBoost \cite{paucboost} and p\textit{U}-AUCBoost \cite{puaucboost}.
%including the boosting-based algorithms pAUCBoost \cite{paucboost} and p\textit{U}-AUCBoost \cite{puaucboost}. 

Support vector algorithms have been extensively used in practice for various supervised learning tasks, with both standard and complex performance measures \cite{CortesVapnik95,CrammerSinger02,ChuKeerthi04,ranksvm2,structsvm}. The proposed methods are most closely related to the structural SVM framework of Joachims for optimizing the full AUC \cite{svmperf}. To our knowledge, ours is the first work to develop principled support vector methods that can directly optimize the partial AUC in an arbitrary false positive range $[\alpha, \beta]$. 

%%~\\[-15pt]
\subsection{Paper organization.} 
%\subsection{Organization of Paper} 
%The rest of the paper is organized as follows.
%\begin{itemize}
%\item 
We begin with the problem setting in Section \ref{sec:prelims}, along with background material on the previous structural SVM framework for full AUC maximization. 
%\item 
In Section \ref{sec:surrogates}, we consider two initial surrogates for the partial AUC, one based on the pairwise hinge loss and the other based on a na\"{i}ve application of the structural SVM formulation, and point out drawbacks in each case. %; the hinge surrogate is non-convex for $[\alpha,\beta]$ intervals, while the na\"{i}ve structural SVM surrogate is a loose approximation to the partial AUC error.
%\item 
%We describe our structural SVM method for maximizing partial AUC for the special case of FPR range $[0, \beta]$ in Section \ref{sec:svm-beta}, and for the general case $[\alpha, \beta]$ in Section \ref{sec:svm-general}.
We then present a tight convex surrogate for the special case of FPR range $[0, \beta]$ in Section \ref{sec:svm-beta} and for the general case of $[\alpha, \beta]$ intervals in Section \ref{sec:svm-general}, along with cutting plane solvers for solving the resulting optimization problem. %In each case, we provide polynomial time algorithms for the combinatorial optimization required in the cutting plane solver. 
Subsequently in Section \ref{sec:svm-dc}, we also describe a DC programming approach for directly optimizing the non-convex hinge surrogate for partial AUC in $[\alpha, \beta]$. 
%\item %In Section \ref{sec:genbound}, we derive a generalization bound for the partial AUC, establishing good training performance in terms of the partial AUC also implies good generalization performance.
%\item 
We provide a generalization bound for the partial AUC in Section \ref{sec:genbound}, and present our experimental results on real-world and benchmark tasks in Section \ref{sec:expts}.
%\end{itemize}
%%%%%%%%%%%%%%%%%%%%%%%%%%%%%%%%%%%%%%%%%%%%%%%%%%%%%%%%%%%%%%
~\\[-0.5cm]

%%%%%%%%%%%%%%%%%%%%%% Sec 2: Prelims %%%%%%%%%%%%%%%%%%%%%%%%
\section{Preliminaries and Background}
\label{sec:prelims}
%%%%%%%%%%%%%%%%%%% Sec 2.1: Problem Setting %%%%%%%%%%%%%%%%%
\subsection{Problem Setting}
Let $X$ be an instance space and $\D_+$ and $\D_-$ be probability distributions over positive and negative instances in $X$. We are given a training sample $S=(S_+,S_-)$ containing $m$ positive instances $S_+ = (x^+_1,\ldots,x^+_m) \in X^m$ drawn iid according to $\D_+$ and $n$ negative instances $S_- = (x^-_1,\ldots,x^-_n) \in X^n$ drawn iid according to  $\D_-$. Our goal is to learn from $S$ a scoring function $f:X\>\R$ that assigns higher scores to positive instances compared to negative instances, and in particular yields good performance in terms of the partial AUC between some specified false positive rates $\alpha$ and $\beta$, where $0 \leq \alpha < \beta \leq 1$. %, as described in more detail below. 
 In a ranking application, this scoring function can then be deployed to rank new instances accurately, while in a classification setting, the scoring function along with a suitable threshold serves as a binary classifier. %has good performance 

\textbf{Partial AUC}. 
Define for a scoring function $f:X\>\R$ and threshold $t\in\R$, the true positive rate (TPR) of the binary classifier $\sign(f(x)-t)$ as the probability that it correctly classifies a random positive instance from $\D_+$ as positive:\footnote{Here $\sign(z) = 1$ if $z > 0$ and $-1$ otherwise.}% Also, if $f$ has ties with non-zero probability, then one needs to add a $\frac{1}{2}\P_{x^+\sim\D_+}[f(x^+) = t]$ term in the definition of TPR.}
\[
\TPR_f(t) = 
	\textbf{P}_{x^+ \sim \D_+}[f(x^+) > t]
\]
and the false positive rate (FPR) of the classifier as the probability that it misclassifies a random negative instance from $\D_-$ as positive:
\[
\FPR_f(t) = \P_{x^- \sim \D_-}[f(x^-) > t] 
	.
\]
The ROC curve for the scoring function $f$ is then defined as the plot of $\TPR_f(t)$ against $\FPR_f(t)$ for different values of $t$. The area under this curve can be computed as
\begin{eqnarray*}
\AUC_f
	& = & 
	\int_{0}^{1} \TPR_f\big(\FPR_f^{-1}(u)\big)\,du 
	\,,
\end{eqnarray*}
where 
\(
\FPR_f^{-1}(u) = 
	\inf \big\{ t \in \R ~|~ \FPR_f(t) \leq u \big\}
	\,.
\)
Assuming there are no ties, it can be shown \cite{CortesMo04} that the AUC can be written as 
\[
\AUC_f  =  
	\P_{(x^+,x^-)\sim \D_+\times\D_-} [ f(x^+) > f(x^-) ]
	\,.
\]
Our interest here is in the area under the curve between FPRs $\alpha$ and $\beta$. The (normalized) partial AUC of $f$ in the range $[\alpha,\beta]$ is defined as
\begin{eqnarray*}
\pAUC_f(\alpha, \beta) 
	& = & 
	\frac{1}{\beta - \alpha} \int_{\alpha}^{\beta} \TPR_f\big(\FPR_f^{-1}(u)\big)\,du 
	\,.
%\\
%	& = & 
%	\int_{t_f(\beta)}^{t_f(\alpha)} \text{TPR}_f(t)~~d\text{FPR}_f(t)
%	\,,
\end{eqnarray*}
%
%The normalization constant $\frac{1}{\beta-\alpha}$ simply ensures the partial AUC is normalized to lie between 0 and 1.

%where $t_f(u) = \text{FPR}_f^{-1}(u)$ is the quantile corresponding to false positive rate $u$. 
%Equivalently, one can define pAUC in terms of expectation as follows:
%\begin{eqnarray*}
%\pAUC_f(\alpha, \beta) 
%& = &
%\textbf{E}_{x^+ \sim \mathcal{D}_+, x^- \sim \mathcal{D}_-} 
%\Big\{
%\textbf{I}_{\{f(x^+) > f(x^-)\}} 
%\\
%& &
%~\times~
%\textbf{I}_{\{t_f(\beta) ~\leq~ f(x^-) ~\leq~ t_f(\alpha)\}}
%\Big\},
%\end{eqnarray*}
%where $\textbf{I}_{\{\cdot\}}$ is the indicator function. 

\textbf{Empirical Partial AUC.} 
Given a sample $S=(S_+,S_-)$ as above, one can plot an empirical ROC curve corresponding to a scoring function $f:X\>\R$; assuming there are no ties, this is obtained by using 
\[
\widehat{\TPR}_f(t) = 
	\frac{1}{m} \sum_{i=1}^m 
		\1\big( f(x^+_i) > t \big) 
%		+ \frac{1}{2}  \1\big( f(x^+_i) = t \big)
~~\text{ and }~~
\widehat{\FPR}_f(t) = 
	\frac{1}{n} \sum_{j=1}^n 
		\1\big( f(x^-_j) > t \big) 
%		+ \frac{1}{2} \1\big( f(x^-_j) = t \big)
\]
instead of $\TPR_f(t)$ and $\FPR_f(t)$ respectively.
%Again assuming there are no ties, 
The area under this empirical curve is given by
%\vspace{-2pt}
\begin{equation}
\widehat{\AUC}_f
	=
	\frac{1}{mn} \sum_{i = 1}^m \sum_{j = 1}^n 
		\1 \big( f(x_i^+) > f(x_j^-) \big)
	\,.
	\label{eqn:auc}
\end{equation}

%\vspace{-6pt}
\noindent Denoting $j_\alpha= {\lfloor n\alpha \rfloor}$ and $j_\beta = {\lceil n\beta \rceil}$,  
the (normalized) empirical partial AUC of $f$ in the FPR range $[\alpha,\beta]$ can then be written as \cite{paucreg}:%\footnote{Dodd \& Pepe (2003) used only the sum over $j_\alpha + 1$ to $j_\beta$; we give a more complete definition here.}
\begin{eqnarray}
%\lefteqn{
\widehat{\pAUC}_f(\alpha, \beta) \,=\,
%} 
%\\%n(\beta-\alpha)
%	&  &
	\frac{1}{m(j_\beta - j_\alpha)} \sum_{i = 1}^m  
%	\bigg[
%		\big( j_\alpha - n\alpha \big) \cdot 
%		\1\big( f(x_i^+) > f(x^-_{(j_\alpha)}) \big)
%	\bigg.
%	\hspace{2cm}
%\\[-2pt]
%	& & \hspace{2.5cm}
%	+ 
		\sum_{j = j_\alpha + 1}^{j_\beta} 
		\1\big(  f(x_i^+) > f(x^-_{(j)}) \big) 
		.
%\\[-2pt]
%	& & \hspace{1.5cm}
%	+ 
%	\bigg.
%		\big( n\beta - j_\beta \big) \cdot
%		\1\big( f(x_i^+) > f(x^-_{(j_\beta + 1)}) \big)
%	\bigg]
%	\,,
%\\[-6pt]
\label{eqn:pauc}
\end{eqnarray}
where $x^-_{(j)}$ denotes the negative instance in $S_-$ ranked in $j$-th position (among negatives, in descending order of scores) by $f$.\footnote{The empirical partial AUC in \cite{NarasimhanAg13a} includes two additional terms to avoid the use of ceil and floor approximations in computing $j_\alpha$ and $j_\beta$. For ease of exposition, we work with a simpler definition here.} 
 %(see \Fig{fig:pauc-emp} for a visual explanation of the three terms in the sum). 
%Note we have assumed $n \geq \frac{1}{\beta-\alpha}$, so that $j_\alpha \leq j_\beta$.\footnote{If $n < \frac{1}{\beta-\alpha}$, the empirical partial AUC in the FPR range $[\alpha,\beta]$ becomes 
%%\vspace{-12pt}
%%\[
%$	\frac{1}{m(j_\beta-j_\alpha)} \sum_{i=1}^m 
%		n(\beta - \alpha) \cdot \1\big( f(x^+_i) > f(x^-_{(j_\alpha)}) \big). 
%$
%%	\,.
%%\vspace{-12pt}
%%\]
%}

\begin{figure}[t]
\centering
%\begin{table}[t]
%\renewcommand{\arraystretch}{1.3}
%\caption{Scores assigned by two example scoring functions $f_1$ and $f_2$ to a sample containing 4 positive instances and 5 negative instances.}
%\label{tab:roc-f1-f2}
%\tabcolsep=0.11cm
\small
\begin{center}
\begin{tabular}{|l|cccccc|cccccc|ccc}
\hline
\hline
& & $x_1^+$ & $x_2^+$ & $x_3^+$ & $x_4^+$ &  & & $x_1^-$ & $x_2^-$ & $x_3^-$ & $x_4^-$ & $x_5^-$
\\
\hline
$f_1$  & & 9.1 & 6.8 & 6.1 & 5.7  & & & 8.5 & 8.1 & 4.2 & 3.6 & 2.3\\
$f_2$  & & 9.9 & 8.7 & 3.3 & 2.1  & & & 7.6 & 5.3 & 4.9 & 4.4 & 0.8\\
\hline
\hline
\end{tabular}
\end{center}
\vspace{-6pt}
%\end{table}
%
\includegraphics[scale=0.45]{./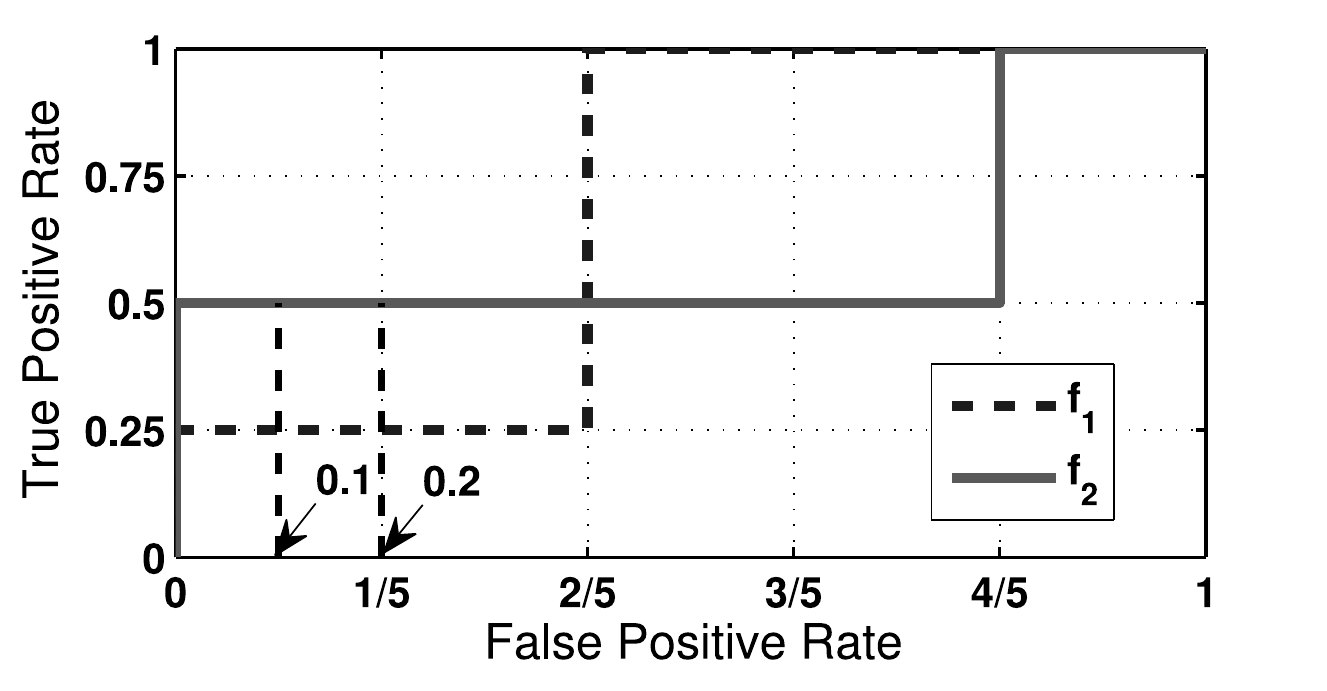}
%\vspace{-12pt}
\caption{ROC curves for scoring functions $f_1$ and $f_2$ described in the above table on a sample containing 4 positive instances and 5 negative instances.}
\label{fig:roc-f1-f2}
%\vspace{-5pt}
\end{figure}

\textbf{Partial AUC vs.\ Full AUC.} It is important to note that for the AUC to take its maximum value of 1, a scoring function needs to rank the positive instances above \textit{all} the negative instances. On the other hand, for the partial AUC in a specified interval $[\alpha, \beta]$ to take a value of 1, it is sufficient that a scoring function ranks the positives above a subset of the negative instances (specifically, above those in positions $j_\alpha+1$ to $j_{\beta}$ in the ranking of negatives). Another key difference between the two evaluation measures is that the full AUC can be expressed as an expectation or sum of indicator terms over pairs of positive-negative instances (see \Eqn{eqn:auc}), whereas the partial AUC does not have such a simple additive structure. This is clearly evident in the definition in \Eqn{eqn:pauc}, where the set of negatives corresponding to FPR range $[\alpha, \beta]$ that appear in the inner summation is not fixed and can be different for different scoring functions $f$. 

We would also like to stress that a scoring function with a high AUC value need not be optimal in terms of partial AUC in a particular FPR range. This is illustrated in Figure \ref{fig:roc-f1-f2}, which shows scores assigned by two scoring functions $f_1$ and $f_2$ on a hypothetical sample of 4 positive and 6 negative instances, and the corresponding ROC curves. As can be seen, while $f_1$ gives a higher AUC value, $f_2$ has higher partial AUC in the FPR range $[0.1, 0.2]$. This motivates the need to design algorithms that are tailored to directly optimize partial AUC. 
%Before closing this section, 
%Table \ref{tab:roc-f1-f2} and 

\subsection{Background on Structural SVM Framework for Full AUC}
\label{subsec:struct-auc}
As a first step towards developing a method for optimizing the partial AUC, we shall provide some background on the popular structural SVM framework for maximizing the full AUC \cite{svmperf}. Unless otherwise specified, we shall assume that $X\subseteq \R^d$ for some $d\in\Z_+$ and shall consider linear scoring functions of the form $f(x) = w^\top x$ for some $w \in \R^d$; the methods described will easily extend to non-linear functions~/~non-Euclidean instance spaces using kernels \cite{structsvm_kernel}.

\textbf{Hinge loss based surrogate.} Given a training sample $S$, our goal here is to find a scoring function that yields maximum empirical AUC on  $S$, or equivalently minimizes one minus the empirical AUC, given as follows:
\begin{equation}
\widehat{R}_{\AUC}(w;\, S)
%	\,=\,
%	1 - \widehat{\AUC}_f
	\,=\,
		\frac{1}{mn} \sum_{i = 1}^m \sum_{j = 1}^n 
		\1 \big( w^\top x_i^+ \leq w^\top x_j^- \big)
	.
\label{eqn:emp-auc-risk}
\end{equation}
Owing to its discrete nature, minimizing this objective is a hard problem in general. One instead works with a convex proxy or surrogate objective for the above risk that is easier to optimize. A common approach is to replace the above indicator term with a pair-wise loss such as  the pair-wise hinge loss, which for any scoring function $f$ and instances $x^+$ and $x^-$ is defined as $(1 - (w^\top x^+ - w^\top x^-))_+$, where $(z)_+ = \max\{0, z\}$; this is clearly convex in $w$ and an upper bound on $\1 \big( w^\top x^+ \leq w^\top x^-  \big)$. 
The following is then the well-known pairwise hinge surrogate for the AUC risk:
%We shall then minimize (a regularized version) of the following:% surrogate objective:
\begin{equation}
\widehat{R}^\hinge_{\AUC}(w; \,S)
	\,=\,
		\frac{1}{mn} \sum_{i = 1}^m \sum_{j = 1}^n 
		(1 - (w^\top x_i^+ - w^\top x_j^-))_+.
\label{eqn:emp-auc-risk-hinge}
\end{equation}
This surrogate is convex in $w$, upper bounds the AUC risk, and is minimized by a scoring function that ranks positives instances above the negative instances with a sufficient margin of separation. 
It is also evident that one can minimize this objective over all model vectors $w$ using a standard convex optimization solver. In fact, there  are several specialized methods available to solve (a regularized form of) this optimization problem \cite{ranksvm1,ranksvm2,svmperf}. For example, one popular approach is to solve the corresponding dual optimization problem using a coordinate descent type method \cite{ranksvm2}. 

On the other hand, the partial AUC has a more complex structure as the subset of negatives relevant to the given FPR range can be different for different scoring models; as a result, a surrogate obtained by replacing the indicators with the pairwise hinge loss turns out to be non-convex in general. The approach that we take for  the partial AUC will instead make use of the structural SVM framework developed by Joachims (2005) for designing surrogate minimizing methods for general complex performance measures \cite{svmperf}. For the full AUC, it has been shown that this formulation recovers the corresponding hinge surrogate in \Eqn{eqn:emp-auc-risk-hinge} \cite{lintime}. We give the details for AUC below, and in subsequent sections build on this formulation to construct and optimize convex surrogates for the partial AUC.  %We elaborate on this formulation below, where we shall see that the above hinge risk is equivalent to a structural SVM objective for appropriate choices of output and joint feature spaces. %We elaborate on this approach for full AUC below.

\textbf{Structural SVM formulation.} %We shall see now that hinge-loss based risk presented in \Eqn{eqn:emp-pauc-risk-hinge} can be viewed as a structural SVM objective for appropriate choices of output and joint feature spaces.
%We now cast this problem into a structural SVM framework \cite{structsvm}. 
For any ordering of the training instances,
we shall represent (errors in) the relative ordering of the $m$ positive instances in $S_+$ and $n$ negative instances in $S_-$ via a matrix $\pi = [\pi_{ij}] \in\{0, 1\}^{m \times n}$ as follows:
\[
\pi_{ij} = \left\{ \begin{array}{ll}
	1 & ~~\mbox{if $x^+_i$ is ranked \emph{below} $x^ -_j$} \\[2pt]
	0 & ~~\mbox{if $x^+_i$ is ranked above $x^ -_j$.} \\
\end{array} \right.
\]
Not all $2^{mn}$ matrices in $\{0,1\}^{m\times n}$ represent a valid relative ordering (due to transitivity requirements). We let $\Pi_{m,n}$ denote the set of all matrices in $\{0,1\}^{m\times n} $ that do correspond to valid orderings. Clearly, the correct relative ordering $\pi^*$ has $\pi^*_{ij} = 0 ~\forall i,j$. For any $\pi\in \Pi_{m,n}$, we can define the AUC loss of $\pi$ with respect to $\pi^*$ as %
\begin{eqnarray}
\Delta_{\AUC}(\pi^*,\pi) =  \frac{1}{mn}\sum_{i = 1}^m \sum_{j=1}^n \pi_{i,j}.
\label{eqn:auc-loss}
\end{eqnarray}
It can be verified that for any $\pi$ that is consistent with scoring function $w^\top x$, $\Delta_{\AUC}(\pi^*,\pi)$ evaluates to the AUC risk $\widehat{R}_\AUC(w; \, S)$ in \Eqn{eqn:emp-auc-risk}.

Further, we shall also define a joint feature map between the input training sample and an output ordering matrix $\phi:(X^m\times X^n)\times\Pi_{m,n} \> \R^d$ as 
%
%\vspace{-4pt}
\begin{equation}
\phi(S,\pi) = \frac{1}{mn} \sum_{i=1}^m \sum_{j=1}^n (1-\pi_{ij}) (x^+_i - x^-_j)
	\,.
\vspace{-4pt}
\label{eqn:psi}
\end{equation}
The above expression evaluates to a (normalized) sum of feature vector differences over all pairs of positive-negative instances in $S$ in which the positive instance is ordered by $\pi$ above the negative instance.
%\footnote{Note that while the definition of partial AUC loss in Eq.~(\ref{eqn:pauc-loss}) involves only a subset of the negative instances in $S_-$ corresponding to the FPR interval $[\alpha, \beta]$, the definition of the feature map $\phi(S;\pi)$ includes all negative instances in $S_-$. This `mismatch' between the loss and feature vector is necessary 
%% (in the definition of the pAUC loss in Eq.~(\ref{eqn:pauc-loss}), this did not pose a problem as the loss is the same for all such rankings). The mismatch is thus critical 
%and is handled in the argmax operation in \ref{opt:most-violated}; while one could imagine defining $\phi(S;\pi)$ by summing over only the negative instances that fall in the range $[\alpha,\beta]$ using a ranking consistent with $\pi$, this would not yield a unique feature vector since such a ranking is not unique.}
This choice of $\phi(S,\pi)$ ensures that for any fixed $w\in\R^d$, maximizing $w^\top \phi(S,\pi)$ over $\pi\in \Pi_{m, n}$ yields an ordering matrix consistent with the scoring function $w^\top x$, and thus for which the loss term evaluates to $\widehat{R}_\AUC(w; \, S)$. The problem of optimizing the AUC now reduces to finding a $w\in\R^d$ for which the maximizer over $\pi\in \Pi_{m, n}$ of $w^\top \phi(S,\pi)$ has minimum AUC loss. This is approximated by the following structural SVM based relaxation of the AUC loss: 
\begin{equation}
\widehat{R}^\struct_{\AUC}(w; \, S) \,=\, 
\underset{\pi \,\in\, \Pi_{m,n}}{\max}\big\{\Delta_{\AUC}(\pi^*,\pi) \,-\, (w^\top\phi(S,\pi^*) - w^\top \phi(S,\pi))\big\}.
\label{eqn:auc-struct}
\end{equation}
%The above structural SVM surrogate can be shown to be a upper bound on the empirical AUC risk in \Eqn{eqn:emp-pauc-risk}. More importantly
Clearly, this surrogate is convex in $w$ as it is a maximum of linear functions in $w$. Moreover, this is also an upper bound on the empirical AUC risk $\widehat{R}_{\AUC}(w; \, S)$: let $\bar{\pi}$ be the maximizer of $w^\top \phi(S,\pi)$ over $\Pi_{m, n}$; then from the above definition $\widehat{R}^\struct_{\AUC}(w; \, S) \,\geq\, {\Delta_{\AUC}(\pi^*,\bar{\pi}) - (w^\top\phi(S,\pi^*) - w^\top \phi(S,\bar{\pi}))} \,\geq\,\Delta_{\AUC}(\pi^*,\bar{\pi}) = \widehat{R}_{\AUC}(w; \, S)$. 

Interestingly, this surrogate can be shown to be equivalent to the hinge-loss based surrogate in \Eqn{eqn:emp-auc-risk-hinge}.
\begin{theorem}[Joachims, 2006 \cite{lintime}] %The hinge-loss based surrogate risk for AUC
For any $w \in \R^d$ and training sample $S \in X^m \times X^n$, $\widehat{R}^\struct_{\AUC}(w; \, S) 
\,=\, \widehat{R}^\hinge_{\AUC}(w; \, S).
$
\label{thm:auc-struct}
\end{theorem}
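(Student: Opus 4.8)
The plan is to collapse the maximum defining the structural surrogate into an entrywise-separable linear objective, obtain the direction $\widehat{R}^\struct_\AUC \le \widehat{R}^\hinge_\AUC$ by relaxing the valid-ordering constraint, and then recover the reverse inequality by exhibiting one specific \emph{valid} ordering that already attains the hinge value.

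First I would simplify the bracketed expression in \Eqn{eqn:auc-struct}. Using $\pi^*_{ij}=0$ for all $i,j$ together with the definition of $\phi$ in \Eqn{eqn:psi}, one gets $w^\top\phi(S,\pi^*) - w^\top\phi(S,\pi) = \frac{1}{mn}\sum_{i,j}\pi_{ij}\, w^\top(x_i^+ - x_j^-)$, and combining this with $\Delta_{\AUC}(\pi^*,\pi) = \frac{1}{mn}\sum_{i,j}\pi_{ij}$ shows that the quantity inside the maximum equals $g(\pi) = \frac{1}{mn}\sum_{i,j}\pi_{ij}\,c_{ij}$, where $c_{ij} = 1 - (w^\top x_i^+ - w^\top x_j^-)$. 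Thus $\widehat{R}^\struct_\AUC(w;S) = \max_{\pi\in\Pi_{m,n}} g(\pi)$, a function that is linear in the entries of $\pi$ and would be separable across $(i,j)$ if the entries could be chosen freely.

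For the upper bound I would observe that $\Pi_{m,n}\subseteq\{0,1\}^{m\times n}$, so the maximum of $g$ over valid orderings is at most its unconstrained maximum over all $0/1$ matrices. Because $g$ is separable, this unconstrained maximum is $\frac{1}{mn}\sum_{i,j}(c_{ij})_+ = \widehat{R}^\hinge_\AUC(w;S)$, attained by setting $\pi_{ij}=1$ exactly when $c_{ij}>0$. Hence $\widehat{R}^\struct_\AUC(w;S) \le \widehat{R}^\hinge_\AUC(w;S)$ immediately.

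The crux, and the step I expect to be the main obstacle, is the reverse inequality: I must show that the entrywise-optimal matrix is actually realizable by a valid ordering, so that dropping the constraint did not genuinely enlarge the maximum. The key device is a margin shift: assign each positive $x_i^+$ the real value $a_i = w^\top x_i^+ - 1$ and each negative $x_j^-$ the value $b_j = w^\top x_j^-$, and rank all $m+n$ instances by these values. The induced ordering matrix $\bar\pi$ with $\bar\pi_{ij} = \1(a_i < b_j)$ is valid by construction, and $a_i < b_j \Leftrightarrow c_{ij} > 0$, so $\bar\pi$ selects precisely the positive $c_{ij}$ terms; the boundary cases $c_{ij}=0$ (equivalently $a_i=b_j$) contribute $0$ and can be broken by ranking the positive above the negative, keeping $\bar\pi$ consistent with a genuine total order. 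This yields $g(\bar\pi) = \frac{1}{mn}\sum_{i,j}(c_{ij})_+ = \widehat{R}^\hinge_\AUC(w;S)$, hence $\widehat{R}^\struct_\AUC \ge \widehat{R}^\hinge_\AUC$, and combined with the upper bound gives equality. I would flag that the only delicate point is the tie-handling, which the ``no ties'' assumption and the shift trick together dispose of cleanly, since the shift converts the margin-based indicator into a plain comparison of real-valued scores that is automatically transitive and therefore corresponds to an element of $\Pi_{m,n}$.
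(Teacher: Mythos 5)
Your proposal is correct and follows essentially the same route as the paper's proof in Appendix A.1: simplify the structural objective to the entrywise-separable form $\frac{1}{mn}\sum_{i,j}\pi_{ij}\bigl(1 - w^\top(x_i^+ - x_j^-)\bigr)$, relax to all of $\{0,1\}^{m\times n}$ to get the hinge value, and then observe the optimizer is a valid ordering realized by a unit score shift (the paper scores negatives by $w^\top x + 1$, you score positives by $w^\top x - 1$, which is the same ordering). The only cosmetic difference is your tie-breaking at $c_{ij}=0$, which, as you note, contributes nothing either way.
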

\noindent For completeness, we provide a proof for the theorem in Appendix \ref{app:auc-struct-proof} (see Supplementary Material).

Thus the problem of minimizing a pairwise hinge surrogate for the AUC can be cast as one of optimizing the above structural SVM surrogate (along with a suitable regularizer), which results in the following SVM style convex (quadratic) program:
%\vspace{-4pt}
\begin{equation}
	\min_{w,\, \xi \geq 0} \, \frac{1}{2}||w||^2 + C\xi
\nonumber
\end{equation}
\vspace{-0.75cm}
\[
	\text{s.t.}~~\forall \pi\in\Pi_{m,n}:~~
	w^\top \big( \phi(S,\pi^*) - \phi(S,\pi) \big) ~ \geq ~
	\Delta_{\AUC}(\pi^*,\pi) - \xi 
	\,, 
\]
%
%\noindent which in turn can be written as 
%%
%\begin{equation}
%	\min_{w, \xi \geq 0} \, \frac{1}{2}||w||^2 + C\xi
%\tag{OP1}
%\label{opt:structpauc}
%\end{equation}
%%
%\vspace{-24pt}
%\begin{multline*}
%	\text{s.t.}~~\forall \pi\in\Pi_{m,n}:
%	\frac{1}{mn(\beta-\alpha)} \sum_{i=1}^m \sum_{j=1}^n 
%			\pi_{ij} \, w^\top (x_i^+ - x_j^-) 
%	\, \geq \,
%%	\rule{12pt}{0pt}
%%\\[-6pt]
%	\Delta_{\AUC}(\pi^*,\pi) - \xi 
%	\,, \hspace{0.2cm}
%\end{multline*}
%
where $C>0$ is a regularization parameter. 

\textbf{Cutting plane method.} While the above optimization problem contains an exponential number of constraints (one for each $\pi \in \Pi_{mn}$), it can be solved efficiently using the cutting plane method \cite{structsvm}. Each iteration of this solver requires a combinatorial optimization over matrices in $\Pi_{mn}$. By exploiting the simple structure of the AUC loss, this combinatorial problem can be decomposed into simpler ones, where each entry of the matrix can be optimized independently \cite{svmperf}. The cutting plane method is guaranteed to yield an $\epsilon$-accurate solution in $O(1/\epsilon)$ iterations \cite{lintime}; in the case of the AUC, each iteration requires $O((m+n)\log(m+n))$ computational time. We elaborate on this solver in Section \ref{sec:svm-beta} when we develop a structural SVM approach for the partial AUC.

\section{Candidate Surrogates for Partial AUC}
\label{sec:surrogates}
As noted earlier, our goal in this paper is to design efficient methods for optimizing the partial AUC in a specified false positive range. In particular, given a training sample $S=(S_+,S_-)$, we wish to find a scoring function $f(x) = w^\top x$ that maximizes partial AUC in $[\alpha,\beta]$, or equivalently minimizes the following risk:
\begin{eqnarray}
%\lefteqn{
\widehat{R}_{\pAUC(\alpha, \beta)}(w; \, S) \,=\,
		\frac{1}{m j_\beta} \sum_{i = 1}^m\sum_{j = j_\alpha+1}^{j_\beta} 
		\1 \big( w^\top x_i^+ \leq w^\top x_{(j)_w}^- \big)
		.
\label{eqn:emp-pauc-risk-prev}
\end{eqnarray}
As before, optimizing this quantity directly is computationally hard in general. Hence, we work with a continuous surrogate objective that acts as a proxy for the above risk. As first-cut attempts at devising surrogates for the partial AUC, we replicate the two approaches used above for constructing surrogates for the full AUC, namely those based on the hinge loss and  the structural SVM framework respectively. As we shall see, the surrogates obtained in both cases have certain drawbacks, requiring us to use a somewhat different approach.
%In this section, we highlight our first two attempts at constructing surrogates for the partial AUC. 

%Recall that for the AUC, we described two ways of constructing surrogates, namely the hinge approach of replacing the indicator functions with the hinge loss, and the structural SVM approach, both of which turned out to yield the same surrogate. As a first-cut attempt at constructing surrogates for the partial AUC, we now replicate these approaches; but, as we shall see, with the partial AUC, the surrogates by the two approached  are no longer equivalent, and moreover suffer
%%~\\[-2cm]

\subsection{Hinge Loss Based Surrogate}
\label{sec:surrogate-hinge}
%\vspace{-10pt}
%In the previous section, we described the pairwise hinge loss based surrogate for the AUC, and also saw that the corresponding structural SVM formulation recovers this surrogate. We now turn to constructing hinge loss based surrogates for the partial AUC. 
We begin by considering a hinge style surrogate for the partial AUC obtained by replacing the indicator functions in the partial AUC risk with the pairwise hinge loss. 
 %As before, one can consider constructing a surrogate by replacing the above indicator terms with the pairwise hinge loss:
\begin{equation}
\widehat{R}^\hinge_{\pAUC(\alpha,\beta)}(w;\, S)
%	\,=\,
%	1 - \widehat{\AUC}_f
	\,=\,
		\frac{1}{m(j_\beta-j_\alpha)} \sum_{i = 1}^m \sum_{j = j_\alpha+1}^{j_\beta} 
		\big( 1 \,-\, (w^\top x_i^+ \,-\, w^\top x_{(j)_w}^-)\big)_+
	.
\label{eqn:emp-pauc-hinge-general}
\end{equation}
In the case of the full AUC (i.e. when the FPR interval is $[0,1]$), the hinge surrogate is convex in $w$ and can hence be optimized efficiently. However, the corresponding surrogate given above for the partial AUC turns out to be non-convex in general. This is because the surrogate is defined on only a  subset of negative instances relevant to the given FPR range, and this subset can be different for different scoring functions. %As a result, the hinge surrogate for the partial AUC is  non-convex in general.
\begin{theorem}
\label{thm:hinge-nonconvex}
Let $0 < \alpha < \beta$ with $j_\alpha > 0$. Then there exists a training sample $S \in X^m\times X^n$ for which the surrogate $\widehat{R}^\hinge_{\pAUC(\alpha,\beta)}(w;\, S)$ is non-convex in $w$.
\end{theorem}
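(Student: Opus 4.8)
The plan is to prove the statement by exhibiting an explicit sample $S$ on which the map $w \mapsto \widehat{R}^\hinge_{\pAUC(\alpha,\beta)}(w;\,S)$ violates the convexity inequality at three collinear weight vectors. The starting observation is that any non-convexity must come entirely from the $w$-dependence of the set of negatives occupying positions $j_\alpha+1,\dots,j_\beta$: for a \emph{fixed} set of negatives the corresponding sum of pairwise hinge terms is convex in $w$, so I need a configuration in which the identity of the negative at the boundary position $j_\alpha+1$ switches as $w$ varies. Concretely, the negative at position $j_\alpha+1$ is the \emph{lower}-scoring member of whichever pair straddles the boundary, and the score of the lower-scoring member of a pair is a concave (minimum of two linear) function of $w$; composing the hinge $(1-\cdot)_+$ with a concave argument is exactly the operation that can destroy convexity.

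To isolate this mechanism I would work in $X = \R^2$ with a single positive ($m=1$) and split the $n$ negatives into four groups (choosing $n$ large enough that $j_\alpha=\lfloor n\alpha\rfloor\ge 1$, which is possible since $\alpha>0$; one checks $j_\alpha<j_\beta$ automatically from $\alpha<\beta$). Place $j_\alpha-1$ ``top'' negatives with very large scores so that they always occupy positions $1,\dots,j_\alpha-1$ and are therefore always excluded from the window; a ``swap pair'' $A=(1,1)$ and $B=(1,-1)$ whose members occupy positions $j_\alpha$ and $j_\alpha+1$; then $j_\beta-j_\alpha-1$ ``filler'' negatives with very negative scores that always sit in positions $j_\alpha+2,\dots,j_\beta$ (hence inside the window) but whose hinge contribution is identically zero; and finally $n-j_\beta$ ``bottom'' negatives with even smaller scores that are always excluded. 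With the positive placed at $p=(1,0)$, a direct computation shows that along the line $w=(1,w_2)$ the only nonzero term is the one for the lower-scoring member of the swap pair, whose score is $\min(w^\top A,w^\top B)=1-|w_2|$, so that the surrogate reduces to $\tfrac{1}{j_\beta-j_\alpha}\,(1-|w_2|)_+$ --- the classic non-convex ``tent''. Evaluating at $w_2\in\{-0.8,\,-0.2,\,0.4\}$ (with $-0.2$ the midpoint of the other two) then gives values $\tfrac{0.2}{j_\beta-j_\alpha}$, $\tfrac{0.8}{j_\beta-j_\alpha}$, $\tfrac{0.6}{j_\beta-j_\alpha}$, and the inequality $\tfrac{0.8}{j_\beta-j_\alpha}>\tfrac12\big(\tfrac{0.2}{j_\beta-j_\alpha}+\tfrac{0.6}{j_\beta-j_\alpha}\big)$ witnesses the failure of convexity.

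The main obstacle --- and essentially the only place requiring care --- is verifying that the intended ranking of negatives actually holds at each of the three test weights: that the top, filler, and bottom groups really stay inside their prescribed position bands and that the swap pair really occupies positions $j_\alpha,j_\alpha+1$, so that the window is exactly ``lower of $\{A,B\}$ together with the fillers'', as claimed. This is where the explicit numerical gaps matter. Since the filler and dummy points all lie on the first coordinate axis, their scores are independent of $w_2$, while $A$ and $B$ score within $(0,2)$ over the relevant $w_2$-range; choosing the top scores larger than $2$, the filler scores in $(-\infty,0)$, and the bottom scores smaller still, all distinct, makes the ranking unambiguous and tie-free at each test point, and makes each filler hinge term vanish. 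I would also note that the single boundary tie $w^\top A=w^\top B$ occurs only at $w_2=0$, which is deliberately avoided by the asymmetric choice of test points, so that no tie-breaking convention is ever invoked in the argument.
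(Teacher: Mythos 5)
Your proposal is correct and takes essentially the same route as the paper: both construct an explicit sample in $\R^2$ with $m=1$ positive and groups of negatives arranged so that the score of the negative occupying the boundary position $j_\alpha+1$ is a minimum of two linear functions of $w$, and both then verify failure of the midpoint convexity inequality at three collinear weight vectors. Your version restricts to the line $w=(1,w_2)$ to get the clean tent function $(1-|w_2|)_+/(j_\beta-j_\alpha)$ and avoids ties entirely, but the mechanism and the witness construction are the same as in the paper's Appendix proof.
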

\begin{proof}[Proof (Sketch)] Consider the FPR range $\displaystyle \bigg[\frac{n-1}{n}, 1\bigg]$ where $j_\alpha = n-1 > 0$ and $j_\beta = n$. Here the hinge surrogate reduces to a `min' of convex functions in $w$ and it is easy to see that there are samples $S$ where the surrogate is non-convex in $w$. Note the same also holds for a general $k^{\text{th}}$ order statistic of a set of convex functions in $w$ when $k > 1$. In similar lines, one can construct a sample $S$ where the hinge surrogate is non-convex for more general FPR intervals. The details are provided in  Appendix \ref{app:hinge-non-convex} (see Supplementary Material).
\end{proof}
Fortunately, for the case where $\alpha =0$ and $j_\alpha = \lfloor n\alpha \rfloor = 0$, i.e. for FPR intervals of the form $[0,\beta]$, the hinge loss based surrogate turns out to be convex. Here the surrogate is given by
\begin{equation}
\widehat{R}^\hinge_{\pAUC(0,\beta)}(w;\, S)
	\,=\,
		\frac{1}{mj_\beta} \sum_{i = 1}^m \sum_{j = 1}^{j_\beta} 
		\big( 1 \,-\, (w^\top x_i^+ \,-\, w^\top x_{(j)_w}^-)\big)_+
	,
\label{eqn:emp-pauc-hinge-beta}
\end{equation}
and we have:
\begin{theorem}
\label{thm:hinge-convex}
Let $\beta > 0$. For any training sample $S \in X^m\times X^n$, the surrogate $\widehat{R}^\hinge_{\pAUC(0,\beta)}(w;\, S)$ is convex in $w$.
\end{theorem}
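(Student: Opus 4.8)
The plan is to reduce this surrogate to a sum, over positive instances, of the ``sum of the $k$ largest values'' of a family of convex functions, and then invoke the standard fact that this operation preserves convexity. The one genuinely delicate point is that the subset of negatives appearing in the inner sum, namely the top $j_\beta$ negatives ordered by score $w^\top x_j^-$, itself depends on $w$; the whole argument hinges on relating this score-based selection to a hinge-loss-based selection.

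First I would fix a positive index $i$ and, for each negative $j$, write $h_{ij}(w) = \big(1 - (w^\top x_i^+ - w^\top x_j^-)\big)_+$. Each $h_{ij}$ is a pointwise maximum of $0$ and an affine function of $w$, hence convex in $w$. The key observation is that for fixed $i$ the quantity $1 - (w^\top x_i^+ - w^\top x_j^-) = 1 - w^\top x_i^+ + w^\top x_j^-$ is increasing in the negative's score $w^\top x_j^-$, and $(\cdot)_+$ is non-decreasing; therefore $h_{ij}(w)$ is a non-decreasing function of $w^\top x_j^-$. Consequently, the $j_\beta$ negatives occupying the top positions in the descending-by-score order, i.e.\ those indexed by $(1)_w,\ldots,(j_\beta)_w$, are exactly the negatives attaining the $j_\beta$ largest hinge-loss values $h_{ij}(w)$ (ties in score force ties in hinge loss, so any consistent tie-breaking yields the same total). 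Hence the inner sum $\sum_{j=1}^{j_\beta} h_{i(j)_w}(w)$ equals the sum of the $j_\beta$ largest elements of the set $\{h_{i1}(w),\ldots,h_{in}(w)\}$.

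The final step uses the identity
\[
\sum_{\text{top } j_\beta} h_{ij}(w) \;=\; \max_{\substack{A \subseteq \{1,\ldots,n\}\\ |A| = j_\beta}} \; \sum_{j \in A} h_{ij}(w),
\]
which expresses the sum of the $j_\beta$ largest convex functions as a pointwise maximum over the finitely many size-$j_\beta$ subsets $A$ of a sum of convex functions. Each $\sum_{j\in A} h_{ij}$ is convex as a finite sum of convex functions, and a pointwise maximum of finitely many convex functions is convex, so the inner sum is convex in $w$ for each fixed $i$. Summing over $i = 1,\ldots,m$ and scaling by the positive constant $1/(m j_\beta)$ preserves convexity, which gives the claim.

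I would flag the reduction from score-based to hinge-loss-based selection as the crux, and the only place where the interval $[0,\beta]$ is used: because we keep the \emph{top} $j_\beta$ negatives, monotonicity of $h_{ij}$ in the score lets us identify them with the $j_\beta$ \emph{largest} hinge losses, and the top-$k$ operation preserves convexity. For a general band $[\alpha,\beta]$ one instead keeps an intermediate block (positions $j_\alpha+1$ through $j_\beta$), which corresponds to the sum of an intermediate range of order statistics rather than the top $k$; that operation is not convexity-preserving in general, consistent with \Thm{thm:hinge-nonconvex}. Once the monotonicity-driven reduction is in place, the remainder is the standard ``sum of the top-$k$ convex functions is convex'' argument and requires no further computation.
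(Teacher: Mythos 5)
Your proof is correct, and it rests on the same key observation as the paper's: for fixed $w$ and fixed positive $x_i^+$, the pairwise hinge loss $h_{ij}(w) = \big(1 - w^\top x_i^+ + w^\top x_j^-\big)_+$ is non-decreasing in the negative's score $w^\top x_j^-$, so the top $j_\beta$ negatives by score simultaneously maximize $\sum_{j\in A} h_{ij}(w)$ over all size-$j_\beta$ subsets $A$, for every $i$ at once. Where you diverge is in how this is deployed. The paper groups terms by negative instance, defines $r(w;S_+,x^-)$ as the average hinge loss of a fixed negative against all positives, and verifies the convexity inequality $\widehat{R}^\hinge_{\pAUC(0,\beta)}(\tilde w;S) \le \lambda \widehat{R}^\hinge_{\pAUC(0,\beta)}(w_1;S) + (1-\lambda)\widehat{R}^\hinge_{\pAUC(0,\beta)}(w_2;S)$ directly, via a two-step chain: Jensen applied to each $r$ with the negatives still ranked by $\tilde w$, followed by an upper bound obtained by re-sorting the negatives according to $w_1$ and $w_2$ separately. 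You instead rewrite the surrogate once and for all as a finite pointwise maximum of convex functions via the sum-of-top-$k$ identity, and invoke the standard fact that such a maximum is convex. The two arguments are logically equivalent, but your representation buys something: it is essentially the identity the paper establishes later in \Eqn{eqn:pauc-hinge} and \Prop{prop:pauc-struct-max-special} on the way to proving $\widehat{R}^\tight_{\pAUC(0,\beta)} = \widehat{R}^\hinge_{\pAUC(0,\beta)}$ (\Thm{thm:pauc-struct}), so your proof anticipates the structural-SVM reformulation rather than treating convexity as a separate verification; your closing diagnosis of why the argument fails for $[\alpha,\beta]$ (an intermediate block of order statistics is not convexity-preserving) is likewise consistent with \Thm{thm:hinge-nonconvex}. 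One cosmetic caveat: the top-$j_\beta$-by-score negatives need not be the \emph{unique} maximizers of $\sum_{j\in A}h_{ij}(w)$, since the hinge is only non-strictly monotone in the score (several negatives may all incur zero loss); but non-strict monotonicity is all the identity requires, so this does not affect the argument.
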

\begin{proof} Fix $S \in X^m\times X^n$. Let $w_1, w_2 \in \R^d$. Let $\lambda \in (0,1)$, and $\widetilde{w} = \lambda w_1 + (1-\lambda) w_2$. We wish to then show that
\[
\widehat{R}^\hinge_{\pAUC(0,\beta)}(\widetilde{w};\, S)
\,\leq\, \lambda \widehat{R}^\hinge_{\pAUC(0,\beta)}(w_1;\, S) \,+\, (1-\lambda) \widehat{R}^\hinge_{\pAUC(0,\beta)}(w_2;\, S).
\]
Define for any negative $x^-$, $r(w;\, S_+, x^-) = \frac{1}{m}\sum_{i=1}^m \1 \big( w^\top x_i^+ \leq w^\top x^- \big)$. Notice that $r(w; \,S_+, x^-)$ is convex in $w$, and moreover is monotonically increasing in the score $w^\top x^-$ assigned to $x^-$. Expanding the left hand side from the desired inequality,
\begin{eqnarray*}
{\widehat{R}^\hinge_{\pAUC(0,\beta)}(\widetilde{w};\, S)}
 &=&
\frac{1}{j_\beta}\sum_{j = 1}^{j_\beta} 
		 r(\widetilde{w}; S^+, x_{(j)_{\widetilde{w}}}^-)
		\\
&\leq&
\lambda 
\frac{1}{j_\beta}\sum_{j = 1}^{j_\beta} 
		 r(w_1; S^+, x_{(j)_{\widetilde{w}}}^-)
		\,\,+\,\,
(1-\lambda)
\frac{1}{j_\beta}\sum_{j = 1}^{j_\beta} 
		 r(w_2; S^+, x_{(j)_{\widetilde{w}}}^-)
		 \\
&\leq&
\lambda 
\frac{1}{j_\beta}\sum_{j = 1}^{j_\beta} 
		 r(w_1; S^+, x_{(j)_{w_1}}^-)
		\,\,+\,\,
(1-\lambda)
\frac{1}{j_\beta}\sum_{j = 1}^{j_\beta} 
		 r(w_2; S^+, x_{(j)_{w_2}}^-)
\\
&=&
 \lambda \widehat{R}^\hinge_{\pAUC(0,\beta)}(w_1;\, S) \,+\, (1-\lambda) \widehat{R}^\hinge_{\pAUC(0,\beta)}(w_2;\, S).
\end{eqnarray*}
The second step follows by convexity of $r$ (notice that the negative instances here are still ranked by $\widetilde{w}$). The third step uses the fact that $r(w_1; S^+, x^-)$ is monotonic in $w_1^\top x^-$ and $r(w_2; S^+, x^-)$ is monotonic in $w_2^\top x^-$; hence the top ranked negative $j_\beta$ instances according to $w_1$ will yield higher summation value than those ranked according to $\widetilde{w}$, and similarly for $w_2$. This completes the proof.
\end{proof}

Despite the hinge surrogate for FPR intervals of the form $[0,\beta]$ being convex, it is not immediate how the resulting optimization problem can be solved efficiently. For instance, a common approach for optimizing the full AUC surrogate is to derive and solve the corresponding dual optimization problem. Since the surrogate for the partial AUC is defined on a subset of negative instances that can be different for different scoring functions, even deriving the dual problem for the hinge partial AUC surrogate turns out to be non-trivial.

\subsection{Na\"{i}ve Structural SVM Surrogate}
As an alternative to the hinge surrogate, we next consider constructing a convex surrogate for the partial AUC by a direct application of the structural SVM formulation described in the previous section for the AUC. 
 %In fact, in this case, the surrogate obtained is a looser approximation to the partial AUC compared to the hinge surrogate.%, as explained below. With the current formulation, the term $\widetilde{R}$ is only computed on a subset of negative instances of size $j_\beta$, and hence the resulting surrogate does not include these redundant terms involving the bottom $n-j_\beta$ ranked negatives.
%
Specifically, we consider the surrogate obtained by replacing the loss term in the structural SVM surrogate for the AUC in \Eqn{eqn:auc-struct} with an appropriate loss for the partial AUC in $[\alpha, \beta]$:
\begin{eqnarray*}
\Delta_{\pAUC}(\pi^*,\pi) =  \frac{1}{m(j_\beta-j_\alpha)}\sum_{i = 1}^m \sum_{j=j_\alpha+1}^{j_\beta} \pi_{i,(j)_\pi},
\end{eqnarray*}
where $(j)_\pi$ denotes the index of the $j$-th ranked negative instance by any fixed ordering of instances consistent with $\pi$; the resulting surrogate is given by:
\begin{equation}
\widehat{R}^\text{struct}_{\pAUC(\alpha,\beta)}(w; S)\,=\,\underset{\pi \,\in\, \Pi_{m,n}}{\max}\big\{\Delta_{\pAUC}(\pi^*,\pi) \,-\, (w^\top\phi(S,\pi^*) - w^\top \phi(S,\pi))\big\}.
\label{eqn:Naive}
\end{equation}
As with the AUC, this surrogate serves as a convex upper bound for the partial AUC risk in \Eqn{eqn:emp-pauc-risk-prev}. At first glance, this surrogate does appear as a good proxy for the partial AUC risk. However, on closer look, one can show that this surrogate does have drawbacks, as explained below.

Recall that with the AUC, the structural SVM surrogate is equivalent to the corresponding hinge surrogate (see \Thm{thm:auc-struct}). However, even for the special case of partial AUC in FPR intervals of the form $[0,\beta]$ (where the hinge surrogate is convex), the above structural SVM surrogate turns out to be looser convex upper bound on the partial AUC risk than the hinge surrogate in \Eqn{eqn:emp-pauc-hinge-general}. In particular, one can show that in its simplified form, the above structural SVM surrogate for the partial AUC contains redundant terms that penalize misrankings of the scoring function with respect to negative instances outside the relevant FPR range, and in particular, in positions $j_\beta + 1, \ldots, n$ of the ranked list. These additional terms appear because the joint feature map $\phi$ in the surrogate is defined on all negative instances and not just the ones relevant to the given FPR range (see \Eqn{eqn:psi}). Clearly, these terms disrupt the emphasis of the surrogate on the specified FPR interval. The details can be found in the earlier conference versions of this paper \cite{NarasimhanAg13a,NarasimhanAg13b} and are left out here to keep the exposition simple. 

Thus a na\"{i}ve application of the structural SVM formulation yields a loose surrogate for the partial AUC. Of course, one could look at tightening the surrogate by restricting the joint feature map to only a subset of negative instances; however, it is not immediate how this can be done, as the subset of negatives relevant to the given FPR interval can be different for different scoring models $w$, while the definition of the joint feature map in the structural SVM framework needs to be independent of $w$. 

The approach that we take constructs a tighter surrogate for the partial AUC by making use of the structural SVM framework in a manner that suitably exploits the structure of the partial AUC performance measure. In particular, we first rewrite the partial AUC risk as a maximum of  a certain term over subsets of negatives, and compute a convex approximation to the inner term using the structural SVM setup; in the rewritten formulation, the joint feature maps need to be defined on only a subset of the negative instances. The resulting surrogate is convex and is equivalent to the corresponding hinge surrogate for $[0,\beta]$ intervals in \Eqn{eqn:emp-pauc-hinge-beta}. For general FPR intervals $[\alpha,\beta]$, the proposed surrogate can be seen as a tighter convex relaxation to the partial AUC risk compared to the na\"{i}ve structural SVM surrogate. %than the corresponding non-convex hinge surrogate in \Eqn{eqn:emp-pauc-hinge-general}. 

\begin{figure}
%\vspace{-7pt}
%\begin{minipage}{0.45\textwidth}
\begin{table}[H]
\centering
\begin{tabular}{|cc|}
\hline
%\begin{eqnarray*}
%\text{(a)}
(a)
&
%\boxed{
%\hspace{2.9cm}
$
\widehat{R}_{\AUC}(w; S)
\,\leq\,
\widehat{R}^\hinge_{\AUC}(w; S)
\,=\,
\widehat{R}^\struct_{\AUC}(w; S)
%\hspace{2.9cm}
%}
$
\\[8pt]
%\]
%\[
%\text{(b)}&
(b)&
%\boxed{
%~
$
\widehat{R}_{\pAUC(0,\beta)}(w; S)
\,\leq\,
\widehat{R}^\hinge_{\pAUC(0,\beta)}(w; S)
\,=\,
{\color{blue}
\widehat{R}^\tight_{\pAUC(0,\beta)}(w; S)
}
\,\leq\,
\widehat{R}^\text{struct}_{\pAUC(0,\beta)}(w; S)
$
%~
%}
\\[8pt]
%\]
%\[
%\text{(c)}&
(c)&
%\boxed{
$
\widehat{R}_{\pAUC(\alpha,\beta)}(w; S)
\,\leq\,
{\color{red}\widehat{R}^\hinge_{\pAUC(\alpha,\beta)}(w; S)}
\,\leq\,
{\color{blue}
\widehat{R}^\tight_{\pAUC(\alpha,\beta)}(w; S)
}
\,\leq\,
\widehat{R}^\text{struct}_{\pAUC(\alpha,\beta)}(w; S)
$
%}
%\]
%\end{eqnarray*}
\\\hline
\end{tabular}
\end{table}
%\end{minipage}
%\vspace{-5pt}
\caption{Relationship between surrogates for (a) AUC and for partial AUC in (b) $[0,\beta]$ and (c) $[\alpha,\beta]$. Here $w \in \R^d$ is a fixed model vector and $S$ is the given training sample. Those colored in blue are the tight convex structural SVM surrogates that we optimize using the cutting plane method (see Sections \ref{sec:svm-beta} and \ref{sec:svm-general}); the one in red is the non-convex hinge surrogate that we optimize using a DC programming method (see Section \ref{sec:svm-dc}).}
\label{fig:surrogates}
\vspace{5pt}
\end{figure}

A summary of the surrogates discussed here is given in Figure \ref{fig:surrogates}. Among the surrogates considered, the hinge surrogates serve as the tightest upper bound on the partial AUC risk, but are not necessarily convex; on the other hand, the na\"{i}ve structural SVM surrogates are convex, but yield a looser upper bound. The structural SVM based surrogates proposed in this paper (highlighted in blue) are convex and also serve as tighter upper bounds compared to the na\"{i}ve surrogates; moreover in the case of $[0,\beta]$ intervals, the proposed surrogate is equivalent to the corresponding hinge surrogate.

We also provide a cutting plane method to optimize the prescribed surrogates. Unlike the full AUC, here the combinatorial optimization required in each iteration of the solver does not decompose easily into simpler problems. One of our main contributions is a polynomial time algorithm for solving this combinatorial optimization for the partial AUC. The details are provided for the $[0,\beta]$ case in Section \ref{sec:svm-beta} and for the $[\alpha, \beta]$ case in Section \ref{sec:svm-general}. In addition to methods that optimize convex structural SVM surrogates on the partial AUC, we also develop a method for directly optimizing the non-convex hinge surrogate for general FPR intervals using difference-of-convex programming; this approach is explained in Section \ref{sec:svm-dc}.
%%%%%%%%%%%%%%%%%%%%%% Sec 3: [0,\beta] %%%%%%%%%%%%%%%%%%%%%%
\section{Structural SVM Approach for Partial AUC in $[0, \beta]$}
\label{sec:svm-beta}
%Building on the approach described in the previous section for full AUC, 
%Following the structural SVM formulation for AUC described in the previous section,
 We start by developing a method for optimizing the partial AUC in  FPR intervals $[0, \beta]$:
\begin{eqnarray}
%\lefteqn{
\widehat{R}_{\pAUC(0, \beta)}(w; \, S) \,=\,
		\frac{1}{m j_\beta} \sum_{i = 1}^m\sum_{j = 1}^{j_\beta} 
		\1 \big( w^\top x_i^+ \leq w^\top x_{(j)_w}^- \big)
		.
\label{eqn:emp-pauc-risk-beta}
\end{eqnarray} 
  We saw in the previous section that the hinge loss based surrogate is convex in this case, but it was not immediate how this objective can be optimized efficiently. We also saw that a na\"{i}ve application of the structural SVM framework results in a surrogate that is a looser convex approximation to the partial AUC risk than the hinge surrogate. 
  
  Our approach makes use of the structural SVM formulation in a manner that allows us to construct a tighter convex surrogate for the partial AUC, which in this case is equivalent to the corresponding hinge surrogate. The key idea here  is that the partial AUC risk in $[0,\beta]$ can be written as maximum over subsets of negatives of the full AUC risk evaluated on all positives and the given subset of negatives. The structural SVM formulation described earlier in Section \ref{sec:prelims} for the full AUC can then be leveraged to design a convex surrogate, and to optimize it efficiently  using a cutting plane solver. % by breaking down the combinatorial optimization required in each iteration of the solver into smaller tractable problems.
 %In the following sections, we will consider the general case of FPR range $[\alpha, \beta]$. 
%We will first introduce a hinge loss based surrogate for the partial AUC risk in $[0, \beta]$ and formulate an equivalent structural SVM objective. We will then present an efficient cutting plane method for solving the resulting optimization problem.% with a time complexity at least as good that for the full AUC.

\subsection{Tight Structural SVM Surrogate for pAUC in $[0, \beta]$} 
%We first observe that the partial AUC risk in $[0,\beta]$ can be expressed as a maximum over subsets of negative instances of the full AUC on the given subset of instances. In particular, 
For any subset of negatives $Z \subseteq S_-$, let $\widehat{R}_{\AUC}(w; \, S_+, \, Z)$ denote the full AUC risk of scoring function $w^\top x$ evaluated on a sample containing all the positives $S_+$ and the subset of negatives $Z$. Then the partial AUC risk of $w^\top x$ is simply the value of this quantity on the top ranked $j_\beta$ negatives; this can be shown to be equivalent to the maximum value of $\widehat{R}_{\AUC}(w; \, S_+, \, Z)$ over all subsets of negatives $Z$ of size $j_\beta$.
\begin{theorem} 
\label{thm:pauc-special-max}
For any $w \in \R^d$ and training sample $S = (S_+, S_-) \in X^m \times X^n$,
\begin{eqnarray}
\vspace{-10pt}
\widehat{R}_{\pAUC(0, \beta)}(w; \, S)
	&=&
	\max_{Z \subseteq S_-, \, |Z| = j_\beta}\,
		\frac{1}{mj_\beta} \sum_{i = 1}^m \sum_{x^- \in Z} 
		\1 ( w^\top x_i^+ \leq w^\top x^- )\nonumber\\
	&=&
	\max_{Z \subseteq S_-, \, |Z| = j_\beta}\,
			\widehat{R}_{\AUC}(w; \, S_+, \, Z)
	.\label{eqn:pauc-inner-auc}
\end{eqnarray}
\end{theorem}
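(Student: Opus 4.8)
The plan is to reduce the whole statement to a one-line greedy/exchange fact about selecting the highest-scoring negatives. First I would dispense with the second equality, which is essentially definitional: since $|Z| = j_\beta$, the quantity $\widehat{R}_{\AUC}(w;\,S_+,\,Z)$ is by definition $\frac{1}{mj_\beta}\sum_{i=1}^m\sum_{x^-\in Z}\1(w^\top x_i^+ \le w^\top x^-)$, so the inner expression in the first line and the full-AUC-risk expression in the second line coincide term by term. Hence the entire content lies in the first equality, which relates the sum over the top $j_\beta$ negatives ranked by $w$ to the maximum of that same sum taken over all size-$j_\beta$ subsets $Z \subseteq S_-$.

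For the first equality I would introduce, for each negative instance $x^- \in S_-$, its contribution $c(x^-) = \sum_{i=1}^m \1(w^\top x_i^+ \le w^\top x^-)$, the number of positives scored no higher than $x^-$. With this notation the left-hand side is $\frac{1}{mj_\beta}\sum_{j=1}^{j_\beta} c(x^-_{(j)_w})$, the sum of $c$ over the $j_\beta$ highest-scoring negatives, whereas the maximization on the right is $\frac{1}{mj_\beta}\max_{|Z|=j_\beta}\sum_{x^-\in Z} c(x^-)$. So it suffices to show that the top $j_\beta$ negatives by score realize the maximum of $\sum_{x^-\in Z} c(x^-)$ over all subsets $Z$ of size $j_\beta$. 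The key structural fact I would establish is that $c(\cdot)$ is monotonically non-decreasing in the score: if $w^\top x^-_a \ge w^\top x^-_b$, then every positive satisfying $w^\top x_i^+ \le w^\top x^-_b$ also satisfies $w^\top x_i^+ \le w^\top x^-_a$, so $c(x^-_a) \ge c(x^-_b)$. Since the maximum of a sum over subsets of a fixed cardinality is attained by choosing the elements of largest weight, and since ordering the negatives by score (descending) simultaneously orders them by $c$ (non-increasing), the top $j_\beta$ negatives by score form a maximizing subset, giving the claimed identity.

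The one point I would be careful about — and the only real subtlety — is ties in the scores $w^\top x^-$. Because $c$ enters only through the monotone relation above, negatives with equal scores receive equal contributions, so any tie-breaking rule used to define the ranking $x^-_{(j)_w}$ yields the same value of $\sum_{j=1}^{j_\beta} c(x^-_{(j)_w})$, and a maximizing subset can always be chosen consistently with one such ranking. Thus the ranked sum and the subset maximum agree regardless of how ties are resolved. I do not expect genuine difficulty here: once monotonicity of $c$ is noted, the equality is an immediate greedy/exchange argument, so the proof should be short.
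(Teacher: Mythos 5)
Your proposal is correct and follows essentially the same route as the paper's proof: your per-negative contribution $c(x^-)$ is just $m$ times the paper's $r(w;\,S_+,x^-)$, and both arguments rest on the monotonicity of this quantity in the score $w^\top x^-$ together with the greedy observation that a cardinality-constrained sum is maximized by the top-scoring elements. Your explicit handling of ties is a minor refinement the paper omits (it assumes no ties elsewhere), not a different approach.
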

\begin{proof}
Define for any negative $x^-$, $\displaystyle r(w;\, S_+, x^-) = \frac{1}{m}\sum_{i=1}^m \1 \big( w^\top x_i^+ \leq w^\top x^- \big)$. Notice that $r(w; \,S_+, x^-)$ is monotonically increasing in the score $w^\top x^-$ assigned to $x^-$. Thus $\widehat{R}_{\AUC}(w; \, S_+, \, Z) = \frac{1}{j_\beta}\sum_{x^- \in Z}\, r(w; \,S_+, x^-)$ takes the highest value when $Z$ contains the top ranked $j_\beta$ negatives, and by definition (see \Eqn{eqn:emp-pauc-risk-beta}), this maximum value is equal to the partial AUC risk of the given scoring function in the FPR range $[0, \beta]$.
\end{proof}
%

% Replacing the indicator functions in the above rewritten empirical risk with the pair-wise hinge loss $\big(1 - \big( w^\top x_i^+ - w^\top x^- \big)\big)_+$, we have the following upper bound:
% %%~\\[-1.1cm]
%\begin{eqnarray}
%\widehat{R}^\hinge_{\pAUC(0, \beta)}(w; \, S)
%	&=&
%	\max_{Z \subseteq S_-, \, |Z| = j_\beta}\,
%		\frac{1}{mj_\beta} \sum_{i = 1}^m \sum_{x^- \in Z} 
%		\big(1 - \big( w^\top x_i^+ - w^\top x^- \big)\big)_+
%		\label{eqn:pauc-hinge}
%	\\
%	&=&
%	\max_{Z \subseteq S_-, \, |Z| = j_\beta}\,
%		\widehat{R}^\hinge_{\AUC}(w; \, S_+, Z).\nonumber
%	\\[-1.1cm] \nonumber
%\end{eqnarray}
%This is clearly convex in $w$ as it is a maximum of functions that are convex in $w$. Again the above maximum is attained by the subset of top $j_\beta$ negatives according to $w$.
%\begin{prop}
%\label{prop:pauc-struct-max-special}
%Let $\bar{Z} = \{\bar{z}_1, \ldots, \bar{z}_{j_\beta}\}$ be the set of negative instances ranked in the top $j_\beta$ positions (among all negative instances in $S_-$, in descending order of scores) by $w^\top x$. Then the maximum value of the objective in \Eqn{eqn:pauc-hinge} is attained at $\bar{Z}$.
%\end{prop}
%\begin{proof}
%The proof is similar to that of \Thm{thm:pauc-special-max} and makes use of the fact that the pair-wise hinge loss is monotonically increasing in the scores on negative instances.
%\end{proof}
%
%\noindent 
%We would then like to minimize a regularized version of this surrogate risk over $w \in \R^d$. As with the full AUC, we resort to structural SVM optimization.

Having expressed the partial AUC risk in $[0,\beta]$ in terms of the full AUC risk on a subset of instances, we can devise a convex surrogate for the evaluation measure by constructing  a convex approximation to the full AUC term using the structural SVM formulation explained earlier in Section \ref{subsec:struct-auc}. %; we thus get a convex and can be efficiently optimized using the cutting plane method.

In particular, let us define \textit{truncated} ordering matrices $\pi\in \{0,1\}^{m\times j_\beta}$ for positive instances $S_+$ and any subset $Z = \{z_1, \ldots, z_{j_\beta}\} \subseteq S_-$ of negative instances as 
\[
\pi_{ij} = \left\{ \begin{array}{ll}
	1 & ~~\mbox{if $x^+_i$ is ranked {below} $z_j$} \\[2pt]
	0 & ~~\mbox{if $x^+_i$ is ranked above $z_j$.} \\
\end{array} \right.
\] 
%and the AUC loss $\Delta_\AUC$ and joint feature map $\phi$ for these matrices. 
%the set of all valid orderings as $\Pi_{m,j_\beta}$, and the correct ordering as $\pi^* = 0^{m\times j_\beta}$. 
The set of all valid orderings is denoted as $\Pi_{m,j_\beta}$, and the correct ordering is given by $\pi^* = \mathbf{0}^{m\times j_\beta}$. Also redefine the joint feature map for $m$ positive and $j_\beta$ negatives: $\phi: X^{m} \times X^{j_\beta} \> \R^d$. As seen earlier, the following is then a convex upper bound on the inner AUC term in \Eqn{eqn:pauc-inner-auc}:
%%~\\[-15pt]
\[
\underset{\pi \,\in\, \Pi_{m,j_\beta}}{\max}\big\{\Delta_{\AUC}(\pi^*,\pi) \,-\, w^\top\big(\phi((S_+,Z),\pi^*) - \phi((S_+,Z),\pi\big)\big\}.
\]
Replacing the AUC term in \Eqn{eqn:pauc-inner-auc} with the above expression gives us a surrogate that upper bounds the partial AUC risk in $[0,\beta]$:
%%~\\[-30pt]

\begin{eqnarray}
%\lefteqn{
%\widehat{R}^\hinge_{\pAUC(0, \beta)}(w; \, S) 
%\,\,=\,\,
\widehat{R}^\tight_{\pAUC(0, \beta)}(w; \, S)
%}
%\nonumber
%\label{eqn:pauc-risk-struct-beta-1}
%\\[-5pt]
%\vspace{-1cm}	
	& 
	=
	\displaystyle
	\max_{Z \subseteq S_-, \, |Z| = j_\beta}\,
	\underset{\pi \,\in\, \Pi_{m,j_\beta}}{\max}\big\{\Delta_{\AUC}(\pi^*,\pi) \,-\, w^\top\big(\phi((S_+,Z),\pi^*) - \phi((S_+,Z),\pi\big)\big\},
%	\nonumber\\
	\label{eqn:pauc-risk-struct-beta-2}
\end{eqnarray}
where the $\pi_{ij}$'s index over all positive instances, and over negative instances in the corresponding subset $Z$ in the outer argmax. 

Clearly, the prescribed surrogate objective is convex in $w$ as it is a maximum of convex functions in $w$. %Before proceeding to describe how this surrogate can be optimized efficiently, we first show
In fact, this surrogate is equivalent to the corresponding hinge surrogate for partial AUC in $[0,\beta]$ in \Eqn{eqn:emp-pauc-hinge-beta}. More specifically, we know from Theorem \ref{thm:auc-struct} that the structural SVM expression used above to approximate the inner full AUC term is same as the hinge surrogate for the AUC:
%
%We show below that this surrogate is in fact equivalent to the corresponding hinge surrogate for partial AUC in $[0,\beta]$, and subsequently develop an efficient cutting plane solver to optimize it.
\begin{eqnarray}
{
\widehat{R}^\tight_{\pAUC(0, \beta)}(w; \, S)
}	& = &
	\max_{Z \subseteq S_-, \, |Z| = j_\beta}\,
	\frac{1}{mj_\beta} \sum_{i = 1}^m \sum_{x^- \in Z} 
		\big(1 - ( w^\top x_i^+ - w^\top x^- )\big)_+.
%	\nonumber\\
	\label{eqn:pauc-hinge}
\end{eqnarray}
At first glance, this appears different from the hinge surrogate for $[0,\beta]$ range in \Eqn{eqn:emp-pauc-hinge-beta}. However, as seen next, the above maximum in attained at the top $j_\beta$ negatives according to $w$, which clearly implies that the two surrogates are equivalent.
\begin{prop}
\label{prop:pauc-struct-max-special}
Let $\bar{Z} = \{\bar{z}_1, \ldots, \bar{z}_{j_\beta}\}$ be the set of negative instances ranked in the top $j_\beta$ positions (among all negative instances in $S_-$, in descending order of scores) by $w^\top x$. Then the maximum value of the objective in \Eqn{eqn:pauc-hinge} (or equivalently in \Eqn{eqn:pauc-risk-struct-beta-2}) is attained at $\bar{Z}$.
\end{prop}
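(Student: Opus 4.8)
The plan is to mirror the proof of \Thm{thm:pauc-special-max}, replacing the indicator-based per-negative term used there with its hinge analogue. First I would introduce, for any negative instance $x^-$, the per-negative hinge contribution
\[
g(w;\, S_+, x^-) \,=\, \frac{1}{m}\sum_{i=1}^m \big(1 - (w^\top x_i^+ - w^\top x^-)\big)_+\,,
\]
and observe that the inner double sum in \Eqn{eqn:pauc-hinge} separates over the chosen negatives, so that for any $Z$ with $|Z| = j_\beta$ the objective equals $\frac{1}{j_\beta}\sum_{x^- \in Z} g(w;\, S_+, x^-)$. Thus maximizing over subsets $Z$ of size $j_\beta$ amounts to selecting the $j_\beta$ negatives that contribute the largest values of $g$.

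The key step is to show that $g(w;\, S_+, x^-)$ is monotonically nondecreasing in the score $w^\top x^-$ assigned to $x^-$. This follows because each summand $\big(1 - (w^\top x_i^+ - w^\top x^-)\big)_+ = \max\{0,\, 1 - w^\top x_i^+ + w^\top x^-\}$ has an argument that is affine and increasing in $w^\top x^-$, and $\max\{0,\cdot\}$ is a nondecreasing function; summing over $i$ preserves this monotonicity. Hence a negative instance with a larger score never yields a smaller hinge contribution than one with a smaller score.

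Given monotonicity, the subset of size $j_\beta$ maximizing $\sum_{x^- \in Z} g(w;\, S_+, x^-)$ is obtained by picking the $j_\beta$ negatives with the highest scores $w^\top x^-$, which is precisely the set $\bar{Z}$ of top-$j_\beta$ ranked negatives. This establishes that the maximum in \Eqn{eqn:pauc-hinge} is attained at $\bar{Z}$, and combined with the observation immediately preceding the proposition, that the resulting value coincides with the hinge surrogate in \Eqn{eqn:emp-pauc-hinge-beta}.

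I do not anticipate a genuine obstacle here: the argument is a direct hinge-loss counterpart of the monotonicity reasoning already used for \Thm{thm:pauc-special-max}, and the separability of the objective over negatives makes the greedy ``top-$j_\beta$'' selection optimal. The only point requiring mild care is the treatment of ties in the scores $w^\top x^-$, under which $\bar{Z}$ need not be unique; consistent with the no-ties assumption adopted throughout the paper, any maximizing subset then agrees with $\bar{Z}$ up to interchanging equally scored negatives, which leaves the objective value unchanged.
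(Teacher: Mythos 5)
Your proposal is correct and follows essentially the same route as the paper: the paper's proof likewise defines the per-negative hinge contribution $r(w;\,S_+,x^-)=\frac{1}{m}\sum_{i=1}^m\big(1-(w^\top x_i^+ - w^\top x^-)\big)_+$, notes its monotonicity in $w^\top x^-$, and invokes the same separability/greedy argument used for \Thm{thm:pauc-special-max}. Your write-up simply makes explicit the monotonicity and tie-handling details that the paper leaves implicit.
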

\begin{proof}
Define for any $x^-$, $\displaystyle r(w;\, S_+, x^-) = \frac{1}{m}\sum_{i=1}^m \big(1 - ( w^\top x_i^+ - w^\top x^- )\big)_+$. The proof then follows the same argument used in proving \Thm{thm:pauc-special-max} and uses the fact that $r$ is monotonically increasing in the scores on negative instances.
\end{proof}

\noindent The following result then follows directly from the above proposition.
\begin{theorem}%The hinge-loss based surrogate risk for AUC
For any $w \in \R^d$ and training sample $S \in X^m \times X^n$, $\widehat{R}^\tight_{\pAUC(0,\beta)}(w; \, S) 
\,=\, \widehat{R}^\hinge_{\pAUC(0,\beta)}(w; \, S).
$
\label{thm:pauc-struct}
\end{theorem}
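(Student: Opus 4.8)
The plan is to chain together the two results that immediately precede the statement, since once Eqn~\eqref{eqn:pauc-hinge} and Proposition~\ref{prop:pauc-struct-max-special} are in hand, essentially no new work remains. First I would recall that, by Theorem~\ref{thm:auc-struct} applied to the inner full-AUC term on the sample $(S_+, Z)$, the structural SVM approximation appearing in the definition~\eqref{eqn:pauc-risk-struct-beta-2} coincides with the pairwise hinge expression. This is exactly what justifies rewriting $\widehat{R}^\tight_{\pAUC(0,\beta)}(w;\,S)$ in the form~\eqref{eqn:pauc-hinge}, namely as a maximum over all size-$j_\beta$ subsets $Z \subseteq S_-$ of the averaged pairwise hinge losses $\frac{1}{mj_\beta}\sum_{i=1}^m \sum_{x^- \in Z} \big(1 - (w^\top x_i^+ - w^\top x^-)\big)_+$.

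Next I would invoke Proposition~\ref{prop:pauc-struct-max-special}, which identifies the maximizing subset in~\eqref{eqn:pauc-hinge} as $\bar{Z} = \{\bar{z}_1, \ldots, \bar{z}_{j_\beta}\}$, the set of negatives occupying the top $j_\beta$ positions under $w^\top x$. By definition of the position notation, this set is precisely $\bar{Z} = \{x_{(1)_w}^-, \ldots, x_{(j_\beta)_w}^-\}$. Substituting $Z = \bar{Z}$ into~\eqref{eqn:pauc-hinge} therefore turns the inner sum over $x^- \in \bar{Z}$ into a sum over ranked positions $j = 1, \ldots, j_\beta$, yielding $\frac{1}{mj_\beta}\sum_{i=1}^m \sum_{j=1}^{j_\beta} \big(1 - (w^\top x_i^+ - w^\top x_{(j)_w}^-)\big)_+$, which is exactly $\widehat{R}^\hinge_{\pAUC(0,\beta)}(w;\,S)$ as given in~\eqref{eqn:emp-pauc-hinge-beta}. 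Since the left-hand side of~\eqref{eqn:pauc-hinge} equals this quantity, the claimed equality follows.

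The only point requiring any care is the bookkeeping that the unordered sum over the maximizing subset $\bar{Z}$ agrees with the position-indexed sum $\sum_{j=1}^{j_\beta}$ in the definition of the hinge surrogate; this is immediate because both range over the identical collection of instances, and the summand is symmetric in how the elements are enumerated. Consequently there is no genuine obstacle here: the entire content of the theorem has been absorbed into Theorem~\ref{thm:auc-struct} (which converts the inner structural term to a hinge term) and Proposition~\ref{prop:pauc-struct-max-special} (which locates the maximizer at the top $j_\beta$ negatives), and the present statement is a direct corollary of the two.
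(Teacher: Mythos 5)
Your proposal is correct and follows exactly the route the paper takes: the paper also obtains the identity by combining Theorem~\ref{thm:auc-struct} (to rewrite the inner structural term as the pairwise hinge expression, giving \Eqn{eqn:pauc-hinge}) with Proposition~\ref{prop:pauc-struct-max-special} (to locate the maximizing subset at the top $j_\beta$ negatives under $w$), and then notes the result "follows directly." Your additional remark about matching the unordered subset sum with the position-indexed sum is a harmless bookkeeping point that the paper leaves implicit.
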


 Also notice that unlike the na\"{i}ve structural SVM surrogate in \Eqn{eqn:Naive}, the joint feature map $\phi$ in the proposed surrogate in \Eqn{eqn:pauc-risk-struct-beta-2} is not defined on all negatives, but only on a subset of negatives; consequently, this surrogate does not contain additional redundant terms, and is thus tighter than the na\"{i}ve surrogate \cite{NarasimhanAg13b} (see Figure \ref{fig:surrogates}).  We shall next develop a cutting plane method for optimizing this surrogate.
%Hence, the above surrogate can be shown equivalent to the hinge surrogate for the partial AUC in $[0,\beta]$ in \Eqn{eqn:emp-pauc-hinge-beta}.

\begin{figure}[t]
%\vspace{-0.3cm}
\begin{algorithm}[H]
\caption{Cutting Plane Method for $\text{SVM}_{\pAUC}$ in $[\alpha, \beta]$}
\label{algo:cutting-plane}
\begin{algorithmic}[1]
\small{
\STATE \textbf{Inputs:} $S = (S_+, S_-), ~\alpha, ~\beta, ~C, ~\epsilon$ 
\STATE \textbf{Initialize:} 
\STATE %\hspace{0.5cm} 
If $\alpha = 0$:
\STATE \hspace{0.5cm} 
	$H_w(Z, \pi) \,\equiv\, \Delta_{\AUC}(\pi^*,\pi) - 
	w^\top \big( \phi((S_+, Z), \pi^*) - \phi((S_+, Z),\pi) \big)$
\STATE %\hspace{0.5cm} 
else:
\STATE \hspace{0.5cm}
$H_w(Z, \pi) \,\equiv\, \Delta^\text{tr}_{\pAUC}(\pi^*,\pi) - 
	w^\top \big( \phi((S_+, Z), \pi^*) - \phi((S_+, Z),\pi) \big)$
	~~(see \Eqn{eqn:pauc-loss})
\STATE %\hspace{0.5cm} 
$\mathcal{C} = \emptyset$%, $w = 0^d$
\STATE \textbf{Repeat} 
\STATE \hspace{0.5cm} 
	$(w, \xi) = \underset{w, \,\xi \geq 0}{\operatorname{argmin}} \, \frac{1}{2}||w||^2 + C\xi$ 
	$\hspace{0.25cm}  ~~\text{s.t.}~~ \hspace{0.25cm}  \forall (Z, \pi) \in \C:~~ \xi \geq H_w(Z, \pi)$
\STATE \hspace{0.5cm}
	$(\bar{Z}, \bar{\pi}) = 
	\underset{\substack{Z \subseteq S_-, \, |Z| = j_\beta\\\pi  \,\in\,\Pi_{m,j_\beta}}}
	{\operatorname{argmax}} ~H_w(Z, \pi)$
\hspace{0.5cm}(compute the most violated constraint)
\STATE \hspace{0.5cm} $\C = \C \cup \{(\bar{Z},\bar{\pi})\}$
\STATE \textbf{Until} ~$H_w(\bar{Z}, \bar{\pi}) \,\leq\, \xi + \epsilon$
\STATE \textbf{Output:} $w$ 
}
\end{algorithmic}
\end{algorithm}	
%\vspace{-15pt}
\end{figure}

\subsection{Cutting Plane Method for Optimizing $\widehat{R}^\tight_{\pAUC(0,\beta)}$}
We would like to minimize the proposed surrogate in \Eqn{eqn:pauc-risk-struct-beta-2} with an additional regularization term on $w$. This yields the (convex) quadratic program given below.
%~\\
%\vspace{-0.2cm}
\begin{equation*}
	\min_{w,\, \xi \geq 0} \, \frac{1}{2}||w||^2 + C\xi
%\tag{OP1}
\label{opt:structpauc}
\end{equation*}
\vspace{-1cm}
\begin{eqnarray*}
	\lefteqn{\text{s.t.}~~\forall Z \subseteq S_-, \, |Z| = j_\beta, ~~ \forall \pi\in\Pi_{m,j_\beta}:~~}\\
	&\hspace{1cm}&
	w^\top \big( \phi((S_+, Z), \pi^*) - \phi((S_+, Z),\pi) \big) ~ \geq ~
	\Delta_{\AUC}(\pi^*,\pi) - \xi 
	\,.
%\vspace{-2cm}
\end{eqnarray*}

Notice that the optimization problem has an exponential number of constraints, one for each subset of negative instances of size $j_\beta$ and matrix $\pi \in\Pi_{m,j_\beta}$. As with the full AUC, we use the cutting plane method to solve this problem. The idea behind this method is that for any $\epsilon>0$, a small subset of the constraints is sufficient to find an $\epsilon$-approximate solution to the problem \cite{lintime}. In particular, the method starts with an empty constraint set $\C=\emptyset$, and on each iteration, adds the most violated constraint to $\C$, and solves a tighter relaxation of the optimization problem in the subsequent iteration; this continues until no constraint is violated by more than $\epsilon$  (see Algorithm \ref{algo:cutting-plane}).

It is known that for any fixed regularization parameter $C>0$ and tolerance  $\epsilon > 0$, 
the cutting plane method converges in $O(C/\epsilon + \log(1/C))$ iterations, and will yield a surrogate value within $\epsilon$ of the minimum value \cite{lintime}. Since in each iteration, the quadratic program needed to be solved grows only by a single constraint, the primary bottleneck in the algorithm is the combinatorial optimization (over subsets of negatives and ordering matrices) required to find the most violated constraint (line 10). %We shall see next that this combinatorial optimization can be performed efficiently for partial AUC in $[0, \beta]$, yielding a time complexity that is polynomial in the size of the training sample and is at least as good as the computational complexity for full AUC.

\textbf{Finding most-violated constraint.} The specific  combinatorial optimization problem that we wish to solve can be stated as:
\begin{equation*}
\underset{\substack{Z \subseteq S_-, \, |Z| = j_\beta\\\pi  \,\in\,\Pi_{m,j_\beta}}}
	{\operatorname{argmax}}\big\{
		\Delta_{\AUC}(\pi^*,\pi) \,-\, w^\top\big(\phi((S_+,Z),\pi^*) - \phi((S_+,Z),\pi\big)
	\big\}.
	\label{eqn:struct-special}
\end{equation*}
In the case of AUC, where $j_\beta = n$, the  above argmax is only over ordering matrices in $\Pi_{m,n}$, and can be easily computed by exploiting the additive form of the AUC loss and in particular, by neatly decomposing the problem into one where each $\pi_{ij}$ can be chosen independently \cite{svmperf}. In the case of the partial AUC in $[0,\beta]$, the decomposition is not as straightforward as the argmax is also over subsets of negatives.% instances.

\begin{figure}[t]
\begin{algorithm}[H]
\caption{Find Most-Violated Constraint for pAUC in $[0, \beta]$}
\label{algo:mvc}
\begin{algorithmic}[1]
\small{
\STATE \textbf{Inputs:} $S = (S_+, S_-)$, $\beta$, $w$
\STATE Set $\bar{Z} = \{\bar{z}_1, \ldots, \bar{z}_{j_\beta}\}$ as the set of instances ranked in the top $j_\beta$ positions among $S_-$ (in descending order of scores) by $w^\top x$ 
%\textbf{For} $i = 1, \dots, m$  \textbf{do}
%\STATE \hspace{0.5cm}
%\textbf{For} $j = 1, \dots, j_\beta$  \textbf{do}
%\STATE \hspace{0.5cm} \hspace{0.5cm}
\STATE
$\bar{\pi}_{ij}  =  \1( w^\top x^+_i \,-\, w^\top \bar{z}_{j} \leq 1 )
~~~
\forall i \in \{1, \ldots, m\}, ~j \in \{1, \ldots, j_\beta\}$
%
%\STATE \hspace{0.5cm}
%\textbf{End For}
%\STATE 
%\textbf{End For}
%%%%%%%%
%%%%%%%%
\STATE \textbf{Output:} $\bar{Z}, \bar{\pi}$
}
\end{algorithmic}
\end{algorithm}	
%\vspace{-0.5cm}
\end{figure}

\textbf{Reduction to simpler problems.}  We however know from \Prop{prop:pauc-struct-max-special} that the above argmax is attained at the top $j_\beta$ negatives  $\bar{Z} = \{\bar{z}_1, \ldots, \bar{z}_{j_\beta}\}$ according to $w$, and all that remains is to compute the optimal ordering matrix in $\Pi_{m,j_\beta}$ keeping $\bar{Z}$ fixed; the optimization problem can then be decomposed easily.
 In particular, having fixed the subset $\bar{Z}$, the combinatorial optimization problem becomes equivalent to:
\begin{equation}
	\underset{\pi  \,\in\,\Pi_{m,j_\beta}}
	{\operatorname{argmax}}\,\frac{1}{mj_\beta}\sum_{i=1}^m\sum_{j=1}^{j_\beta} \pi_{ij}\big(1 \,-\, w^\top( x^+_i -  \bar{z}_j)\big).
	\tag{OP1}	
	\label{eqn:mvc-special}
\end{equation}
  Now consider solving a relaxed form of \ref{eqn:mvc-special} over all matrices in $\{0,1\}^{m\times j_\beta}$. The objective now decomposes into a sum of terms involving individual elements $\pi_{i,j} \in \{0,1\}$ and can be maximized by optimizing each term separately; the optimal matrix is then given by $\bar{\pi}_{ij} = \1(w^\top x^+_i \,-\,  w^\top\bar{z}_j \,\leq\, 1)$. It can be seen that this optimal matrix $\bar{\pi}$ is in fact a valid ordering matrix in $\Pi_{m,j_\beta}$, as it corresponds to ordering of instances where the positives are scored according to $w^\top x$ and the negatives are scored according to $w^\top x + 1$. Hence $\bar{\pi}$ is also a solution to the original unrelaxed form of \ref{eqn:mvc-special} for fixed $\bar{Z}$, and thus $(\bar{Z}, \bar{\pi})$ gives us the desired most-violated constraint.

\textbf{Time complexity.} A straightforward implementation to compute the above solution (see Algorithm \ref{algo:mvc}) would take computational time $O(mj_\beta + n\log(n))$ (assuming score evaluations on instances can be done in unit time). Using a more compact representation of the orderings \cite{svmperf}, however, this can be further reduced to $O((m+j_\beta)\log(m+j_\beta) + n\log(n))$. The details can be found in Appendix \ref{app:mvc-efficient}. Thus computing the most-violated constraint for the partial AUC in a small interval $[0,\beta]$ is faster than that for the full AUC \cite{svmperf}; this is because the number of negative instances relevant to the given FPR range over which the most-violated constraint is computed is smaller for the partial AUC. On the other hand, it turns out that in practice, the number of iterations required by the cutting plane method to converge (and in turn the number of calls to the given combinatorial optimization) is often higher for partial AUC compared to AUC; we will elaborate this when we discuss our experimental results in Section \ref{sec:expts}.
%\begin{theorem}
%For $\bar{Z}$
%\begin{eqnarray*}
%	\underset{\pi  \,\in\,\Pi_{m,j_\beta}}{\operatorname{max}}
%	\bigg\{
%		\frac{1}{mj_\beta}\sum_{i=1}^m\sum_{j=1}^{j_\beta} \pi_{ij}\big(1 \,-\, w^\top( x^+_i -  \bar{z}_j)\big)
%		\bigg\}
%		&=&
%		\frac{1}{mj_\beta}\sum_{i=1}^m\sum_{j=1}^{j_\beta} \max_{\pi_{ij} \in \{0,1\}}\pi_{ij}\big(1 \,-\, w^\top( x^+_i -  \bar{z}_j)\big).
%\end{eqnarray*}
%\end{theorem}

%\begin{eqnarray*}
%\lefteqn{
%\max_{Z \subseteq S_-, \, |Z| = j_\beta}\,
%	\underset{\pi \,\in\, \Pi_{m,j_\beta}}{\max}\big\{
%		\Delta_{\AUC}(\pi^*,\pi) \,-\, w^\top\big(\phi((S_+,Z),\pi^*) - \phi((S_+,Z),\pi\big)
%	\big\}} \hspace{3cm}\\
%	&=&
%	\max_{Z \subseteq S_-, \, |Z| = j_\beta}\,
%	\underset{\pi \,\in\, \Pi_{m,j_\beta}}{\max}\bigg\{
%		\frac{1}{mj_\beta}\sum_{i=1}^m\sum_{j=1}^{j_\beta} \pi_{ij}\big(1 \,-\, (w^\top x^+_i - w^\top z_j)\big)
%		\bigg\}\\
%	&=&
%	\max_{Z \subseteq S_-, \, |Z| = j_\beta}\,
%	\bigg\{
%		\frac{1}{mj_\beta}\sum_{i=1}^m\sum_{j=1}^{j_\beta} \max_{\pi_{ij} \in \{0,1\}}\pi_{ij}\big(1 \,-\, (w^\top x^+_i - w^\top z_j)\big)
%		\bigg\}.
%\end{eqnarray*}

We have presented an efficient method for optimizing the structural SVM surrogate for the partial AUC in the $[0,\beta]$ range,  which we saw was equivalent to the hinge surrogate. We next proceed to algorithms for optimizing partial AUC in a general FPR interval $[\alpha, \beta]$. %, where the hinge surrogate is non-convex in general. %In the following section, we shall use the structural SVM formulation to construct and optimize an alternate convex surrogate for the partial AUC in $[\alpha,\beta]$. Subsequently, in Section \ref{sec:svm-dc}, we shall  also develop an approach for directly optimizing the non-convex hinge surrogate.

%%%%%%%%%%%%%%%%%%%%%%%%%%%%%%%%%%%%%%%%%%%%%%%%%%%%%%%%%%%%%%

%%%%%%%%%%%%%%%%%%%%%% Sec 4: [\alpha,\beta] %%%%%%%%%%%%%%%%%
\section{Structural SVM Approach for Partial AUC in $[\alpha, \beta]$}
\label{sec:svm-general}
Recall that the partial AUC in a general FPR interval $[\alpha, \beta]$ is given by:
\begin{eqnarray}
%\lefteqn{
\widehat{R}_{\pAUC(\alpha, \beta)}(w; \, S) \,=\,
		\frac{1}{m (j_\beta-j_\alpha)} \sum_{i = 1}^m\sum_{j = j_\alpha+1}^{j_\beta} 
		\1 \big( w^\top x_i^+ \leq w^\top x_{(j)_w}^- \big)
		.
\label{eqn:emp-pauc-risk}
\end{eqnarray}
We have already seen in Section \ref{sec:surrogates} that in this case, the simple hinge surrogate (obtained by replacing the indicator terms in the above risk by the pairwise hinge loss) is not necessarily convex. We have also seen that a na\"{i}ve application of the structural SVM formulation to the above risk yields a surrogate with redundant additional terms involving negative instances outside the specified FPR range. As with the $[0,\beta]$ case, we now apply the structural SVM framework in a manner that yields a tighter convex surrogate for the partial AUC risk; of course, in this case, the resulting convex  surrogate is not equivalent to the non-convex hinge surrogate, but as we explain later, can be seen as a convex relaxation to the hinge surrogate. 

Again, the main idea here is to rewrite the partial AUC risk as a maximum of a certain term over subsets of negatives, and use the structural SVM formulation to compute a convex approximation to the inner term. We provide an efficient cutting plane method for solving the resulting optimization problem; here the combinatorial optimization step for finding the most-violated constraint in the cutting plane solver does not admit a decomposition involving individual matrix entries. We show that by using a suitable reformulation of the problem over a restricted search space, the optimization can be still reduced into simpler ones, but now involving individual rows of the ordering matrix. %; as before, the computational complexity of finding the most violated constraint shall be at least as good as that for full AUC. 
 In Section \ref{sec:svm-dc}, we shall also look at an approach for directly optimizing the non-convex hinge  surrogate for general FPR ranges. 

%%\subsection{Hinge Loss Based Surrogate for pAUC in $[\alpha, \beta]$?}
%Similar to the previous case, let us first consider a hinge loss based surrogate for the general partial AUC. Given a sample $S$, our goal here is to minimize the empirical pAUC risk in $[\alpha, \beta]$:
%\begin{equation}
%\widehat{R}_{\pAUC(\alpha,\beta)}(w;\, S)
%%	\,=\,
%%	1 - \widehat{\AUC}_f
%	\,=\,
%		\frac{1}{m(j_\beta-j_\alpha)} \sum_{i = 1}^m \sum_{j = j_\alpha+1}^{j_\beta} 
%		\1 \big( w^\top x_i^+ \leq w^\top x_{(j)_w}^- \big)
%	.
%\label{eqn:emp-pauc-risk-general}
%\end{equation}
%%Indeed one can construct a surrogate for the above risk by 
%Replacing the indicator functions by the pair-wise hinge loss, we have an upper bound on the above risk
%\begin{equation}
%\widehat{R}^\hinge_{\pAUC(\alpha,\beta)}(w;\, S)
%%	\,=\,
%%	1 - \widehat{\AUC}_f
%	\,=\,
%		\frac{1}{m(j_\beta-j_\alpha)} \sum_{i = 1}^m \sum_{j = j_\alpha+1}^{j_\beta} 
%		\big( 1 \,-\, (w^\top x_i^+ \,-\, w^\top x_{(j)_w}^-)\big)_+
%	.
%\label{eqn:emp-pauc-hinge-general}
%\end{equation}
%However this objective is in general non-convex in $w$ (see Appendix for an example). In order to obtain an alternate convex surrogate, we resort to the structural SVM route. 

%As mentioned above, we will develop a method for directly optimizing the above non-convex surrogate in the next section.
%%
\subsection{Tight Structural SVM Surrogate for pAUC in $[\alpha, \beta]$}
We begin by describing the  construction of the tight structural SVM surrogate. %As before, we first rewrite the partial AUC risk  as a maximum over subsets of negative instances of a certain term, for which we will construct a convex approximation using the structural SVM framework. 
Just as the partial AUC risk in $[0,\beta]$ could be written as a maximum over subsets of negative instances of the full AUC risk evaluated on this subset (see \Thm{thm:pauc-special-max}), the partial AUC risk in $[\alpha,\beta]$ can also be written as a maximum of a certain term over subsets of negative instances of size $j_\beta$.
 % $[\alpha, 1]$ interval.
\begin{theorem} 
\label{thm:pauc-general-max}
For any $w \in \R^d$ and training sample $S = (S_+, S_-) \in X^m \times X^n$,
\begin{eqnarray*}
\widehat{R}_{\pAUC(\alpha, \beta)}(w; \, S)
	&=&
	\max_{Z \subseteq S_-, \, |Z| = j_\beta}\,
			\widetilde{R}(w; \, S_+, \, Z)
	,
\end{eqnarray*}
%where $(j)_w$ here denotes the index of the $j$-th ranking negative instance in ${Z}$ when all instances in ${Z}$ are sorted (in descending order) by $w^\top x$.
where for any subset of negative instances $Z = \{z_1, \ldots, z_{j_\beta}\}$ that (w.l.o.g.) satisfy $w^\top z_1 \geq \ldots \geq w^\top z_{j_\beta}$, 
$$ \widetilde{R}(w; \, S_+, \, Z) = \frac{1}{m(j_\beta-j_\alpha)}\sum_{i=1}^m\sum_{j=j_\alpha+1}^{j_\beta} \1 \big( w^\top x_i^+ \leq w^\top z_{j} \big).$$
\end{theorem}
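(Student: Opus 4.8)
The plan is to mirror the structure of the proof of \Thm{thm:pauc-special-max}, reducing everything to the per-negative quantity $r(w;\, S_+, x^-) = \frac{1}{m}\sum_{i=1}^m \1(w^\top x_i^+ \leq w^\top x^-)$, which is monotonically increasing in the score $w^\top x^-$. Writing $r_{(j)} = r(w;\, S_+, x^-_{(j)_w})$ for the value attached to the $j$-th ranked negative, monotonicity gives $r_{(1)} \geq r_{(2)} \geq \cdots \geq r_{(n)}$, and by definition $\widehat{R}_{\pAUC(\alpha,\beta)}(w;\, S) = \frac{1}{j_\beta - j_\alpha}\sum_{j=j_\alpha+1}^{j_\beta} r_{(j)}$. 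The key difference from the $[0,\beta]$ case is that here $\widetilde{R}$ averages $r$ only over the bottom $j_\beta - j_\alpha$ (by score) elements of $Z$, so the ``top $j_\beta$ negatives maximize the sum'' argument used for \Thm{thm:pauc-special-max} is no longer immediate and needs an extra combinatorial indexing step.

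First I would establish achievability by evaluating $\widetilde{R}$ at $\bar{Z} = \{x^-_{(1)_w}, \ldots, x^-_{(j_\beta)_w}\}$, the top $j_\beta$ negatives. The internal descending ordering of $\bar{Z}$ coincides with the global ranking, so positions $j_\alpha+1, \ldots, j_\beta$ inside $\bar{Z}$ are exactly the negatives at global ranks $j_\alpha+1, \ldots, j_\beta$; hence $\widetilde{R}(w;\, S_+, \bar{Z}) = \frac{1}{j_\beta - j_\alpha}\sum_{j=j_\alpha+1}^{j_\beta} r_{(j)} = \widehat{R}_{\pAUC(\alpha,\beta)}(w;\, S)$, which shows the maximum over $Z$ is at least the partial AUC risk.

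Next I would prove the matching upper bound. For an arbitrary $Z$ of size $j_\beta$, index its elements by their global ranks $i_1 < i_2 < \cdots < i_{j_\beta}$; since a lower global rank means a higher score, the internal descending order of $Z$ is $i_1, \ldots, i_{j_\beta}$, so $\widetilde{R}(w;\, S_+, Z) = \frac{1}{j_\beta - j_\alpha}\sum_{k=j_\alpha+1}^{j_\beta} r_{(i_k)}$. The crucial observation is that $i_1 < \cdots < i_k$ are $k$ distinct positive integers, forcing $i_k \geq k$, and therefore $i_{j_\alpha+m} \geq j_\alpha + m$ for each $m = 1, \ldots, j_\beta - j_\alpha$. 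Using that $r_{(\cdot)}$ is non-increasing in its index then gives $r_{(i_{j_\alpha+m})} \leq r_{(j_\alpha+m)}$ termwise, whence $\widetilde{R}(w;\, S_+, Z) \leq \frac{1}{j_\beta - j_\alpha}\sum_{j=j_\alpha+1}^{j_\beta} r_{(j)} = \widehat{R}_{\pAUC(\alpha,\beta)}(w;\, S)$. Combining the two bounds yields the claimed equality, with the maximizer being $\bar{Z}$.

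The main obstacle is precisely this upper-bound step: unlike the $[0,\beta]$ setting, where every element of $Z$ is counted and monotonicity alone selects the top $j_\beta$ negatives, here only the lowest-scoring $j_\beta - j_\alpha$ elements of $Z$ enter $\widetilde{R}$, so one must rule out the possibility that some cleverly chosen $Z$ (for instance, one that pads its top $j_\alpha$ slots with high-score negatives in the hope of pushing more favorable negatives into the counted window) outperforms $\bar{Z}$. The index inequality $i_{j_\alpha+m} \geq j_\alpha+m$ is exactly what closes this gap. I would also note that ties in scores are harmless, since $r$ depends on $x^-$ only through $w^\top x^-$ and is thus constant across tied negatives, so all the sums above are well-defined independently of the tie-breaking used to define the ranking (consistent with the ``w.l.o.g.'' ordering in the statement).
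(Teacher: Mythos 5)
Your proof is correct and follows essentially the same route as the paper's: both reduce everything to the per-negative quantity $r(w;\,S_+,x^-)$, invoke its monotonicity in the score $w^\top x^-$, and conclude that the maximum over $Z$ is attained at the top $j_\beta$ ranked negatives, where $\widetilde{R}$ coincides with the partial AUC risk. The only difference is that you make explicit the combinatorial step ($i_k \geq k$, hence $r_{(i_k)} \leq r_{(k)}$ termwise over the counted window) that the paper's one-line ``as a result'' argument leaves implicit; this is a tightening of the same argument rather than a different approach.
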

%\allowdisplaybreaks
\begin{proof}
As in \Thm{thm:pauc-special-max}, define for any $x^-$, $
\displaystyle
r(w;\, S_+, x^-) = \frac{1}{m}\sum_{i=1}^m \1 \big( w^\top x_i^+ \leq w^\top x^- \big)$. 
%Further, for any subset of $j_\beta$ negative instances $Z$, let $\bT_\alpha(Z, x^-; w)$ denote an indicator function that evaluates to 1 if $x^-$ is \textit{not} in the top $j_\alpha$ positions in the ranking of instances in $Z$ by $w$, and 0 otherwise. 
%Thus $\widehat{R}_{\pAUC(\alpha,1)}(w; \, S_+, \, Z)$ 
%$$\widehat{R}_{\pAUC(\alpha,\beta)}(w; \, S_+, \, Z) = \frac{1}{j_\beta-j_\alpha}\sum_{x^- \in Z} r(w; \,S_+, x^-)\bT_\alpha(Z, x^-; w)$$ 
Thus $\widetilde{R}(w; \, S_+, \, Z)$ 
evaluates to the (scaled) average value of this quantity on the bottom ranked $j_\beta - j_\alpha$ negatives within $Z$ by $w$. Moreover, $r(w; \,S_+, x^-)$ is monotonically increasing in the score $w^\top x^-$ assigned to $x^-$. As a result, $\widetilde{R}(w; \, S_+, \, Z)$ takes the highest value when $Z$ contains negatives in the top $j_\beta$ positions in the ranking of all negatives in $S_-$ by $w$. By the definition in \Eqn{eqn:emp-pauc-risk}, this maximum value is equal to the partial AUC risk of the scoring function in $[\alpha, \beta]$. 
\end{proof}
%\\[-1cm]
%\noindent 

Note when $\alpha = 0$, the term $\widetilde{R}(w; \, S_+, \, Z)$ is the full AUC risk on the sample $(S_+, Z)$, recovering our previous result in Theorem \ref{thm:pauc-special-max}. In this case, we directly made use of the structural SVM formulation for the full AUC to construct a convex approximation for this term. However, when $\alpha > 0$,  $\widetilde{R}(w; \, S_+, \, Z)$ is more complex and can be essentially seen as (a scaled version of) partial AUC risk in the FPR range $\big[{j_\alpha}/{j_\beta}, 1\big]$ defined on a subset of instances $(S_+, Z)$; we will hence have to rework the structural SVM formulation for $\widetilde{R}$, as described next.%; hence the structural SVM formulation used here needs to be altered accordingly. %As a result, here we will have to use a slightly different structural SVM formulation  to compute a convex approximation for the inner term.

\textbf{Convex upper bound on $\widetilde{R}$.} %We are now ready to construct the desired surrogate. % for the general pAUC risk. 
%\textbf{Structural SVM formulation for pAUC in $[\alpha, \beta]$.}  
In particular, we describe how the structural SVM framework can be used to construct a convex upper bound on the inner term $\widetilde{R}$, and thus obtain a convex surrogate for the partial AUC risk in $[\alpha, \beta]$. Restricting ourselves to \textit{truncated} ordering matrices $\pi \in \Pi_{m,j_\beta}$ defined for $m$ positives and a subset of $j_\beta$ negatives, let us again use $(j)_\pi$ to denote the index of the $j$-th ranked negative instance by any fixed ordering of instances consistent with $\pi$ (note that all such orderings yield the same value of $\widetilde{R}$). %For upper bounding $\widetilde{R}$, 
We further define the loss term for the truncated ordering matrices:% as:
\begin{eqnarray}
\Delta^\text{tr}_\pAUC(\pi^*,\pi) =  \frac{1}{m(j_\beta-j_\alpha)}\sum_{i = 1}^m \sum_{j=j_\alpha+1}^{j_\beta} \pi_{i,(j)_\pi}.
\label{eqn:pauc-loss}
\end{eqnarray}
Given that this loss is defined on a subset of instances, as noted above, it can be seen as the partial AUC loss in a scaled interval $\big[{j_\alpha}/{j_\beta}, 1\big]$. The following is then a convex upper bound on $\widetilde{R}$:
%in $[\alpha, 1]$:
\begin{eqnarray*}
%{\widetilde{R}^\struct(w; \, Z)} %_{\pAUC(\alpha, 1)}
%	&=&
	\underset{\pi \,\in\, \Pi_{m,j_\beta}}{\max}\big\{\Delta^\text{tr}_\pAUC(\pi^*,\pi) \,-\, w^\top\big(\phi((S_+,Z),\pi^*) - \phi((S_+,Z),\pi\big)\big\},
\end{eqnarray*}
and replacing $\widetilde{R}$ in the rewritten partial AUC risk in \Thm{thm:pauc-general-max} with the above expression gives us the following upper bounding surrogate:% for the general $[\alpha, \beta]$ interval:
\begin{eqnarray}
%\vspace{-7pt}
%\lefteqn{
\widehat{R}^\tight_{\pAUC(\alpha, \beta)}(w; \, S) 
	\,\,=\,\,
	%} 
%	\nonumber\\[-5pt]
%	 \max_{Z \subseteq S_-, \, |Z| = j_\beta}\,\widehat{R}^\struct_{\pAUC(\alpha, 1)}(w; \, S_+, Z) 
%	}\nonumber	\\
%	&=&
%	&&
%	\hspace{-0.5cm}	
	\max_{Z \subseteq S_-, \, |Z| = j_\beta}\,
	\underset{\pi \,\in\, \Pi_{m,j_\beta}}{\max}\big\{	
		\Delta^\text{tr}_\pAUC(\pi^*,\pi) \,-\, w^\top\big(\phi((S_+,Z),\pi^*) - \phi((S_+,Z),\pi\big)\big\}
	.
\label{eqn:pauc-struct-max-general}
\end{eqnarray}
The surrogate is a maximum of convex functions in $w$, and is hence convex in $w$. Even here, it turns out the above maximum is attained by the top $j_\beta$ negatives according to $w$.

\begin{prop}
\label{prop:pauc-struct-max-general}
Let $\bar{Z} = \{\bar{z}_1, \ldots, \bar{z}_{j_\beta}\}$ be the set of instances in the top $j_\beta$ positions in the ranking of negative instances (in descending order of scores) by $w^\top x$. Then the maximum value of the objective in \Eqn{eqn:pauc-struct-max-general} is attained at $\bar{Z}$.
\end{prop}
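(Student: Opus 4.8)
The plan is to establish \Prop{prop:pauc-struct-max-general} by an exchange argument carried out directly on the pair $(Z,\pi)$, rather than by first collapsing the inner maximum to a per-negative monotone quantity as was done for the $\alpha=0$ case in \Prop{prop:pauc-struct-max-special}. Writing $G(Z)$ for the inner maximum over $\pi\in\Pi_{m,j_\beta}$ appearing in \Eqn{eqn:pauc-struct-max-general}, it suffices to show $G(\bar{Z})\geq G(Z)$ for every admissible subset $Z$ of size $j_\beta$. I would begin by expanding the penalty term: since $\pi^*=\mathbf{0}$, we have $\phi((S_+,Z),\pi^*)-\phi((S_+,Z),\pi)=\frac{1}{mj_\beta}\sum_{i}\sum_{j}\pi_{ij}(x_i^+-z_j)$, so the inner objective reads $\Delta^\text{tr}_\pAUC(\pi^*,\pi)-\frac{1}{mj_\beta}\sum_{i}\sum_{j}\pi_{ij}\,w^\top(x_i^+-z_j)$. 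To make $(j)_\pi$ unambiguous I would fix one total order on $S_+\cup Z$ consistent with $\pi$.

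The heart of the argument is a single swap. Suppose $Z\neq\bar{Z}$, and choose $z_{\mathrm{out}}\in Z\setminus\bar{Z}$ and $z_{\mathrm{in}}\in\bar{Z}\setminus Z$; because $z_{\mathrm{in}}$ lies in the top $j_\beta$ negatives while $z_{\mathrm{out}}$ does not, we automatically have $w^\top z_{\mathrm{in}}\geq w^\top z_{\mathrm{out}}$. I would then form $Z'=(Z\setminus\{z_{\mathrm{out}}\})\cup\{z_{\mathrm{in}}\}$ together with the ordering $\pi'$ obtained by placing $z_{\mathrm{in}}$ in exactly the slot, say column $k$, that $z_{\mathrm{out}}$ occupied, leaving every other instance fixed. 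Two observations drive the proof: (i) $\pi'$ has the identical matrix entries and the identical negative ranks as $\pi$, so it is again a valid member of $\Pi_{m,j_\beta}$ and, crucially, the set of bottom $j_\beta-j_\alpha$ slots is unchanged, whence $\Delta^\text{tr}_\pAUC(\pi^*,\pi')=\Delta^\text{tr}_\pAUC(\pi^*,\pi)$; and (ii) the only term of the linear penalty that changes is column $k$, and the objective changes by $\frac{1}{mj_\beta}\big(\sum_i\pi_{ik}\big)\,w^\top(z_{\mathrm{in}}-z_{\mathrm{out}})\geq 0$. Hence the value of the inner objective at $(Z',\pi')$ is at least its value at $(Z,\pi)$.

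To finish, I would iterate: since $|Z\setminus\bar{Z}|=|\bar{Z}\setminus Z|$, the elements can be paired and the swap applied at most $j_\beta$ times, transforming $Z$ into $\bar{Z}$ while carrying along a valid ordering matrix whose objective value never decreases. Taking $(Z,\pi)$ to realize $G(Z)$ then yields $G(\bar{Z})\geq G(Z)$, so the outer maximum in \Eqn{eqn:pauc-struct-max-general} is attained at $\bar{Z}$, as claimed. This mirrors the monotonicity step used in \Thm{thm:pauc-general-max}.

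I expect the main obstacle to be verifying observation (i) rigorously. Unlike the $[0,\beta]$ case, where the inner maximum separates into a sum of per-negative hinge contributions and the argument reduces to the monotonicity of a single function $r$, here the truncated loss $\Delta^\text{tr}_\pAUC$ couples the negatives through the ranking, and $\pi$ itself does not pin down the negative-versus-negative order. The delicate point is therefore to argue that substituting a higher-scoring negative into a \emph{fixed} rank slot simultaneously (a) keeps $\pi'$ inside $\Pi_{m,j_\beta}$, (b) preserves which slots count toward the truncated loss, and (c) weakly improves the linear penalty --- so that the loss is held exactly constant while the penalty can only help. Ties in the scores, where $w^\top z_{\mathrm{in}}=w^\top z_{\mathrm{out}}$, are harmless and affect only the naming of the maximizer.
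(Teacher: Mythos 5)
Your proof is correct and rests on the same key observation as the paper's: for a fixed ordering matrix $\pi$, the truncated loss $\Delta^\text{tr}_\pAUC(\pi^*,\pi)$ and the $x_i^+$ terms do not depend on which negatives make up $Z$, while the only $Z$-dependent term, $\sum_{j}\big(\sum_i \pi_{ij}\big)\,w^\top z_j$, has nonnegative coefficients and hence only improves when a negative is replaced by a higher-scoring one. The paper carries out this replacement wholesale (after interchanging the two maxima and sorting $Z$ by score, it notes that $\bar{Z}$ elementwise dominates any other subset), whereas you realize it through a sequence of single swaps; the difference is purely presentational.
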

\begin{proof}
For any subset of negative instances $Z = \{{z}_1, \ldots, {z}_{j_\beta}\} \subseteq S_-$, we assume w.l.o.g. that $w^\top z_1 \geq \ldots \geq w^\top z_{j_\beta}$ (this ensures that the identity of each $z_j$ is unique). Expanding the objective in \Eqn{eqn:pauc-struct-max-general} then gives us: %for a fixed subset of negative instances $Z = \{{z}_1, \ldots, {z}_{j_\beta}\}$ and fixed ordering matrix $\pi \in \Pi_{m, j_\beta}$, we get:
\begin{eqnarray*}
{\max_{Z = \{{z}_1, \ldots, {z}_{j_\beta}\} \subseteq S_-}\,
	\max_{\pi \,\in\, \Pi_{m,j_\beta}}
	\frac{1}{m(j_\beta-j_\alpha)}\sum_{i=1}^m\bigg[
\sum_{j=j_\alpha+1}^{j_{\beta}} \pi_{i(j)_\pi} \,-\, 
\sum_{j=1}^{j_{\beta}} \pi_{ij}\, w^\top x_i^+ \,+\, \sum_{j=1}^{j_{\beta}} \pi_{ij}\, w^\top z_j\bigg]}.
\end{eqnarray*}
Interchanging the two max over finite sets, we equivalently have:
\begin{eqnarray*}
%\lefteqn
%{
\frac{1}{m(j_\beta-j_\alpha)}  
	\max_{\pi \,\in\, \Pi_{m,j_\beta}}
\bigg[
	\sum_{i=1}^m\bigg[
\sum_{j=j_\alpha+1}^{j_{\beta}} \pi_{i(j)_\pi} -
\sum_{j=1}^{j_{\beta}} \pi_{ij}\, w^\top x_i^+ 
\bigg]
%}
%\\%[-15pt]
%&&
%\hspace{6.5cm}
\,+\, 
\max_{Z = \{{z}_1, \ldots, {z}_{j_\beta}\}\subseteq S_-}
\sum_{j=1}^{j_{\beta}} q_j\, w^\top z_j
\bigg],
\end{eqnarray*}
where $q_j = \sum_{i=1}^m \pi_{ij}$ is a positive integer between 0 and $m$. Clearly, the only term in the above objective that depends on instances in $Z$ is the third term. For any fixed $\pi$ (or equivalent $q$), this term is maximized when the subset $Z$ contains the negatives with the highest scores by $w$ and in particular, the top $j_\beta$ ranked negatives by $w$.%, which completes our proof.
\end{proof}
%\vspace{-5pt}

%We wish to note that 
Unlike the partial AUC in $[0, \beta]$, the above structural SVM surrogate is not equivalent to the non-convex hinge surrogate in \Eqn{eqn:emp-pauc-hinge-general}  for $[\alpha,\beta]$ intervals and is a looser upper bound on the partial AUC risk. On the other hand, compared to the na\"{i}ve structural SVM surrogate in \Eqn{eqn:Naive} for the $[\alpha,\beta]$ range, the joint feature map here is only defined on a subset of negatives, and as a result, the proposed surrogate is tighter and lays more emphasis on good performance in the given range \cite{NarasimhanAg13b} (see Figure \ref{fig:surrogates}). This will become clear from the characterization provided below. %We will discuss this more elaborately later in this section when we provide a detailed characterization for the proposed structural SVM surrogate, and where it will also become evident why this surrogate can be seen as a convex relaxation to the non-convex hinge surrogate. %We first describe how the surrogate can be optimized efficiently using the cutting plane method.

%\vspace{-4pt}
\subsection{Characterization for $\widehat{R}^\tight_{\pAUC(\alpha,\beta)}$}
%\vspace{-2pt}
Before proceeding to develop a method for optimizing the proposed structural SVM surrogate for $[\alpha, \beta]$ intervals, we analyze how the surrogate is related to the original partial AUC risk in \Eqn{eqn:emp-pauc-risk}, and to the other surrogates discussed in Section \ref{sec:surrogates}. %In the case of the structural SVM surrogate for the $[0,\beta]$ range, the proposed surrogate was exactly equivalent to the corresponding hinge surrogate; however, here such an equivalence does not hold. 
These relationships were obvious for the $[0,\beta]$ case, as the prescribed structural SVM surrogate there was exactly equivalent to the associated hinge surrogate; for the $[\alpha,\beta]$, it is not immediate from the surrogate whether it closely mimics the partial AUC risk. We know so far that the proposed structural SVM surrogate for $[\alpha,\beta]$ intervals upper bounds the partial AUC risk; below we give a more detailed characterization:\footnote{We note that in the characterization result provided in the conference version of this paper \cite{NarasimhanAg13b} (Theorem 2), there are no terms $\eta_{[0,\alpha]}$ and $\eta^+_{[0,\alpha]}$; we have corrected this error here.}
%

%\vspace{-1cm}
\begin{theorem}
\label{thm:surrogate-characterization}
Let $0< \alpha < \beta \leq 1$. Then for any sample $S \in X^m\times X^n$ and $w\in\R^d$:
\begin{eqnarray*}
\widehat{R}^{\hinge,+}_{\pAUC(\alpha,\beta)}(w; S) \,+\, \eta^+_{[0,\alpha]}(w)
~~\leq~~
\widehat{R}^\tight_{\pAUC(\alpha,\beta)}(w; S) ~~ \leq ~~
	\widehat{R}^\hinge_{\pAUC(\alpha,\beta)}(w; S) \,+\, \eta_{[0,\alpha]}(w),
\end{eqnarray*}
where $\widehat{R}^\hinge_{\pAUC(\alpha,\beta)}$ is the hinge surrogate in \Eqn{eqn:emp-pauc-hinge-general} and $\widehat{R}^{\hinge,+}_{\pAUC(\alpha,\beta)}$ is a version of this surrogate defined on a subset of positive instances:
\begin{eqnarray*}
		\widehat{R}^{\hinge,+}_{\pAUC(\alpha,\beta)}(w;S) = 
\frac{1}{m(j_\beta-j_\alpha)} \sum_{i:  w^\top x^+_i < w^\top x^-_{(j_\alpha)_w}}\sum_{j=1}^{j_\alpha} 
			\big(1-w^\top(x_i^+ - x^-_{(j)_w})\big)_+,
\end{eqnarray*}
while $\eta_{[0,\alpha]}$ and $\eta^+_{[0,\alpha]}$ are positive terms which penalize misrankings against negatives in the FPR range $[0,\alpha]$ with a margin of zero:
$$
	\eta_{[0,\alpha]}(w) = 
\frac{1}{m(j_\beta-j_\alpha)} \sum_{i=1}^m\sum_{j=1}^{j_\alpha} 
			\big(-w^\top(x_i^+ - x^-_{(j)_w})\big)_+;
$$
$$
		\eta^+_{[0,\alpha]}(w) = 
\frac{1}{m(j_\beta-j_\alpha)} \sum_{i:  w^\top x^+_i < w^\top x^-_{(j_\alpha)_w}}\sum_{j=1}^{j_\alpha} 
			\big(-w^\top(x_i^+ - x^-_{(j)_w})\big)_+.
$$
Moreover, if $|w^\top x^+_i - w^\top x^-_j| \geq 1\,\, \forall\, i \in \{1,\ldots,m\}, \,j \in \{1,\ldots,n\}$, then the lower and upper bounds match and we have:
\[
\widehat{R}^\tight_{\pAUC(\alpha,\beta)}(w; S) = \widehat{R}^\hinge_{\pAUC(\alpha,\beta)}(w; S) \,+\, \eta_{[0,\alpha]}(w).
\]
\end{theorem}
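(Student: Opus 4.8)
The plan is to first invoke Proposition~\ref{prop:pauc-struct-max-general}, which pins the outer maximiser to $\bar{Z}$, the top $j_\beta$ negatives under $w$ (sorted so that $w^\top\bar{z}_1 \geq \cdots \geq w^\top\bar{z}_{j_\beta}$), and then to reduce the remaining maximisation over $\pi\in\Pi_{m,j_\beta}$ to a family of independent one-dimensional problems. Writing $a_{ij} = w^\top(x^+_i - \bar{z}_j)$ and substituting $Z = \bar{Z}$ into \Eqn{eqn:pauc-struct-max-general}, a valid $\pi$ is equivalent to a choice of a ranking $\rho$ of the $j_\beta$ negatives together with an insertion point $t_i\in\{0,\ldots,j_\beta\}$ for each positive $x^+_i$; since $\Delta^\text{tr}_\pAUC$ counts exactly the negatives in positions $j_\alpha+1,\ldots,j_\beta$ that are ranked below $x^+_i$, and since the feature-map term contributes $-\sum_{p=1}^{t_i}a_{i,\rho(p)}$, the objective separates as $\frac{1}{m(j_\beta-j_\alpha)}\sum_{i=1}^m\big[\max(0,t_i-j_\alpha) - \sum_{p=1}^{t_i}a_{i,\rho(p)}\big]$.

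The step I expect to be the main obstacle is that $\rho$ is shared across all positives, so a priori the per-row problems are coupled. The plan is to decouple them via the following observation: for any fixed count $t$, the bracketed quantity for row $i$ is maximised over $\rho$ precisely when positions $1,\ldots,t$ hold the $t$ negatives of smallest $a_{i,\cdot}$, equivalently of largest score $w^\top\bar{z}$; and because the $i$-dependence of $a_{ij}$ is only the additive constant $w^\top x^+_i$, ``largest score'' is the same ordering for \emph{every} $i$. Hence the score ordering $\rho(p)=p$ simultaneously maximises every row and is therefore globally optimal, giving the closed form $\widehat{R}^\tight_{\pAUC(\alpha,\beta)}(w;S) = \frac{1}{m(j_\beta-j_\alpha)}\sum_{i=1}^m\max_{0\le k\le j_\beta}g_i(k)$, where $g_i(k) = \max(0,k-j_\alpha) - \sum_{j=1}^{k}a_{ij}$. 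I would also check that inserting positives into the fixed score order yields a genuine element of $\Pi_{m,j_\beta}$, so the row-wise optima are jointly feasible.

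Given this closed form, the upper bound is a term-by-term relaxation. For any $k$ I would split $\max(0,k-j_\alpha)=\sum_{j=j_\alpha+1}^{k}1$ and use $1-a_{ij}\le(1-a_{ij})_+$ together with $-a_{ij}\le(-a_{ij})_+$ to get $g_i(k)\le\sum_{j=j_\alpha+1}^{j_\beta}(1-a_{ij})_+ + \sum_{j=1}^{j_\alpha}(-a_{ij})_+$ (extending the first sum to $j_\beta$ only adds nonnegative terms, and the case $k\le j_\alpha$ is covered by the second sum alone). Summing over $i$ yields $\widehat{R}^\tight_{\pAUC(\alpha,\beta)}\le\widehat{R}^\hinge_{\pAUC(\alpha,\beta)} + \eta_{[0,\alpha]}$.

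For the lower bound I would evaluate $g_i$ only on the hard positives with $w^\top x^+_i < w^\top\bar{z}_{j_\alpha}$, for which $a_{ij}<0$ whenever $j\le j_\alpha$. On such rows $g_i$ is strictly increasing on $[0,j_\alpha]$ and, since $a_{ij}$ is nondecreasing in $j$, the optimal prefix of the increments $(1-a_{ij})$ is taken beyond $j_\alpha$; thus $\max_k g_i(k) = g_i(j_\alpha)+\sum_{j=j_\alpha+1}^{j_\beta}(1-a_{ij})_+$, which is exactly the contribution of this positive to $\widehat{R}^{\hinge,+}_{\pAUC(\alpha,\beta)}+\eta^+_{[0,\alpha]}$, while on the remaining rows the lower-bound contribution is $0\le\max_k g_i(k)$. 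Finally, under the margin hypothesis every $a_{ij}$ satisfies $a_{ij}\ge 1$ or $a_{ij}\le -1$, so there is a single index $j^\star$ with $a_{ij}\le-1$ for $j<j^\star$ and $a_{ij}\ge 1$ for $j\ge j^\star$; the per-row optimum is then $g_i(\min(j^\star-1,j_\beta))$, and a short case analysis on the position of $j^\star$ relative to $j_\alpha$ and $j_\beta$ shows this equals the upper-bound expression $\sum_{j=j_\alpha+1}^{j_\beta}(1-a_{ij})_+ + \sum_{j=1}^{j_\alpha}(-a_{ij})_+$, so the two bounds coincide and the claimed equality follows.
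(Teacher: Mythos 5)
Your proposal is correct and follows essentially the same route as the paper's proof: after pinning $Z=\bar Z$ via \Prop{prop:pauc-struct-max-general} and reducing to the row-decomposed objective over score-sorted negatives (the content of \Thm{thm:Pi-w} and \ref{opt:mvc-final}, which you re-derive from scratch via the insertion-point argument), your upper bound is the same entrywise hinge relaxation, and your exact per-row maximum $\max_k g_i(k)$ on the hard positives coincides with the specific feasible matrix $\hat\pi$ that the paper substitutes to obtain the lower bound --- your version simply makes explicit that this $\hat\pi$ is row-optimal, which also turns the equality case into a direct computation. One caveat: your identification of the hard-row optimum $\sum_{j=1}^{j_\alpha}(-w^\top(x_i^+-x^-_{(j)_w}))_+ + \sum_{j=j_\alpha+1}^{j_\beta}(1-w^\top(x_i^+-x^-_{(j)_w}))_+$ with the contribution to $\widehat{R}^{\hinge,+}_{\pAUC(\alpha,\beta)}+\eta^+_{[0,\alpha]}$ requires the inner sum in $\widehat{R}^{\hinge,+}_{\pAUC(\alpha,\beta)}$ to run over $j\in\{j_\alpha+1,\ldots,j_\beta\}$ (i.e., the hinge surrogate restricted to the hard positives), not over $j\in\{1,\ldots,j_\alpha\}$ as the statement literally reads; the paper's own proof arrives at exactly the same expression and makes the identical identification, so this is an indexing typo in the theorem statement rather than a gap in your argument.
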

The proof is provided in Appendix \ref{app:surrogate-characterization-proof}. 
Note that in both the lower and upper bounds, certain positive-negative pairs are penalized with a larger margin than others; in particular, those involving the negative  instances in positions $j_{\alpha} + 1$ to $j_{\beta}$ are penalized with a margin of 1, while the rest are penalized with zero margin. This confirms the surrogate's focus on a select portion of the ROC curve in the range $[\alpha,\beta]$. 

Further, if $w \in \R^d$ is such that the difference in scores assigned to any pair of positive-negative training instances is either greater than 1 or lesser than $-1$ (which is indeed the case when $w$ has a sufficiently large norm and the training instances are all unique), then the characterization is more precise. Here the structural SVM surrogate is exactly equal to the sum of two terms; the first is the non-convex hinge surrogate; the second is a positive term $\eta_{[0,\alpha]}$ that penalizes misrankings w.r.t.\ negatives in positions $1$ to $j_{\alpha}$, and which can be seen as enforcing convexity in the surrogate. % (see \Eqn{eqn:emp-pauc-hinge-general})

While the proposed structural SVM surrogate is not equivalent to the hinge surrogate, it can clearly be interpreted as a convex approximation to the hinge surrogate for the $[\alpha,\beta]$ range. Also, a similar characterization for the na\"{i}ve structural SVM surrogate in \Eqn{eqn:Naive} contains additional terms involving negative instances ranked in positions $j_\beta+1, \ldots, n$ outside the specified FPR range \cite{NarasimhanAg13b}; the proposed surrogate does not contain these terms and is therefore a tighter upper bound on the partial AUC risk  (also see Figure \ref{fig:surrogates}).

%However, it is seen that even in simple settings, the proposed surrogate is a looser approximation to the originally partial AUC risk compared to the hinge surrogate. In particular, there are settings, where the structural SVM surrogate does not attain its minimum value even when a scoring function separates all positive instances from the negatives in positions $j_{\alpha} + 1$ to $j_{\beta}$ with an arbitrarily large margin. On the other hand, the hinge surrogate does not suffer from this drawback and in this sense, is a better surrogate to optimize. 

\subsection{Cutting Plane Method for Optimizing $\widehat{R}^\tight_{\pAUC(\alpha,\beta)}$}
Having constructed a tight structural SVM surrogate for $[\alpha,\beta]$ intervals, we now provide an efficient method to optimize it. In particular, we would like to minimize a regularized form of \Eqn{eqn:pauc-struct-max-general}, which results in the following quadratic program:
\begin{equation*}
	\min_{w,\, \xi \geq 0} \, \frac{1}{2}||w||^2 + C\xi
%\tag{OP3}
%\label{opt:structpauc-1}
\end{equation*}
\vspace{-1cm}
\begin{eqnarray*}
	\lefteqn{\text{s.t.}~~\forall Z \subseteq S_-, \, |Z| = j_\beta, ~~ \forall \pi\in\Pi_{m,j_\beta}:~~}\\
	&\hspace{1cm}&
	w^\top \big( \phi((S_+, Z), \pi^*) - \phi((S_+, Z),\pi) \big) ~ \geq ~
	\Delta^\text{tr}_\pAUC(\pi^*,\pi) - \xi 
	.
%\\[-1cm]
\end{eqnarray*}
Since the optimization problem has an exponential number of constraints, we once again employ the cutting plane method for solving it (see Algorithm \ref{algo:cutting-plane}). Recall that the crucial step in the cutting plane solver is to  efficiently compute the most-violated constraint in each iteration. Below, we provide an algorithm for performing this combinatorial optimization within the cutting plane method in polynomial time.

The specific combinatorial optimization problem we wish to solve has the form:
	%\vspace{-10pt}
\begin{eqnarray*}
	\underset{\substack{Z \subseteq S_-, \, |Z| = j_\beta\\\pi  \,\in\,\Pi_{m,j_\beta}}}{\argmax}
		\big\{\Delta^\text{tr}_\pAUC(\pi^*,\pi) \,-\, w^\top\big(\phi((S_+,Z),\pi^*) - \phi((S_+,Z),\pi\big)\big\}.
	%\vspace{-10pt}
\end{eqnarray*}
%Recall that 
In the case of the full AUC or the partial AUC in $[0, \beta]$, the corresponding combinatorial optimization problem decomposes into simpler problems involving individual $\pi_{ij}$'s. For the general partial AUC, solving this problem is however trickier as the set of negatives involved in the summation in $\Delta^\text{tr}_\pAUC$ is different for different ordering matrices.
In this case, we will no longer be able to optimize each $\pi_{ij}$ independently, as the resulting matrix need not correspond to a valid ordering. Nevertheless, we will be able to formulate an equivalent problem over a restricted search space of ordering matrices, where each \emph{row} of the matrix can be optimized separately and efficiently. 

To this end, we first observe from \Prop{prop:pauc-struct-max-general} that it suffices to maximize the above optimization objective over ordering matrices in $\Pi_{m,j_\beta}$, fixing $Z$ to the subset of top $j_\beta$ ranked negative instances $\bar{Z} = \{\bar{z}_1, \ldots, \bar{z}_{j_\beta}\}$ according to $w$ (where w.l.o.g.\ we assume that $\bar{z}_j$ denotes the $j$-th instance in the ranking of negatives):
\begin{equation}
		\underset{\pi  \,\in\,\Pi_{m,j_\beta}}{\argmax}\,
		\big\{\Delta^\text{tr}_\pAUC(\pi^*,\pi) \,-\, w^\top\big(\phi((S_+,\bar{Z}),\pi^*) - \phi((S_+,\bar{Z}), \pi)\big)\big\}.
		\tag{\textup{OP2}}
		\label{opt:most-violated-general}		
\end{equation}
%which is same as solving:
%\begin{equation}
%	\underset{\pi  \,\in\,\Pi_{m,j_\beta}}{\argmax}\,
%		\sum_{i=1}^m\bigg[\sum_{j=1}^{j_\beta}\pi_{i(j)_\pi} \,-\, \sum_{j=1}^{j_\beta}\pi_{ij}\,w^\top (x^+_i- z_j)\bigg],
%		\tag{\textup{OP3}}
%		\label{opt:most-violated-general}
%\end{equation}
%or
%\begin{eqnarray*}
%	\underset{\pi  \,\in\,\Pi_{m,j_\beta}}{\argmax}\,
%		\sum_{i=1}^m\bigg[-\sum_{j=1}^{j_\alpha}\pi_{i(j)_\pi} w^\top (x^+_i- z_j) \,+\, \sum_{j=j_\alpha+1}^{j_\beta}\pi_{i(j)_\pi} \big(1-w^\top (x^+_i- z_j)\big)\bigg].
%\end{eqnarray*}

\textbf{Restricted search space of ordering matrices.} Notice that among the negative instances in $\bar{Z}$, it is only the bottom ranked $j_\beta - j_\alpha$ negatives that appear in $\Delta^\text{tr}_\pAUC$. As noted above, this subset of negative instances is different for different ordering matrices, and hence computing the argmax requires a further reformulation. In particular, we shall next show that the above argmax can be equivalently computed over a restricted set of ordering matrices, given for any $w \in \R^d$ as:
%%~\\[-0.8cm]
\[
\Pi^w_{m,j_\beta} =  \big\{ \pi \in \Pi_{m,j_\beta} ~\big|~ \forall i, j_1<j_2: \pi_{i,(j_1)_w} \geq \pi_{i,(j_2)_w} \big\}
	\,,
	%\vspace{-5pt}
\]
where as before $(j)_w$ denotes the index of the $j$-th ranked negative instance in $S_-$ or equivalently in $\bar{Z}$, when the instances are sorted (in descending order) by $w^\top x$. This is the set of all ordering matrices $\pi$ in which any two negative instances that are separated by a positive instance are sorted according to $w$. We then have:
\begin{theorem}
\vspace{-5pt}
\label{thm:Pi-w}
The solution $\bar{\pi}$ to \ref{opt:most-violated-general} lies in $\Pi^w_{m,j_\beta}$.\footnote{We note that a similar observation was made for the optimizer associated with the mean average precision (MAP) objective in \cite{svmmap}.}
%\vspace{-5pt}
\end{theorem}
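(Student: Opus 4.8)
The plan is to show that \emph{some} maximizer of \ref{opt:most-violated-general} can be taken in $\Pi^w_{m,j_\beta}$, via an exchange (bubble-sort) argument on the ordering of the negatives induced by $\pi$. First I would recall, from the expansion used in the proof of \Prop{prop:pauc-struct-max-general}, that once $Z$ is fixed to $\bar Z$ the objective of \ref{opt:most-violated-general} equals, up to the positive constant $\frac{1}{m(j_\beta-j_\alpha)}$ and an additive term independent of $\pi$, the quantity
\[
G(\pi) = \sum_{i=1}^m\Big[\sum_{j=j_\alpha+1}^{j_\beta}\pi_{i,(j)_\pi} + \sum_{j=1}^{j_\beta}\pi_{ij}\,w^\top\bar z_j - \sum_{j=1}^{j_\beta}\pi_{ij}\,w^\top x_i^+\Big].
\]
I would then note the elementary characterization that $\pi\in\Pi^w_{m,j_\beta}$ exactly when the negatives are sorted by decreasing $w$-score inside a fixed total order consistent with $\pi$: if they are so sorted, then for $j_1<j_2$ the negative $\bar z_{(j_1)_w}$ lies above $\bar z_{(j_2)_w}$, so $\pi_{i,(j_2)_w}=1$ forces $\pi_{i,(j_1)_w}=1$, which is precisely the monotonicity defining $\Pi^w_{m,j_\beta}$.

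Next I would take any maximizer $\bar\pi$ and, as long as its induced ordering of negatives is not $w$-sorted, select two negatives $\bar z_a,\bar z_b$ at \emph{consecutive} negative-ranks $k,k+1$ with $\bar z_b$ ranked above $\bar z_a$ but $w^\top\bar z_a\ge w^\top\bar z_b$; such an adjacent inversion always exists when the order is unsorted. Swapping the positions of these two instances is a transposition of a valid total order, so the new matrix $\pi'$ is again in $\Pi_{m,j_\beta}$, and I would show $G(\pi')\ge G(\pi)$. Letting $M$ be the set of positives ranked strictly between $\bar z_b$ and $\bar z_a$, each $i\in M$ has $\pi_{i,b}=1,\pi_{i,a}=0$ beforehand and $\pi'_{i,a}=1,\pi'_{i,b}=0$ afterward, while for $i\notin M$ the (equal) entries $\pi_{i,a},\pi_{i,b}$ are untouched. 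Consequently $\sum_j\pi_{ij}$ is unchanged for every $i$ (so the third sum in $G$ is invariant), and the middle sum changes by exactly $|M|\,(w^\top\bar z_a-w^\top\bar z_b)\ge 0$.

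The main obstacle — the one step needing genuine care — is proving the loss term $\sum_i\sum_{j=j_\alpha+1}^{j_\beta}\pi_{i,(j)_\pi}$ is invariant under the swap, since the counted negatives are those in $\pi$-ranks $j_\alpha+1,\dots,j_\beta$ and this index set interacts with the transposition. I would argue by cases on where ranks $k,k+1$ sit relative to the threshold $j_\alpha$. If both are $\le j_\alpha$ or both are $>j_\alpha$, the counted set is preserved and the per-$i$ contribution $\pi_{i,a}+\pi_{i,b}$ (equal to $1$ for $i\in M$, unchanged for $i\notin M$) is preserved, so the loss is unchanged. The only delicate case is $k=j_\alpha$, $k+1=j_\alpha+1$, where the single counted negative at rank $j_\alpha+1$ switches from $\bar z_a$ to $\bar z_b$; here a direct check shows the counted contribution equals $\sum_{i\notin M}\pi_{i,a}$ both before and after (the $M$-terms vanish on both sides), so the loss is again unchanged.

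Combining the three pieces gives $G(\pi')\ge G(\pi)$, and since each swap strictly reduces the number of $w$-inversions among the negatives, finitely many swaps produce a maximizer whose negatives are fully $w$-sorted. By the characterization above, this maximizer lies in $\Pi^w_{m,j_\beta}$, which proves the claim. I expect the feature-map terms to be routine; the care will all go into the boundary case of the loss invariance and into confirming that each transposition keeps $\pi$ a valid ordering matrix.
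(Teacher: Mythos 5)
Your proposal is correct and follows essentially the same exchange argument as the paper: swap two negatives that are inverted relative to the $w$-order across a separating positive, observe that the loss term $\Delta^\text{tr}_\pAUC$ is unchanged while the linear term does not decrease. The paper performs a single swap of an arbitrary violating pair $\big((j_1)_w,(j_2)_w\big)$ and concludes by contradiction, merely asserting the loss invariance (``it can be seen that\dots''), whereas you iterate adjacent transpositions and explicitly verify the delicate boundary case where the swap straddles rank $j_\alpha$ --- a detail worth spelling out --- and your ``some maximizer'' formulation also cleanly handles tied scores, where the paper's strict-increase contradiction would technically fail.
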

\begin{proof}
Suppose $\bar{\pi} \notin \Pi^w_{m,j_\beta}$. Then $\exists i,j_1<j_2$ such that $\bar{\pi}_{i,(j_1)_w} < \bar{\pi}_{i,(j_2)_w}$, i.e.\ such that $\bar{\pi}_{i,(j_1)_w} = 0$ and $\bar{\pi}_{i,(j_2)_w} = 1$. This means that $\bar{\pi}$ ranks $x^+_i$ above $x^-_{(j_1)_w}$ but below $x^-_{(j_2)_w}$. Now let us construct from $\bar{\pi}$ an ordering $\bar{\pi}'$ in which the instances $x^-_{(j_1)_w}$ and $x^-_{(j_2)_w}$ are swapped, i.e.\ for all $i'$ with $\bar{\pi}_{i',(j_1)_w} = 0$ and $\bar{\pi}_{i',(j_2)_w} = 1$, we set $\bar{\pi}'_{i',(j_1)_w} = 1$ and $\bar{\pi}'_{i',(j_2)_w} = 0$. Then it can be seen that while the loss term in the objective in \ref{opt:most-violated-general} is the same for $\bar{\pi}'$ as for $\bar{\pi}$, the second term increases, yielding a higher objective value. This contradicts the fact that $\bar{\pi}$ is a maximizer to \ref{opt:most-violated-general}. 
\end{proof}
It is further easy to see that for any $\pi \in \Pi^w_{m,j_\beta}$, $\pi_{i,(j)_\pi} = \pi_{i,(j)_w}$, as there always exists an ordering consistent with $\pi$ in which the negatives are all sorted according to $w$ (follows from the definition of $\Pi^w_{m,j_\beta}$). As a result, \ref{opt:most-violated-general} can be equivalently framed as:
%\displaybreak
\begin{eqnarray}
\lefteqn{
	\underset{\pi  \,\in\,\Pi^w_{m,j_\beta}}{\argmax}\,
		\big\{\Delta^\text{tr}_\pAUC(\pi^*,\pi) \,-\, w^\top\big(\phi((S_+,\bar{Z}),\pi^*) - \phi((S_+,\bar{Z}), \pi)\big)\big\}
	}\nonumber\\
	& 
	=
	&
	\underset{\pi  \,\in\,\Pi^w_{m,j_\beta}}{\argmax}\,
		\sum_{i=1}^m\bigg[\sum_{j=j_\alpha+1}^{j_\beta}\pi_{i(j)_w} \,-\, \sum_{j=1}^{j_\beta}\pi_{ij}\,w^\top (x^+_i- \bar{z}_j)\bigg]
		\nonumber\\
	&=&
	\underset{\pi  \,\in\,\Pi^w_{m,j_\beta}}{\argmax}\,
		\sum_{i=1}^m\bigg[
		%\underbrace{		
			-\sum_{j=1}^{j_\alpha}\pi_{i(j)_w} w^\top (x^+_i- \bar{z}_{j}) \,+\, \sum_{j=j_\alpha+1}^{j_\beta}\pi_{i(j)_w} \big(1-w^\top (x^+_i- \bar{z}_{j})\big)
		%}_{H^i_w(\pi_i)}
		\bigg]\nonumber\\
	&=&		
		\underset{\pi  \,\in\,\Pi^w_{m,j_\beta}}{\argmax}\,
		\sum_{i=1}^m H^i_w(\pi_i)~~~~\text{(say)},
		\label{eqn:mvc-Q}
		%\\[-1.25cm]\nonumber
\end{eqnarray}
where $\pi_i\in\{0,1\}^{j_\beta}$ denotes the $i$-th row of the ordering matrix $\pi$.

\textbf{Reduction to simpler problems.} With this reformulation, it turns out that each {row} $\pi_i$ can be considered separately, and moreover, that the optimization over each $\pi_i$ can be done efficiently. In particular, note that for each $i$, the $i$-th row of the optimal ordering matrix $\bar{\pi}$ to the above problem essentially corresponds to an interleaving of the lone positive instance $x^+_i$ with the list of negative instances sorted according to $w^\top \bar{z}_j$; thus each $\bar{\pi}_i$ is of the form 
\begin{eqnarray}
\bar{\pi}_{i,(j)_w} ~ = ~
	\begin{cases}
		1 & \mbox{if $j \in \{1,\ldots,r_i\}$} \\
		0 & \mbox{if $j \in \{r_i+1,\ldots,j_\beta\}$}
	\end{cases} 
\label{eqn:r_i}
\end{eqnarray}
for some $r_i\in\{0,1,\ldots,j_\beta\}$. In other words, the optimization over $\pi_i \in\{0,1\}^{j_\beta}$ reduces to an optimization over $r_i\in\{0,1,\ldots,j_\beta\}$, or equivalently to an optimization over $\pi_i \in Q^w_i$, where 
$
Q^w_i = \big\{ \pi_i \in \{0,1\}^{j_\beta} ~\big|~ \forall j_1 < j_2: \pi_{i,(j_1)_w} \geq \pi_{i,(j_2)_w} \big\} 
$
with $|Q^w_i| = j_\beta+1$.
Clearly, we have $\Pi^w_{m,j_\beta} = Q^w_1 \times \ldots \times Q^w_m$,
and therefore we can rewrite \Eqn{eqn:mvc-Q} as 
%%~\\[-0.7cm]
\begin{equation*}
%\bar{\pi} ~ = ~ 
	\underset{\pi\in Q^w_1 \times \ldots \times Q^w_m}{\operatorname{argmax}} \, \sum_{i=1}^m H^i_w(\pi_i) 
	\,.
\hfill{\tag{OP3}}
\label{opt:mvc-final}
\end{equation*}
%%~\\[-0.71cm]
Since the objective given above decomposes into a sum of terms involving the individual rows $\pi_i$, \ref{opt:most-violated-general} can be solved by maximizing ${H}^i_w(\pi_i)$ over each row $\pi_i\in R^w_i$ separately. 

\begin{figure}[t]
%\vspace{-12pt}
\begin{algorithm}[H]
\caption{Find Most-Violated Constraint for pAUC in $[\alpha, \beta]$}
\label{algo:mvc-general}
\begin{algorithmic}[1]
\small{
\STATE \textbf{Inputs:} $S = (S_+, S_-)$, $\alpha$, $\beta$, $w$
\STATE Set $\bar{Z} = \{\bar{z}_1, \ldots, \bar{z}_{j_\beta}\}$ as the set of top $j_\beta$ ranked instances among $S_-$ (in descending order of scores) by $w^\top x$. Further, let $\bar{z}_j$ denote the negative instance in position $j$ of the ranking.\\
\STATE 
\textbf{For} $i = 1, \dots, m$  \textbf{do}
\STATE \hspace{0.5cm}
Optimize over $r_i \in \{0,\ldots,j_\alpha\}$:
%\STATE 
\\
\hspace{0.5cm} 
\(
{\pi}_{i,j}  =  
	\begin{cases}
		\1( w^\top x^+_i \,-\,  w^\top \bar{z}_j \,\leq\, 0 ) \,,	&	j \in \{1,\ldots, j_\alpha\} \\
		0						\,,	& 	j \in \{j_\alpha+1, \ldots, j_\beta\} \\[2pt]
	\end{cases}
\)
\STATE \hspace{0.5cm}
Optimize over $r_i \in \{j_\alpha+1,\ldots,n\}$:
%\STATE 
\\
\hspace{0.5cm}
\(
{\pi}'_{i,j} = 
	\begin{cases}
		1	\,,&						j \in \{1, \ldots, j_\alpha\}	    	\\
		\1( w^\top x^+_i \,-\,  w^\top \bar{z}_j \,\leq\, 1 )  \,,&	j \in \{j_\alpha + 1, \ldots, j_\beta\}
		\\[4pt]
	\end{cases}
\)
\STATE \hspace{0.5cm} \textbf{If}~ $H^i_w(\pi_i) > H^i_w(\pi'_i)$ ~\textbf{Then}~ $\bar{\pi}_i = \pi_i$ ~\textbf{Else}~ $\bar{\pi}_i = \pi'_i$
~~(see \ref{opt:mvc-final})
\STATE
\textbf{End For}
%%%%%%%%
%%%%%%%%
\STATE \textbf{Output:} $(\bar{Z}, \bar{\pi})$
}
\end{algorithmic}
\end{algorithm}	
%\vspace{-10pt}
\end{figure}

\textbf{Time complexity.} 
In a straightforward implementation of this optimization, for each $i\in\{1,\ldots,m\}$, one would evaluate the term $H^i_w$ for each of the $j_\beta+1$ values of $r_i$ (corresponding to the $j_\beta+1$ choices of $\pi_i\in Q^w_i$; see \Eqn{eqn:r_i}) and select the optimal among these; each such evaluation takes $O(j_\beta)$ time, yielding an overall time complexity of $O(mj_\beta^2)$. It turns out, however, that one can partition the $j_\beta+1$ values of $r_i$ into two groups, $\{0,\ldots,j_\alpha\}$, and $\{j_\alpha+1,\ldots,j_\beta\}$, such that the optimization over $r_i$ in each of these groups (after the negative instances have been sorted according to $w$) can be implemented in $O(j_\beta)$ time.
%(the optimal among the 3 selected values can then be chosen in constant time).
 A description is given in Algorithm~\ref{algo:mvc-general}, where the overall time complexity is $O(mj_\beta + n\log(n))$. 
 
 Again, using a more compact representation of the orderings, it is possible to further reduce the computational complexity of Algorithm~\ref{algo:mvc-general} to $O((m+j_\beta)\log(m+j_\beta) + n\log(n))$ (see Appendix \ref{app:mvc-efficient} for details). Note that the time complexity for finding the most-violated constraint for partial AUC (with $\beta < 1$) has a better dependence on the number of training instances compared to that for the usual AUC \cite{svmperf}. However we find in our experiments in Section \ref{sec:expts} that the overall running time of the cutting plane method is often higher for partial AUC compared to full AUC, because the number of calls made to the inner combinatorial optimization routine turns out to be higher in practice for partial AUC.

We thus have an efficient method for optimizing a convex surrogate on the partial AUC risk in $[\alpha,\beta]$. 
%We shall next discuss why this surrogate promotes good partial AUC performance in the specified interval, and also see that the surrogate can be interpreted a convex approximation to the earlier non-convex hinge loss based surrogate in \Eqn{eqn:emp-pauc-hinge-general}.
%
While the surrogate optimized by our method is a tighter approximation to the partial AUC risk than the na\"{i}ve structural SVM surrogate considered initially in Section \ref{sec:surrogates}, we known from the characterization result in Theorem \ref{thm:surrogate-characterization} that the surrogate does contain terms %the insistence of a convex surrogate for partial AUC allowed us to design an efficient optimization solver that is guaranteed to converge to the global surrogate optimum, this also meant that that we had to work with a surrogate
%we know from Theorem \ref{thm:surrogate-characterization} that the convex surrogate optimized is a
involving negative instances in positions $1, \ldots, j_\alpha$ outside the specified FPR range. On the other hand, the hinge surrogate in \Eqn{eqn:emp-pauc-hinge-general} for $[\alpha,\beta]$ intervals, while being non-convex, serves as a tighter approximation to the partial AUC risk (see Figure \ref{fig:surrogates}). %, which happens to be non-convex in this case. In particular, we have seen that the prescribed convex surrogate, while being tighter than the earlier na\"{i}ve structural SVM surrogate, does contain additional terms involving negative instances in positions $1, \ldots, j_\alpha$ outside the specified FPR range. 
This motivates us to next develop a method for directly optimizing the non-convex hinge surrogate for $[\alpha, \beta]$ intervals; we will make use of a popular non-convex optimization technique based on DC programming for this purpose.
\section{DC Programming Approach for Partial AUC in $[\alpha, \beta]$}
\label{sec:svm-dc}
As noted above, for general FPR intervals $[\alpha, \beta]$, the structural SVM based surrogate optimized in the previous section is often a looser relaxation to the partial AUC risk compared to the non-convex hinge loss based surrogate considered earlier in \Eqn{eqn:emp-pauc-hinge-general}. We now develop an approach for directly optimizing the hinge surrogate. Here we resort to a popular difference-of-convex (DC) programming technique, where we shall exploit the fact that the partial AUC in $[\alpha, \beta]$ is essentially a difference between (scaled) partial AUC values in $[0, \beta]$ and $[0, \alpha]$; the structural SVM algorithm developed in Section \ref{sec:svm-beta} for false positive ranges of the form $[0, \beta]$ will be used as a subroutine here. %While this method requires higher runtime compared to the previous cutting plane method and can get stuck in locally optimal solutions, in practice there are settings where directly optimizing the tighter non-convex surrogate does yield better partial AUC performance.

%\allowdisplaybreaks
%for the general partial AUC $[\alpha, \beta]$  
\textbf{Difference-of-convex objective.} %We begin by rewriting the surrogate in \Eqn{eqn:emp-pauc-hinge-general} as a difference of  hinge surrogates in intervals $[0, \beta]$ and $[0, \alpha]$, thus allowing us to write the optimization objective  as a difference of two convex functions in $w$:
We begin by rewriting the surrogate in \Eqn{eqn:emp-pauc-hinge-general} as a difference of  hinge surrogates in intervals $[0, \beta]$ and $[0, \alpha]$, thus allowing us to write the optimization objective  as a difference of two convex functions in $w$:
\begin{eqnarray*}
\widehat{R}^\hinge_{\pAUC(\alpha,\beta)}(w;\, S)
	&=&
		\frac{1}{j_\beta-j_\alpha} \bigg[
		\frac{1}{m}\sum_{i = 1}^m \sum_{j = 1}^{j_\beta} 
		\big( 1 \,-\, (w^\top x_i^+ \,-\, w^\top x_{(j)_w}^-)\big)_+\\
	&&  \hspace{3cm}
		~-~
		\frac{1}{m}\sum_{i = 1}^m \sum_{j = 1}^{j_\alpha} 
		\big( 1 \,-\, (w^\top x_i^+ \,-\, w^\top x_{(j)_w}^-)\big)_+
		\bigg]\\
	&=&
	\frac{1}{j_\beta-j_\alpha} \Big[
		j_\beta\widehat{R}^\tight_{\pAUC(0,\beta)}(w;\, S)		
				~-~
		j_\alpha\widehat{R}^\tight_{\pAUC(0,\alpha)}(w;\, S)		
		\Big],
\label{eqn:emp-pauc-struct-dc}
\end{eqnarray*}
where in the second step, we use Theorem \ref{thm:pauc-struct} to writte the hinge surrogate for partial AUC in $[0,\beta]$ in terms of the tight structural SVM formulation (see \Eqn{eqn:pauc-risk-struct-beta-2}).

\begin{figure}[t]
%\vspace{-15pt}
\begin{algorithm}[H]
\caption{Concave-Convex Procedure (CCCP) for $\text{SVM}^\dc_{\pAUC}$ in $[\alpha, \beta]$}
\label{algo:cccp}
\begin{algorithmic}[1]
\small{
\STATE \textbf{Inputs:} $S = (S_+, S_-), ~\alpha, ~\beta, ~C, ~\tau$ 
\STATE $f(w) = \frac{j_\beta}{j_\beta-j_\alpha} \widehat{R}^\tight_{\pAUC(0,\beta)}(w;\, S)$;\,\,\, $g(w) = \frac{j_\alpha}{j_\beta-j_\alpha} \widehat{R}^\tight_{\pAUC(0,\alpha)}(w;\, S)$
\STATE $H_w(Z, \pi) \,\equiv\, \Delta_{\AUC}(\pi^*,\pi) - w^\top \big( \phi((S_+, Z), \pi^*) - \phi((S_+, Z),\pi) \big)$
\STATE Initialize $w_0 \in \R^d$, \,\,$t = 0$
\STATE \textbf{Repeat} 
\STATE \hspace{0.5cm} $t = t+1$
\STATE \hspace{0.5cm} Compute supergradient for $-g(w)$:
\STATE \hspace{1cm} $(\bar{Z}, \bar{\pi}) \,=\, \underset{\substack{Z \subseteq S_-, \, |Z| = j_\alpha\\\pi  \,\in\,\Pi_{m,j_\alpha}}}
		{\operatorname{argmax}} H_w(Z, \pi);$
 \hspace{0.25cm} $v_t = -j_\alpha \phi((S_+, \bar{Z}), \bar{\pi})$
 		\vspace{0.1cm}
\STATE \hspace{0.5cm} Optimize convex upper bound:
\STATE \hspace{1cm} $w_t ~\in~ \underset{w \in \R^d}{\argmin} \,\, \frac{1}{2}||w||^2 \,+\, f(w) \,+\, w^\top v_t$
 		\vspace{0.1cm}
\STATE \textbf{Until} ~$(f(w_t) - g(w_t)) - (f(w_{t-1}) - g(w_{t-1})) \,\leq\, \tau $
\STATE \textbf{Output:} $w_{t}$ 
}
\end{algorithmic}
\end{algorithm}	
%\vspace{-0.7cm}
%\vspace{-15pt}
\end{figure}

%\noindent which follows from the definition in \Eqn{eqn:pauc-risk-struct-beta}. 

\textbf{Concave-convex procedure.} The above difference-of-convex function can now be optimized directly using the well-known concave-convex procedure (CCCP) \cite{YuJoachims09,YuilleRang03}. This technique works by successively computing a gradient-based linear upper bound on the concave (or negative convex) part of the objective, and optimizing the resulting sum of convex and linear functions (see Algorithm \ref{algo:cccp} ). 

In our case, each iteration $t$ of this technique maintains a model vector $w_t$ and computes a supergradient of the concave term $-j_\alpha\widehat{R}^\tight_{\pAUC(0,\alpha)}(w;\, S)$ at $w_t$. Since this term is essentially the negative of the maximum of linear functions in $w$ (see \Eqn{eqn:pauc-risk-struct-beta-2}), one can obtain a supergradient of this term (w.r.t. $w$) by computing the gradient of the linear function at which the maximum is attained \cite{bertsekas99}; specifically, if $(\bar{Z}, \bar{\pi})$ is the subset-matrix pair at which the maximum is attained (which can be computed efficiently using Algorithm \ref{algo:mvc}), a supergradient of this term is $v_t = -j_\alpha \phi((S_+,\bar{Z}), \bar{\pi})$, with the corresponding linear upper bound given by $-j_\alpha \widehat{R}^\tight_{\pAUC(0,\alpha)}(w_t;\, S) \,+\, (w-w_t)^\top v_t$. This gives us a convex upper bound on our original difference-of-convex objective, which can be optimized efficiently using a straightforward variant of the structural SVM method discussed in Section \ref{sec:svm-beta}. The CCCP method then simply alternates between the above described linearization and convex upper bound optimization steps until the difference in objective value across two successive iterations falls below a tolerance $\tau > 0$.

%\begin{eqnarray*}
%%\lefteqn{\nabla_w \big[-j_\alpha\widehat{R}^\hinge_{\pAUC(0,\alpha)}(w;\, S)\big]} \\
%%	 &=& 
%%	 \nabla_w \Bigg[
%	 -j_\alpha 
%	 	\underset{\substack{Z \subseteq S_-, \, |Z| = j_\beta\\\pi  \,\in\,\Pi_{m,j_\beta}}}
%		{\operatorname{max}}\big\{
%		\Delta_{\AUC}(\pi^*,\pi) \,-\, w^\top\big(\phi((S_+,Z),\pi^*) - \phi((S_+,Z),\pi\big)
%	\big\}\Bigg],
%\end{eqnarray*}
%where we have simply replaced $\widehat{R}^\hinge_{\pAUC(0,\alpha)}$ with the equivalent structural SVM surrogate given in \Eqn{eqn:struct-special}. 
%\[
%\frac{1}{j_\beta-j_\alpha} \big[
%		j_\beta\widehat{R}^\struct_{\pAUC(0,\beta)}(w;\, S)		
%		\,-\, j_\alpha \widehat{R}^\struct_{\pAUC(0,\alpha)}(w_t;\, S) 
%		\,+\,
%		(w-w_t)^\top (-j_\alpha \phi(S_+,\bar{Z}_t; \bar{\pi}_t)) \big].
%\]

The CCCP method is guaranteed to converge to only a locally optimal solution or to a saddle point \cite{YuilleRang03}, and moreover is computationally more expensive than the previous structural SVM approach (requiring to solve an entire structural SVM optimization in each iteration). However, as we shall see in our experiments in Section \ref{sec:expts}, in practice, this method yields higher partial AUC values than the structural SVM method in some cases. %We will elaborate on this when we discuss our experimental results in Section \ref{sec:expts}.  in Section \ref{sec:expts}

%%%%%%%%%%%%%%%%%%%%%%%%%%%%%%%%%%%%%%%%%%%%%%%%%%%%%%%%%%%%%%

%%%%%%%%%%%%%%%% Sec 6: Generalization bound %%%%%%%%%%%%%%%%%
%\vspace{-8pt}
\section{Generalization Bound for Partial AUC}
%\vspace{-3pt}
\label{sec:genbound}
We have so far focused on developing efficient methods for optimizing the partial AUC. In this section, we look at the generalization properties of this evaluation measure. In particular, we derive a uniform convergence generalization bound for the partial AUC risk, thus establishing that good `training' performance in terms of partial AUC also implies good generalization performance. We first define the population or distribution version of the partial AUC risk for a general scoring function $f: X \> \R$:
\begin{eqnarray}
R_{\pAUC(\alpha,\beta)}[f; \D] \,=\, \frac{1}{\beta-\alpha}\E_{x^+ \sim \D_+, x^- \sim \D_-}\big[\1\big(f(x^+) \leq f(x^-)\big)\,\bT_{\alpha,\beta}(f, x^-)\big],
\label{eqn:gen-bound-pop-risk}
\end{eqnarray}
where $\bT_{\alpha,\beta}(f, x^-)$ is an indicator function which is 1 if $\P_{\tilde{x}^- \sim \D_-}(f(\tilde{x}^-) > f(x^-)) \in [\alpha, \beta]$ 
%$\FPR_f(f(x^-)) \,=\, \P_{\tilde{x}^-}(f(\tilde{x}^-) > f(x^-)) \,\in\, [\alpha, \beta]$ 
and is 0 otherwise. As before, we define the empirical partial AUC risk for a sample $S = (S_+, S_-)$:% with $m$ positive instances and $n$ negative instances as:
\begin{eqnarray}
\widehat{R}_{\pAUC(\alpha,\beta)}[f; S] \,=\, \frac{1}{m(j_\beta - j_\alpha)} \sum_{i=1}^{m}\sum_{j=1}^{n}\1\big(f(x_i^+) \leq f(x_j^-)\big)\,\widehat{\bT}_{\alpha,\beta}(f, x^-_j),
\label{eqn:gen-bound-emp-risk}
\end{eqnarray}
where $\widehat{\bT}_{\alpha,\beta}(f, x^-_j)$ is an indicator function which is turned on only when $x^-_j$ lies in positions $j_\alpha+1$ to $j_\beta$ in the ranking of all negative instances by $f$.

We would like to show that the above empirical risk is not too far from the population risk for any scoring function chosen from a given real-valued function class $\F$ of reasonably bounded `capacity'. In our case, the capacity of such a function class $\F$ will
be measured using the VC dimension of the class of thresholded classifiers obtained from scoring functions in the class: $\T_\F = \big\{\sign \circ (f - t) \,|\, f \in \F, \, t \in \R\big\}$. We have the following uniform convergence bound for partial AUC:
\begin{theorem}
\label{thm:pauc-gen-bound}
Let $\F$ be a class of real-valued functions on $X$, and $\T_\F = \big\{\sign \circ (f - t) \,|\, f \in \F, \, t \in \R\big\}$. Let $\delta > 0$. Then with probability at least $1 - \delta$ (over draw of sample $S = (S_+, S_-)$ from $\D_+^m \times \D_-^n$), we have for all $f \in \F$,
\begin{eqnarray*}
\lefteqn{
{R}_{\pAUC(\alpha,\beta)}[f; \D]
 \,\, \leq \,\, \widehat{R}_{\pAUC(\alpha,\beta)}[f; S] 
}\\[2pt]
&& 
\hspace{3cm}
 \,+\, C\bigg(\sqrt{\frac{d\ln(m) + \ln(1/\delta)}{m}} + \frac{1}{\beta-\alpha}\sqrt{\frac{d\ln(n) + \ln(1/\delta)}{n}}\bigg),
\end{eqnarray*}
where $d$ is the VC dimension of $\T_\F$, and $C > 0$ is a distribution-independent constant.
\end{theorem}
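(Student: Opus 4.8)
The plan is to control the deviation separately on the positive and negative samples, exploiting their independence, and to reduce each piece to a standard VC uniform-convergence statement for the class of threshold classifiers $\T_\F$. Write $G_f(t) = \P_{x^+\sim\D_+}[f(x^+)\le t] = 1 - \TPR_f(t)$ with empirical counterpart $\widehat{G}_f(t) = \frac{1}{m}\sum_{i=1}^m \1(f(x^+_i)\le t)$. Since the range indicator $\bT_{\alpha,\beta}(f,x^-)$ depends only on $x^-$, integrating out $x^+$ lets me rewrite the population risk as $R_{\pAUC(\alpha,\beta)}[f;\D] = \frac{1}{\beta-\alpha}\,\E_{x^-\sim\D_-}[G_f(f(x^-))\,\bT_{\alpha,\beta}(f,x^-)]$ and the empirical risk as $\widehat{R}_{\pAUC(\alpha,\beta)}[f;S] = \frac{1}{j_\beta-j_\alpha}\sum_{j=j_\alpha+1}^{j_\beta}\widehat{G}_f(f(x^-_{(j)_f}))$, an average of $\widehat{G}_f$ over the negatives ranked in positions $j_\alpha+1,\ldots,j_\beta$ by $f$. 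I would then insert the hybrid quantity $R'[f] = \frac{1}{j_\beta-j_\alpha}\sum_{j=j_\alpha+1}^{j_\beta} G_f(f(x^-_{(j)_f}))$, which uses the \emph{population} positive CDF but the \emph{empirical} negatives and their empirical ranking, and split $R_{\pAUC(\alpha,\beta)}[f;\D] - \widehat{R}_{\pAUC(\alpha,\beta)}[f;S] = (R[f]-R'[f]) + (R'[f]-\widehat{R}[f])$.

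The second gap is the easy, positive-side piece. Bounding it termwise gives $|R'[f]-\widehat{R}[f]| \le \sup_{t\in\R}|\widehat{G}_f(t)-G_f(t)| = \sup_{t\in\R}|\widehat{\TPR}_f(t)-\TPR_f(t)|$, a quantity that does not involve the negative sample at all. A standard VC argument for the class $\{\sign\circ(f-t) : f\in\F,\,t\in\R\} = \T_\F$ shows that, with probability at least $1-\delta/2$ over the draw of $S_+$, this supremum is at most $C\sqrt{(d\ln(m)+\ln(1/\delta))/m}$ uniformly over $f\in\F$, which is precisely the first term in the bound. Because it holds simultaneously over all $t$, it holds whatever the (random, $f$-dependent) scores of the negatives turn out to be.

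The first gap is the negative-side piece and is the main obstacle. Here $G_f$ is a fixed, monotone, $[0,1]$-valued function, and $R'[f]-R[f]$ compares the average of $G_f(f(x^-))$ over the empirically in-range negatives against its population expectation over the FPR band $[\alpha,\beta]$. I would express both as Riemann--Stieltjes integrals of $G_f$ against the empirical and population laws of $f(x^-)$, with integration limits determined by $\widehat{\FPR}_f$ and $\FPR_f$ respectively; integration by parts then turns the difference into terms controlled by $\sup_{t}|\widehat{\FPR}_f(t)-\FPR_f(t)|$ (using that $G_f$ has total variation at most one), plus a boundary contribution from negatives whose FPR lies within this deviation of the endpoints $\alpha$ and $\beta$. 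A second VC argument, now on $S_-$, gives $\sup_{f,t}|\widehat{\FPR}_f(t)-\FPR_f(t)| \le C\sqrt{(d\ln(n)+\ln(1/\delta))/n}$ with probability at least $1-\delta/2$. The delicate point is the normalization: the unnormalized sum runs over only $j_\beta-j_\alpha\approx n(\beta-\alpha)$ negatives, so after dividing by $(j_\beta-j_\alpha)$ the boundary misclassifications are amplified by a factor of order $1/(\beta-\alpha)$, producing exactly the $\frac{1}{\beta-\alpha}\sqrt{(d\ln(n)+\ln(1/\delta))/n}$ term. Along the way I would absorb the $O(1/n)$ discrepancy between $(j_\beta-j_\alpha)/n$ and $\beta-\alpha$ arising from the floor/ceiling in $j_\alpha,j_\beta$, which is of lower order.

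Finally, a union bound over the two events (each of probability at least $1-\delta/2$) gives the claimed simultaneous control with probability at least $1-\delta$; since both VC constants are universal, the resulting $C$ is distribution-independent. I expect the genuinely technical work to be confined to the integration-by-parts/boundary estimate for the first gap, and to verifying that the empirical range membership (positions $j_\alpha+1,\ldots,j_\beta$) is sandwiched by the population FPR band up to the uniform deviation, uniformly over $f\in\F$.
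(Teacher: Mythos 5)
Your proposal is correct and shares the paper's overall skeleton, but it executes the hard (negative-sample) part by a genuinely different argument. Your hybrid quantity $R'[f]$ — population positive-class CDF $G_f$, empirical negatives, empirical ranking — is exactly the paper's intermediate term $\berr_{\beta}[f;\D_+,S_-]$, and your bound on $R'-\widehat{R}$ via $\sup_t\big|\widehat{G}_f(t)-G_f(t)\big|$, using a VC argument on $\T_\F$ over $S_+$ conditionally on $S_-$, is the paper's third term verbatim. The divergence is in $R-R'$: the paper splits this further by introducing the \emph{population} top-$\beta$ threshold $t_{\D_-,f,\beta}$, giving (i) an empirical-vs-population average over negatives at that fixed threshold, which it reduces to Lemma~\ref{lem:uniform-covergence} via the identity $\1\big(f(x^+)\le f(x^-)\big)\,\1\big(f(x^-)>t\big)=\1\big(f(x^-)>\max\{f(x^+),t\}\big)$ (collapsing the product of loss and band indicator into a single threshold event), and (ii) a comparison of the population and empirical thresholds on the same empirical negatives, handled by noting that one indicator dominates the other pointwise so the discrepancy is exactly the deviation of $\widehat{\FPR}_f$ from $\beta$ at $t_{\D_-,f,\beta}$. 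You instead treat $R-R'$ in one shot by Stieltjes integration by parts against the empirical and population laws of $f(x^-)$, using that $G_f$ is monotone with total variation one, plus a boundary-layer count near the FPR endpoints; this buys a shorter decomposition and makes the $1/(\beta-\alpha)$ amplification more transparent, at the cost of some measure-theoretic care with atoms of the score distribution (which the paper sidesteps by its no-ties assumption and threshold comparison). Both routes bottom out in the same two uniform-convergence statements over $\T_\F$ and give the same rates. One point to make explicit when writing your version out: the boundary contribution must be controlled by the \emph{empirical} count of negatives whose population FPR lies within $\epsilon_n$ of $\{\alpha,\beta\}$, which requires one further application of the same uniform $\sup_t|\widehat{\FPR}_f(t)-\FPR_f(t)|$ bound (empirical mass of a layer of population mass $2\epsilon_n$ is at most $4\epsilon_n$); you gesture at this but it is where the actual work sits.
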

The above result provides a bound on the generalization performance of a learned scoring function in terms of its empirical (training) risk. Also notice that the tightness of this bound depends on the size of the FPR range of interest. In particular, the smaller the FPR interval, looser is the bound. 

The proof of this result differs substantially from that for the full AUC \cite{Agarwal+05} as the complex structure of partial AUC forbids the direct application of standard concentration results like Hoeffding's inequality. Instead the difference between the empirical and population risks needs to be broken down into simpler additive terms that can in turn be bounded using standard arguments. We provide the details in Appendix \ref{app:pauc-gen-bound-proof}.
\footnote{
We would like to also note that in a follow-up work, Kar et al. \cite{Kar+14} provide a uniform convergence bound for the hinge loss based partial AUC surrogate in \Eqn{eqn:emp-pauc-hinge-general} that makes use of a covering number based analysis. Our VC-dimension based bound, on the other hand, applies directly to the discrete partial AUC risk (in \Eqn{eqn:emp-pauc-risk}), rather than to a surrogate on this quantity.} %As a result, the proof techniques used have differences; while we make use VC dimension based arguments, Kar et al. need to resort to a covering number style analysis.

%%%%%%%%%%%%%%%%%%%%%%%%%%%%%%%%%%%%%%%%%%%%%%%%%%%%%%%%%%%%%%

%%%%%%%%%%%%%%%%%%%%% Sec 7: Experiments %%%%%%%%%%%%%%%%%%%%%
%\vspace{-0.5cm}
\section{Experiments}
%\vspace{-0.25cm}
\label{sec:expts}	

\begin{figure*}
\centering
\begin{table}[H]
\centering
\small
%~\\[-7pt]%[15pt]
\begin{tabular}{|c|c|c|}
\hline
\textbf{Data set}		&\textbf{\#instances}	& \textbf{\#features}	%&\textbf{Fraction of positives}
\\\hline
ppi			& 240,249		& 85		%& 1.19\%
\\%\hline
chemo		& 2,
142			& 1,021		%& 2.33\%
\\
%leukemia	& 7,129			& 72
%\\%\hline
kddcup08	& 102,294		& 117		%& 0.61\%
\\%\hline
kddcup06	& 4,429			& 116
\\\hline
\end{tabular}
\caption{Real data sets used.}
\label{tab:datasets-real}
\end{table}
%\vspace{-0.5cm}
\end{figure*}

In this section, we present experimental evaluations of the proposed SVM based methods for optimizing partial AUC on a number of real-world applications where the partial AUC is a performance measure of interest, and on benchmark UCI data sets. %We shall refer to the proposed structural SVM method as $\SVM_\pAUC$ and the DC programming approach as $\SVM^\dc_\pAUC$. 
The structural SVM algorithms were implemented using a publicly available API from \cite{structsvm}\footnote{\url{http://svmlight.joachims.org/svm_struct.html}}, while the DC programming method was implemented using an API for latent structural SVM from \cite{YuJoachims09}\footnote{\url{http://www.cs.cornell.edu/~cnyu/latentssvm/}}. In each case, two-thirds of the data set was used for training and the remaining for testing, with the results averaged over five such random splits. The tunable parameters were chosen using a held-out portion of the training set treated as a validation set. The specific parameter choices, along with details of data preprocessing, can be found in Appendix \ref{app:expts}.  All experiments used a linear scoring function.\footnote{All the methods were implemented in C. Code for our methods will be made available at: \url{http://clweb.csa.iisc.ernet.in/harikrishna/Papers/SVMpAUC-tight/}.}

%\noindent Unless otherwise specified, the following setting was in all our experiments: $2/3^\text{rd}$ of the given data set was used for training and remaining for testing; the regularization parameter was chosen from the range $\{10^{-5}, \ldots, 10^4\}$ for $\text{SVM}_{\text{pAUC}}$ and $\text{SVM}_{\text{AUC}}$, and from $\{10^{-2}, \ldots, 10^4\}$ for $\text{SVM}^\text{dc}_{\text{pAUC}}$ using a held-out portion of train set; the tolerance parameter $\epsilon$ was set to $10^{-4}$ for $\text{SVM}_{\text{pAUC}}$ and $\text{SVM}_{\text{AUC}}$, and to $10^{-3}$ for $\text{SVM}^\text{dc}_{\text{pAUC}}$.

\subsection{Maximizing Partial AUC in $[0,\beta]$}

%\vspace{-5pt}
We begin with our results for FPR intervals of the form $[0,\beta]$. We considered two real-world applications where the partial AUC in $[0,\beta]$ is an evaluation measure of interest, namely, a protein-protein interaction prediction task and a drug discovery task (see Table \ref{tab:datasets-real}). We refer to the proposed structural SVM based method in Section \ref{sec:svm-beta} as $\SVM_\pAUC$. We included for comparison the structural SVM algorithm of Joachims (2005) for optimizing the full AUC which we shall call $\SVM_\AUC$ \cite{svmperf}, as well as three existing algorithms for optimizing partial AUC in $[0,\beta]$: asymmetric SVM (ASVM) \cite{asvm}, pAUCBoost \cite{paucboost}, and a greedy heuristic method due to \cite{paucmax}. 
%, a breast cancer detection task, and a pulmonary emboli detection task. Table \ref{tab:datasets-real} contains details of the data sets used.

%
\begin{figure}[t]
\centering
%\begin{center}
\subfigure[$\beta = 0.05$]{
	\includegraphics[scale=0.57,trim={0 0 6cm 0},clip]{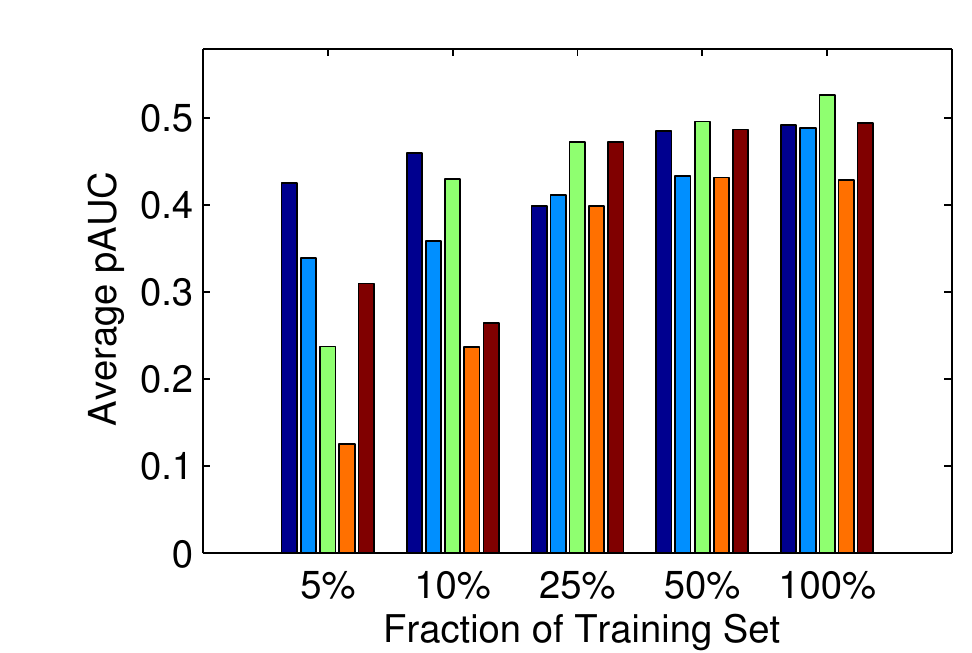}
}
\hspace{-0.35cm}
\subfigure[$\beta = 0.1$]{
	\includegraphics[scale=0.57,trim={0 0 6cm 0},clip]{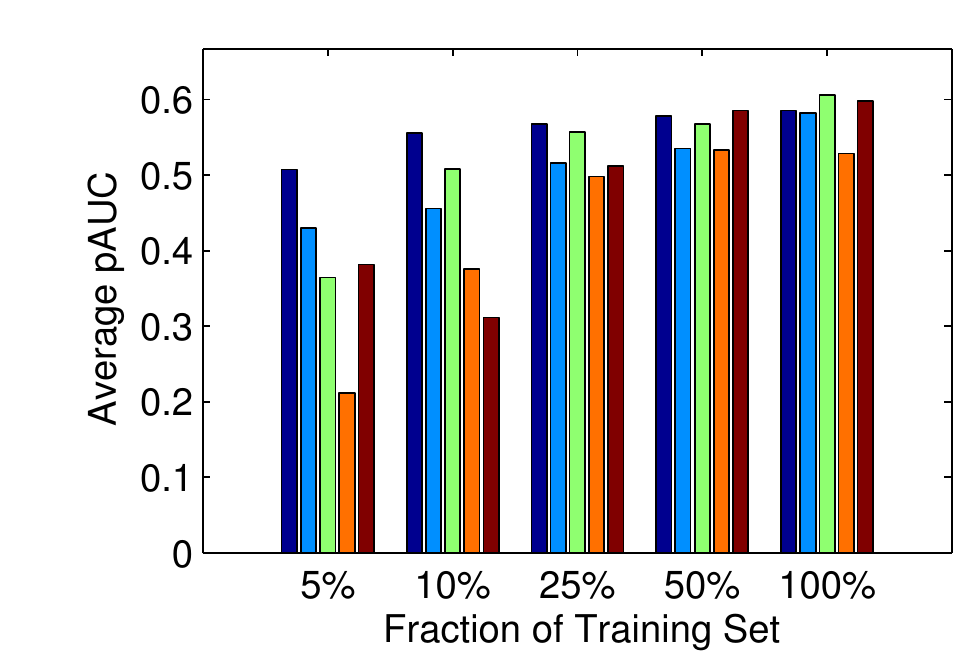}
}
\subfigure{
	\includegraphics[scale=0.5,trim={10cm 0 0 0},clip]{ppi-2-eps-converted-to.pdf}
}
%\vspace{-0.5cm}
\caption{Partial AUC maximization in $[0, \beta]$ on PPI data.}
%\vspace{-1.2cm}
%\end{center}
\label{fig:ppi}
\end{figure}

~\\
\noindent \textbf{Protein-protein interaction prediction.} %protein-protein interaction (PPI)
In protein-protein interaction (PPI) prediction, the task is to predict whether a given pair of proteins interact or not. Owing to the highly imbalanced nature of PPI data (e.g.\ only 1 in every 600 protein pairs in yeast are found to interact), the partial AUC in a small FPR range $[0,\beta]$ has been advocated as an evaluation measure for this application \cite{ppiQi06}. We used the PPI data for Yeast from \cite{ppiQi06}, which contains 2,865 protein pairs known to be interacting (positive) and a random set of 237,384 protein pairs assumed to be non-interacting (negative); each protein-pair is represented using 85 features.\footnote{In the original data set in \cite{ppiQi06}, each protein pair is represented by 162 features, but there were several missing features; we used a subset of 85 features that contained less than 25\% missing values (with missing feature values replaced by mean/mode values).} %. %; this assumption is justified by the authors as only 1 in every 600 protein pairs in yeast are found to interact. 
%Each protein pair is represented by 162 features, but there are several missing features; we used a subset of 85 features that contained less than 25\% missing values (with missing feature values replaced by mean/mode values). 
%in our experiments, each protein pair is represented using 85 features. Figure \ref{fig:ppi} contains a plot of 
We evaluated the partial AUC performance of these methods on two FPR intervals $[0, 0.05]$ and $[0, 0.1]$. To compare the methods for different training sample sizes, we report results in Figure \ref{fig:ppi} for varying fractions of the training set. As seen, the proposed method almost always yields higher partial AUC in the specified FPR intervals compared to the $\SVM_\AUC$ method for optimizing the full AUC, thus confirming its focus on a select portion of the ROC curve. Interestingly, the difference in performance is more pronounced for smaller training sample sizes, implying that when one has limited training data, it is more beneficial to use the data to directly optimize the partial AUC rather than to optimize the full AUC. Also, in most cases, the proposed method performs comparable to or better than the other baselines; the pAUCBoost and Greedy-Heuristic methods perform particularly poorly on smaller training samples due to the use of heuristics.
%%~\\[-15pt]
~\\

\begin{figure}[t]
%\vspace{-0.5cm}
%\begin{center}
\centering
\subfigure[$\beta = 0.05$]{
	\includegraphics[scale=0.57,trim={0 0 6cm 0},clip]{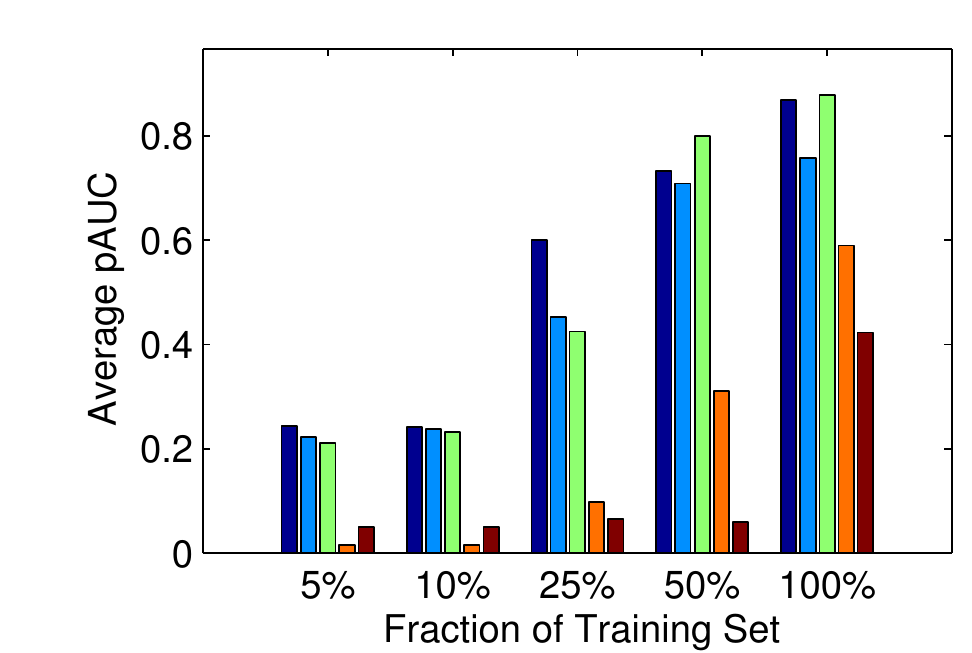}
}
\hspace{-0.35cm}
\subfigure[$\beta = 0.1$]{
	\includegraphics[scale=0.57,trim={0 0 6cm 0},clip]{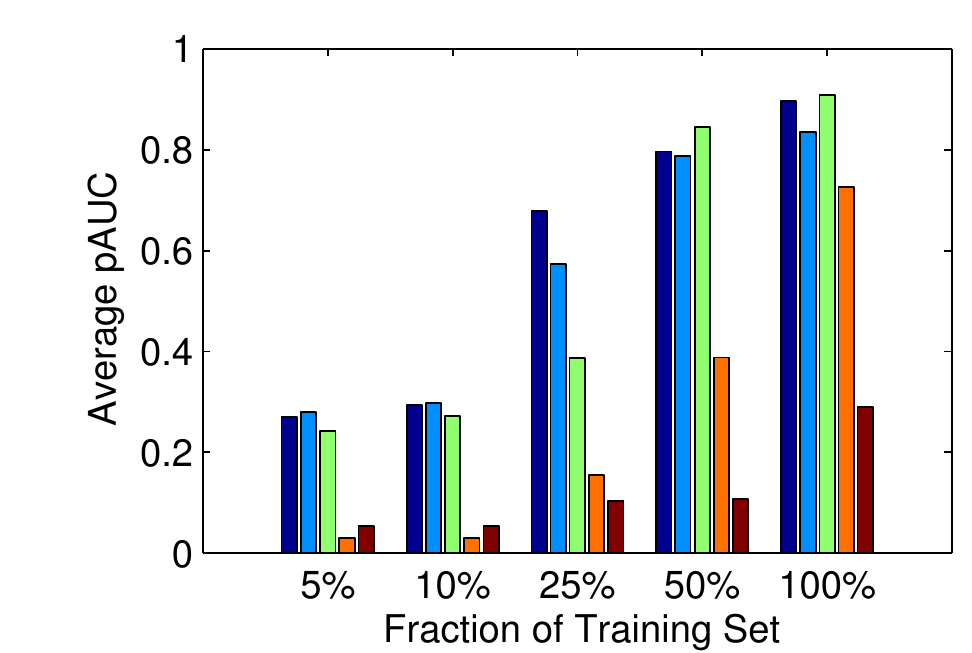}
}
\subfigure{
	\includegraphics[scale=0.5,trim={10cm 0 0 0},clip]{chemo-2-eps-converted-to.pdf}
}
%\vspace{-0.5cm}
\caption{Partial AUC maximization in $[0, \beta]$ on drug discovery data.}
%\end{center}
\label{fig:chemo}
%\vspace{-0.1cm}
\end{figure}

\noindent \textbf{Drug discovery.} 
In the next task that we considered, one is given examples of chemical compounds that are active or inactive against a therapeutic target, and the goal is to rank new compounds such that the active ones are above the inactive ones. Here one is often interested in good ranking quality in the top portion of the ranked list, and hence good partial AUC in a small FPR interval $[0, \beta]$ in the initial portion of the ROC curve is a performance measure of interest. In our experiments, we used a virtual screening data set from \cite{JorissenGi05}; this contains 50 active/positive compounds (corresponding to the reversible antagonists of the
$\alpha_{\text{1A}}$ adrenoceptor) and 2092 ones that are inactive/negative, with each compound represented as a 1021-bit vector using the FP2 molecular fingerprint representation (as done in \cite{drug}). %There are 5 sets of 50 active compounds each (active against 5 different targets), and 1892 inactive compounds. For each target, the 50 active compounds are treated as positive, and all others as negative. 
Figure \ref{fig:chemo} contains the partial AUC performance for varying fractions of the training set on two FPR intervals. Clearly, for the most part, $\text{SVM}_\text{pAUC}$ yields higher partial AUC values than $\text{SVM}_\text{AUC}$, and
performs comparable to or better than the other baseline algorithms.%, with the difference in performance more pronounced for smaller training sample sizes.

\subsection{Maximizing Partial AUC in $[\alpha,\beta]$}
We next move to our experiments on partial AUC in a general $[\alpha,\beta]$ interval. We refer to the proposed structural SVM method for maximizing partial AUC in $[\alpha,\beta]$ again as $\SVM_\pAUC$ and our DC programming approach for optimizing the non-convex hinge surrogate as $\SVM^\dc_\pAUC$.  As baselines, we included $\SVM_\AUC$, pAUCBoost which can optimize partial AUC over FPR ranges $[\alpha,\beta]$, and an extension of the greedy heuristic method in \cite{paucmax} to handle arbitrary FPR ranges. We first present our results on a real-world application, where partial AUC in $[\alpha,\beta]$ is a useful evaluation measure.

\begin{table}[t]
%\vspace{-10pt}
\centering
\small
\begin{tabular}{|l|c|}
	\hline
								&	\multicolumn{1}{c|}{$\text{{pAUC}}(0.2s, 0.3s)$}
	\\\hline
	$\text{SVM}_{\text{pAUC}}[0.2s,0.3s]$	& 	\textbf{0.6376}		%&
	\\%\hline
	$\text{SVM}^\text{dc}_{\text{pAUC}}[0.2s,0.3s]$	& 	{0.6162}		%&
	\\%\hline
	$\text{SVM}_{\text{AUC}}$ 			&	0.6117	   %&    **
	\\%\hline
%	ASVM[0,0.1] 						& 			%&
%	\\\hline
	pAUCBoost$[0.2s, 0.3s]$					& 	0.6033	 	%&	**	
	\\%\hline
	Greedy-Heuristic$[0.2s, 0.3s]$				& 	0.5616	  	%&	**	
	\\\hline
\end{tabular}	
%\vspace{5pt}
\caption[Partial AUC maximization on KDD Cup 08 data]{Partial AUC maximization in $[\alpha,\beta]$ with KDD Cup 08 data. Here $s = 6848/101671$. }%}%The top two partial AUC values are highlighted in bold.}
\label{tab:kddcup08}
\end{table}

~\\%[-15pt]
\textbf{Breast cancer detection.}
We consider the task stated in the KDD Cup 2008 challenge, where one is required to predict whether a given region of interest (ROI) from a breast X-ray image corresponds to a malignant (positive) or a benign (negative) tumor \cite{kddcup}. The data provided is collected from 118 malignant patients and 1,594 normal patients. Four X-ray images are available for each patient; overall, there are 102,294 candidate ROIs selected from these X-ray images, of which 623 are positive, with each ROI represented by 117 features. In the KDD Cup challenge, performance was evaluated in terms of the partial area under the \emph{free-response} operating characteristic (FROC) curve in a false positive range $[0.2,0.3]$ deemed clinically relevant based on radiologist surveys. The FROC curve \cite{Miller69} effectively uses a scaled version of the false positive rate in the usual ROC curve; for our purposes, the corresponding false positive rate is obtained by re-scaling by a factor of $s=6848/101671$ (this is the total number of images divided by the total number of negative ROIs). Thus, the goal in our experiments was to maximize the partial AUC in the clinically relevant FPR range $[0.2s,0.3s]$. \Tab{tab:kddcup08} presents results on algorithms $\SVM_\pAUC$ and $\SVM^\dc_\pAUC$ developed for FPR intervals of this form, as well as on the baseline methods; $\SVM_\pAUC$ performs the best in this case.

\begin{figure}
\centering
\begin{table}[H]
\small
\centering
\begin{tabular}{|c|c|c|}
\hline
\textbf{Data set}		&\textbf{\#instances}	& \textbf{\#features}	%& \textbf{Fraction of positives}
\\\hline
%\multicolumn{3}{|c|}{\textbf{UCI data}}
%\\\hline
a9a			& 48,842		& 123		%&23.93\%
\\%\hline
cod-rna		& 488,565		& 8			%& 33.3\%
\\%\hline
covtype	& 581,012		& 54		%& 1.63\%
\\%\hline
ijcnn1		& 141,691		& 22		%& 9.57\%
\\%\hline
letter		& 20,000		& 16		%& 3.92\%
\\\hline
\end{tabular}
\caption{UCI data sets used.}
\label{tab:datasets-uci}
\end{table}
%\vspace{-10pt}
\end{figure}

\begin{table}[t]
\small
%\vspace{-0.5cm}
\centering
	\begin{tabular}{|l|c|c|c|c|c|}
	\hline
								&	\multicolumn{5}{c|}{$\text{{pAUC}}(0.02, 0.05)$}
	\\\hline
											& {a9a}	& {cod-rna}	& {covtype} & {ijcnn1} & {letter}
	\\\hline
	$\text{SVM}_{\text{pAUC}}$\,[0.02, 0.05] 	& 	0.2739	& {0.9187}		&	{0.2467}	& 	 {0.6131}	& \textbf{0.5208}%&
%	\\\hline
%	$\text{SVM}^\text{dc}_{\text{pAUC}}$[0.02,0.05] 	& 	0.3235	& {0.9170}		&	0.2770	& 	 \textbf{0.6790}	& \textbf{0.5063}%&	
	\\%\hline
	$\text{SVM}^\text{dc}_{\text{pAUC}}$\,[0.02, 0.05] 	& 	0.3650	& \textbf{0.9196}		&	0.2410	& 	 \textbf{0.6798}	& {0.5182}%&
	\\%\hline	
	$\text{SVM}_{\text{AUC}}$ 				& 	\textbf{0.4338}	& {0.9192}	& 	{0.2987}	& 	{0.4750}	& 0.4455  %
	\\%\hline
	pAUCBooost\,[0.02, 0.05] 					& {0.4012}	&   0.0330		&	\textbf{0.4485}	&	0.4913		& {0.4954}	%&
	\\%\hline
	GreedyHeuristic[0.02, 0.05] 				& 0.3417	&  0.0329 		&	0.2386	&	0.1201 		& 0.2888	%&
	\\\hline 
%	ASVM[0,0.1] 						& 			%&
%	\\\hline
%	pAUCBoost[0,0.1]					& 		 	%&	**	
%	\\\hline
%	Greedy-Heuristic[0,0.1]				& 		  	%&	**	
%	\\\hline
\end{tabular}	
\caption[{Partial AUC maximization in $[0.02, 0.05]$ on UCI data sets}]{Partial AUC maximization in [0.02, 0.05] on UCI data sets. }%The top two partial AUC values are highlighted in bold.}
\label{tab:0.02-0.05}
%\vspace{-0.2cm}
\end{table}

~\\%[-15pt]
\noindent \textbf{UCI data sets.} To perform a more detailed comparison between the proposed structural SVM and DC programming methods for general $[\alpha,\beta]$ intervals, we also evaluated the methods on a number of benchmark data sets obtained from the UCI machine learning repository \cite{uci} (see Table \ref{tab:datasets-uci}). The results for the FPR interval $[0.02, 0.05]$ are shown in Table \ref{tab:0.02-0.05}; for completeness, we also report the performance of the baseline methods.  Despite having to solve a non-convex optimization problem (and hence running the risk of getting stuck at a locally optimum solution), $\SVM^\dc_\pAUC$ does perform better than $\SVM_\pAUC$ in some cases, though between the two, there is no clear winner. Also, on three of the five data sets, one of the two proposed methods yield the best overall performance.
The strikingly poor performance of pAUCBoost and Greedy-Heuristic on the cod-rna data set is because the features in this data set individually produce low partial AUC values; since these methods rely on local greedy heuristics, they fail to find a linear combination of the features that yields high partial AUC performance.

\subsection{Maximizing TPR at a Specific FPR Value}
%This can be equivalently seen as requiring to learn a scoring function whose TPR is maximum when thresholded at a specified FPR $\beta$. with the performance of the learned scoring function evaluated in terms of the TPR that it yields when thresholded at FPR $\beta$
%\textbf{Partial AUC  in $[\alpha,\beta]$ for pulmonary emboli detection.}
%\subsection{Learning classifiers with limited FPRs for pulmonary emboli detection}
We have so far seen that the proposed methods are good at learning scoring functions that yield high partial AUC in a specified FPR range. In our next experiment, we shall demonstrate that the proposed methods can also be useful in applications where one is required to learn a classifier with specific true/false positive requirements. In particular, we consider the task described in the KDD Cup 2006 challenge of detecting pulmonary emboli in medical images obtained from CT angiography \cite{kddcup06}.  Given a candidate region of interest (ROI) from the image, the goal is to predict whether it is a pulmonary emboli or not; a specific requirement here is that the classifier must have high TPR, with the FPR kept within a specified limit. 

Indeed if a classifier is constructed by thresholding a scoring function, the above evaluation measure can be seen as the partial AUC of the scoring function in an infinitesimally small FPR interval. However, given the small size of the FPR interval concerned, maximizing this evaluation measure directly may not produce a classifier with good generalization performance, particularly with smaller training samples (see generalization bound in Section \ref{sec:genbound}). Instead, we prescribe a more robust approach of using the methods developed in this paper to maximize partial AUC in an appropriate larger FPR interval, and constructing a classifier by suitably thresholding the scoring function thus obtained. %We evaluated the proposed partial AUC maximization methods. %on different FPR intervals.

\begin{table}[t]
\centering
\small
\begin{tabular}{|l|c|}
	\hline
								&	TPR at FPR $= 0.1$
	\\\hline
	$\text{SVM}_{\text{pAUC}}[0, 0.1]$ 	& \textbf{0.6131}
	\\%\hline
	$\text{SVM}_{\text{pAUC}}[0.05, 0.1]$ 	& {0.5839}
	\\	
	$\text{SVM}^\text{dc}_{\text{pAUC}}[0.05, 0.1]$ 	& {0.5912}
	\\%\hline
	$\text{SVM}_{\text{AUC}}$ 			& {0.5839}
	\\\hline
\end{tabular}	
\caption[Partial AUC maximization with KDD Cup 06 data]{Partial AUC maximization as a proxy for maximizing the TPR at a specified FPR on KDD Cup 06 data. }%}%The top two partial AUC values are highlighted in bold.
%The values of $\beta$ presented above were prescribed in the KDD cup challenge. 
%Here $q$ is set to $0.001$.
\label{tab:kddcup06}
%\vspace{-5pt}
\end{table}

\begin{figure}[th!]
\centering
%\begin{center}
\vspace{-10pt}
\hspace{-12pt}
\subfigure[ppi]{
\centering
	\includegraphics[scale=0.6]{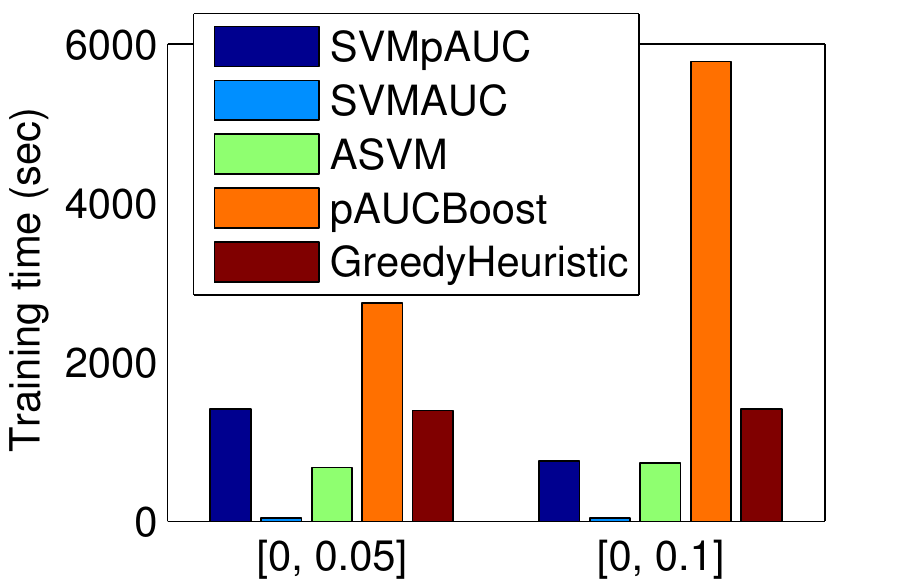}
	\hspace{-10pt}
	\includegraphics[scale=0.57]{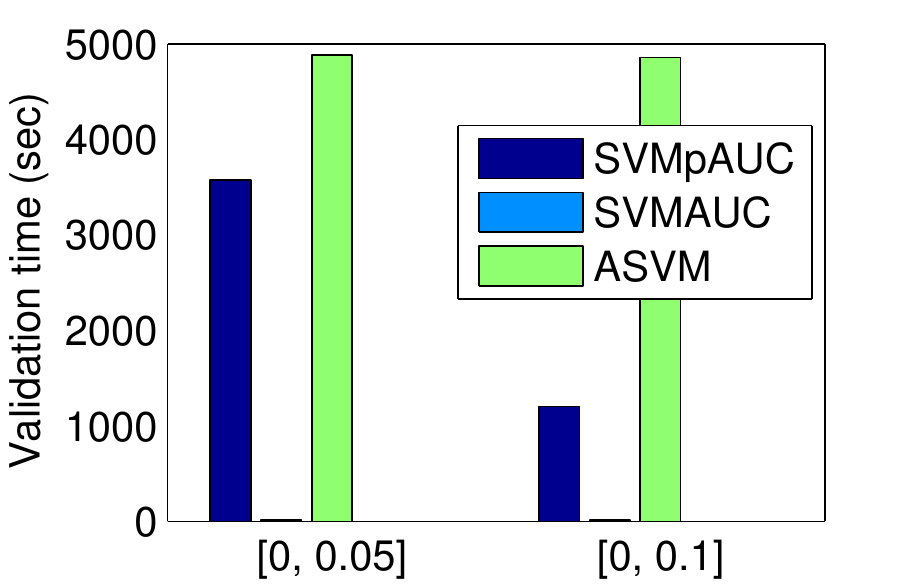}
}
\hspace{-2pt}
\subfigure[ijcnn1]{
\centering
	\includegraphics[scale=0.57]{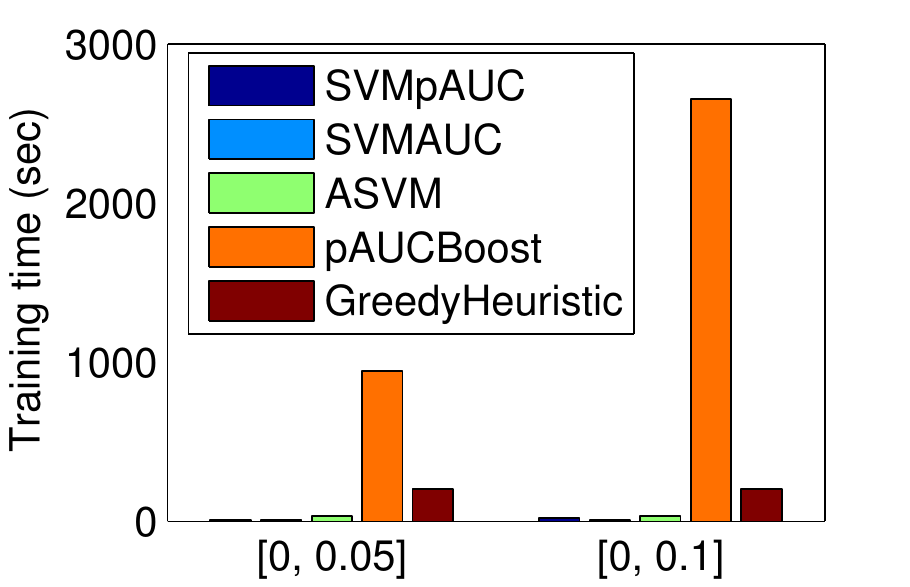}
	\hspace{-10pt}
	\includegraphics[scale=0.57]{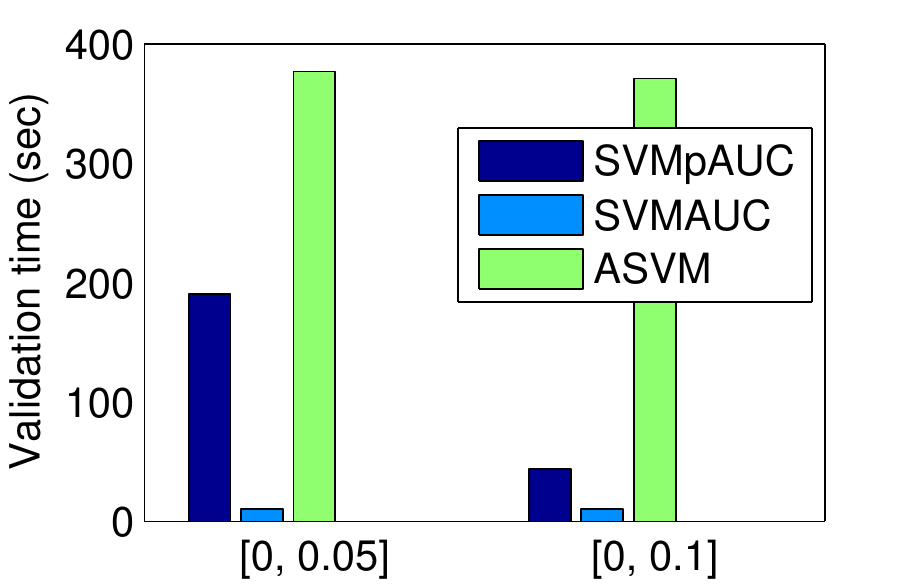}
}
%\vspace{-2pt}
\caption{Partial AUC maximization in $[0, \beta]$: Comparison of average training and validation times between $\SVM_\pAUC$ and baseline methods.}
%\end{center}
\label{fig:time-beta}
%\vspace{-0.1cm}
%\end{figure}
%\begin{figure}[t]
%\centering
%\vspace{-10pt}
%\begin{center}
%	\hspace{-10pt}
\subfigure[kddcup08]{
	\includegraphics[scale=0.57]{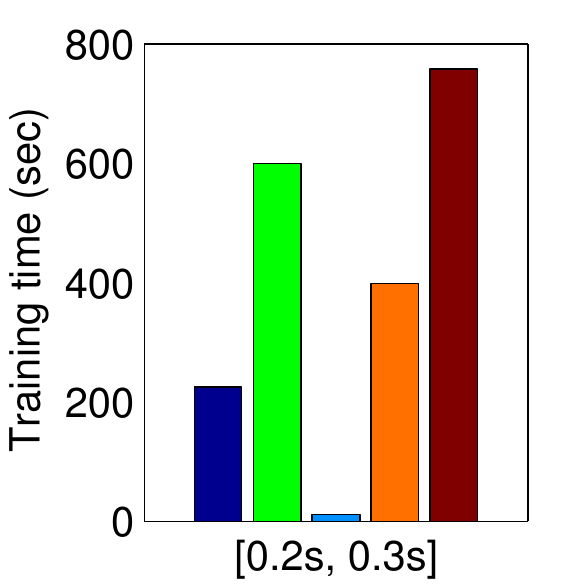}
	\hspace{-10pt}
	\includegraphics[scale=0.57]{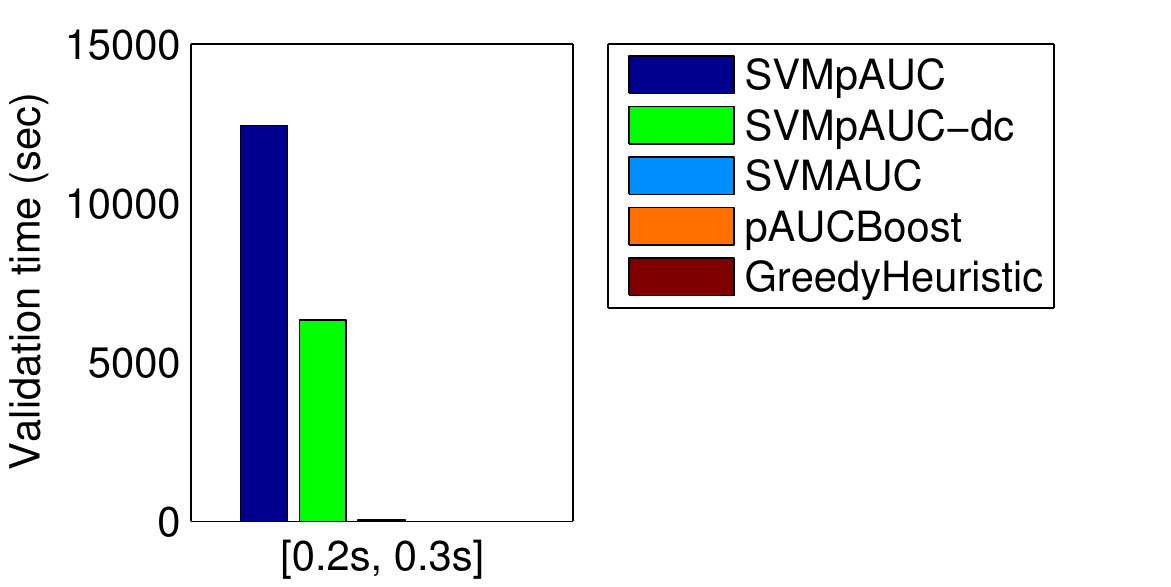}
}
\subfigure[ijcnn1]{
	\includegraphics[scale=0.57]{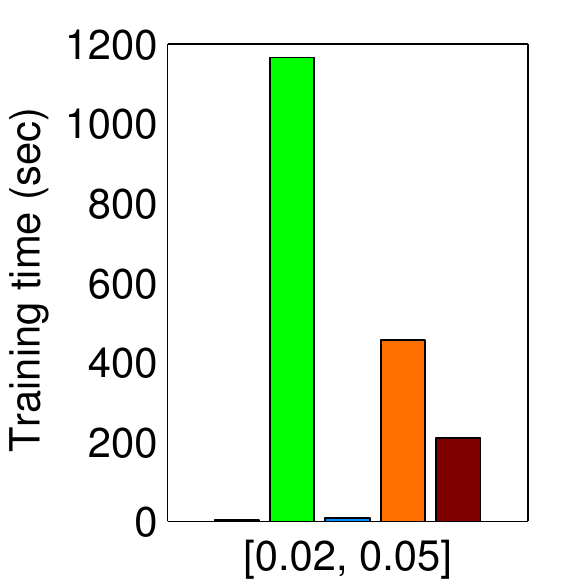}
	\hspace{-10pt}
	\includegraphics[scale=0.57]{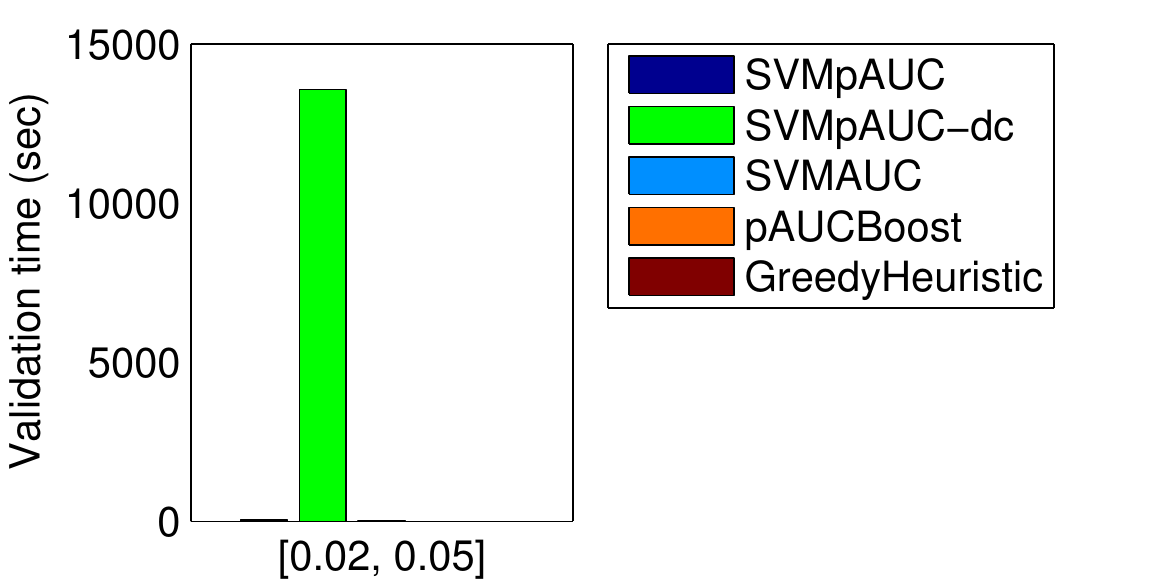}
}
%\vspace{-2pt}
\caption{Partial AUC maximization in $[\alpha, \beta]$: Comparison of average training and validation times between $\SVM_\pAUC$, $\SVM^\dc_\pAUC$ and baseline methods. Here $s = 6848/101671$.}
%\end{center}
\label{fig:time-general}
%\vspace{-0.2cm}
\end{figure}

%We solve this problem by first learning a scoring function that yields high partial AUC in a specific FPR interval, and then choosing a suitable threshold on the learned scoring function to construct a classifier. %More specifically, the problem of maximizing TPR at a specified FPR $\beta$ 
%Here we pose this problem as pAUC maximization in a range $[\beta - q, \beta + q]$, for some small $q>0$. 
The data provided in the KDD Cup challenge consists of 4,429 ROIs represented with 116 features, of which 500 are positive. We considered a maximum allowable FPR limit of $0.1$ (which is one of the values prescribed in the challenge). The proposed partial AUC maximization methods were used to learn scoring functions for two FPR intervals, $[0, 0.1]$ and $[0.05, 0.1]$, that we expected will promote high TPR at the given FPR of 0.1; the performance of a learned model was then evaluated based on the TPR it yields when thresholded at a FPR of 0.1. Table \ref{tab:kddcup06} contains results for $\SVM_\pAUC$ on both intervals, and for $\SVM^\dc_\pAUC$ on the $[0.05, 0.1]$ interval. We also included $\SVM_\AUC$ for comparison. Interestingly, in this case, $\SVM_\pAUC$ in $[0, 0.1]$ performs the best; the DC programming method on the $[0.05, 0.1]$ interval comes a close second, performing better than the structural SVM approach for the same interval.\footnote{The experiments were carried out on a single 70\%-30\% train-test split provided in the challenge.}

%In two of three cases, one of the proposed methods performs the best. Surprisingly, in some cases, $\SVM_\AUC$ outperforms many of the partial AUC maximization methods; this is because direct optimization of partial AUC is particularly difficult here as the specified FPR interval is very small and the amount of training data is also limited..%, and unlike previous results, not repeated over multiple random splits.}  %In this case, $\text{SVM}^\text{dc}_{\text{pAUC}}$ yields the highest TPR for two cases, while $\text{SVM}_{\text{pAUC}}$ is the best for the third.
%The training and test sets used here were same as those provided in the KDD cup challenge. %For these experiments, we used a smaller cross-validation range of $\{10^{-5}, \ldots, 1\}$ for choosing the regularization parameter in $\text{SVM}_{\text{pAUC}}$ and $\text{SVM}^\text{dc}_{\text{pAUC}}$; the tolerance parameter was set to 0.01 while applying $\text{SVM}^\text{dc}_{\text{pAUC}}$ for $\beta = 0.1$.
% and $\{10^{-5}, \ldots, 10^4\}$ for $\text{SVM}_{\text{AUC}}$
%(if no such FPR is achieved, the largest FPR lesser than $\beta$ is taken)

We also note that in a follow-up work, the proposed approach was applied to a similar problem in personalized cancer treatment \cite{Majumder+15}, where the goal was to predict whether a given cancer patient will respond well to a drug treatment. In this application, one again requires high true positive rates (fraction of cases where the treatment is effective and the model predicts the same), subject to the false positive rate being within an allowable limit; as above, the problem was posed as a partial AUC maximization task, and the classifiers learned using our methods were found to yield higher TPR performance than standard approaches for this problem, while not exceeding the allowed FPR limit.

\subsection{Run-time Analysis}
%In our final experiments, we analysed the running time of the proposed algorithms.%

\begin{figure}[t]
\vspace{-10pt}
\centering
\subfigure[ppi]{
\includegraphics[scale=0.64]{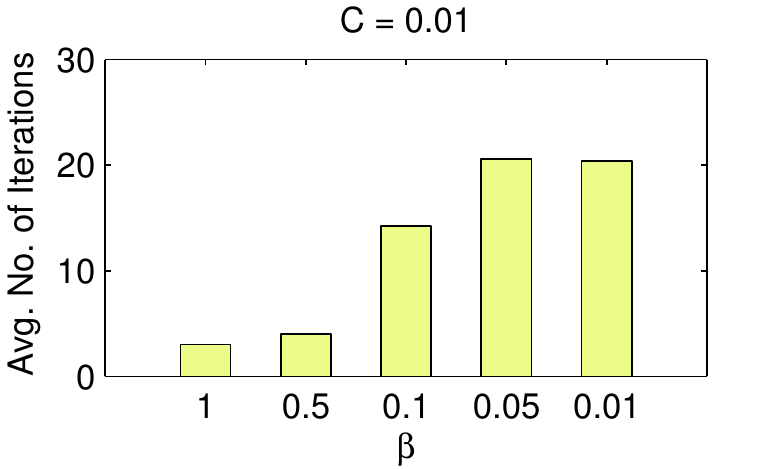}
\includegraphics[scale=0.64]{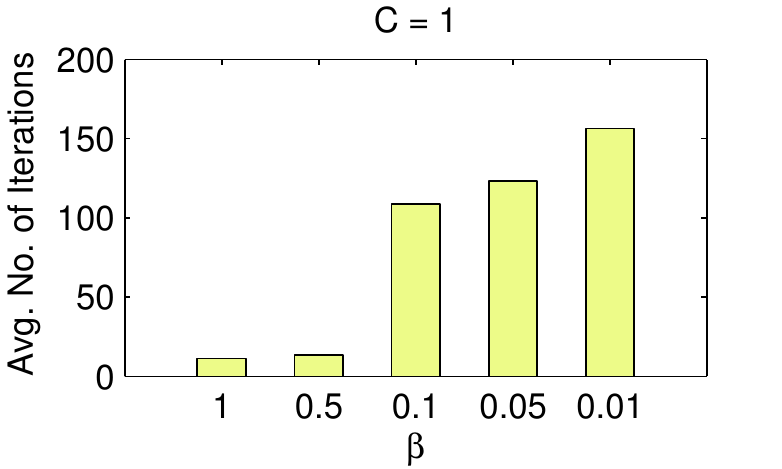}
\includegraphics[scale=0.64]{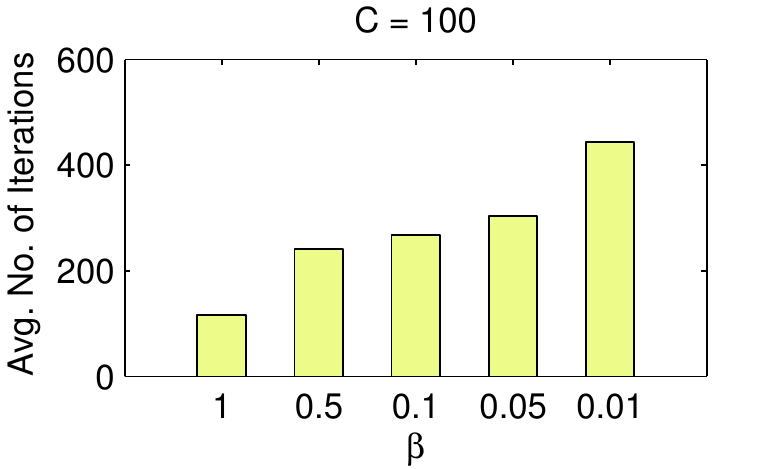}
}
\subfigure[covtype]{
\includegraphics[scale=0.64]{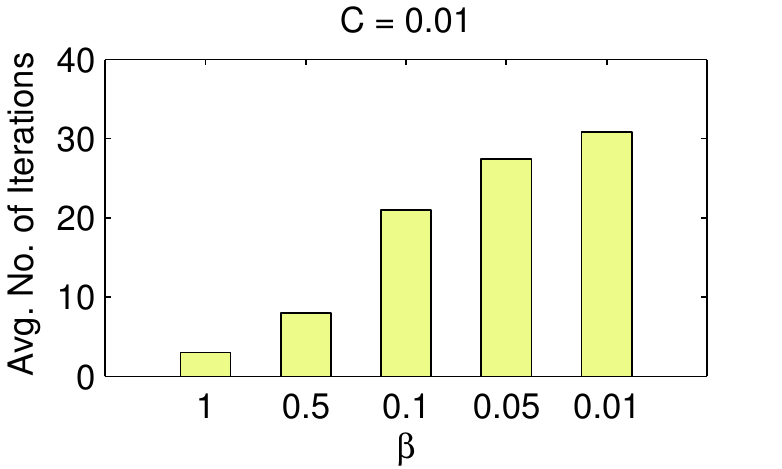}
\includegraphics[scale=0.64]{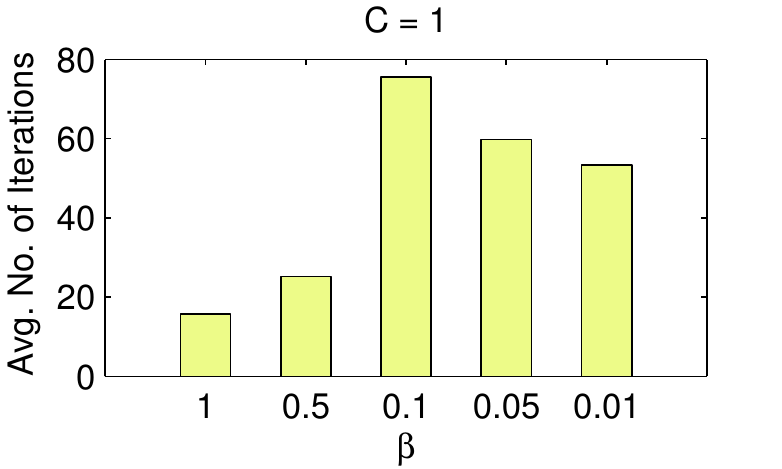}
\includegraphics[scale=0.64]{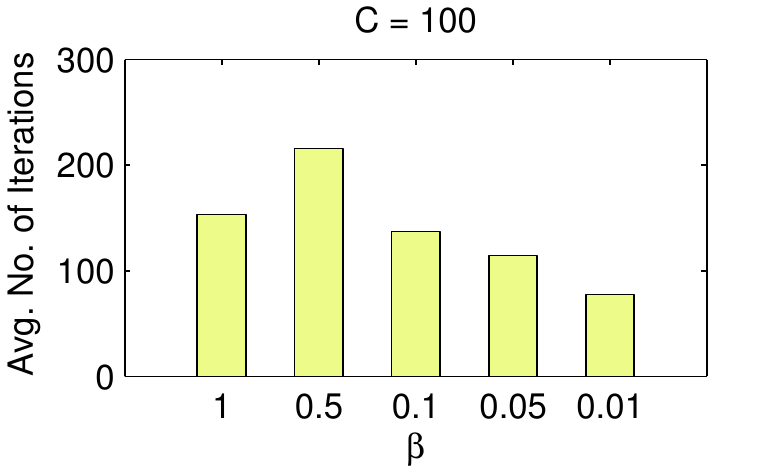}
}
\subfigure[ijcnn1]{
\includegraphics[scale=0.64]{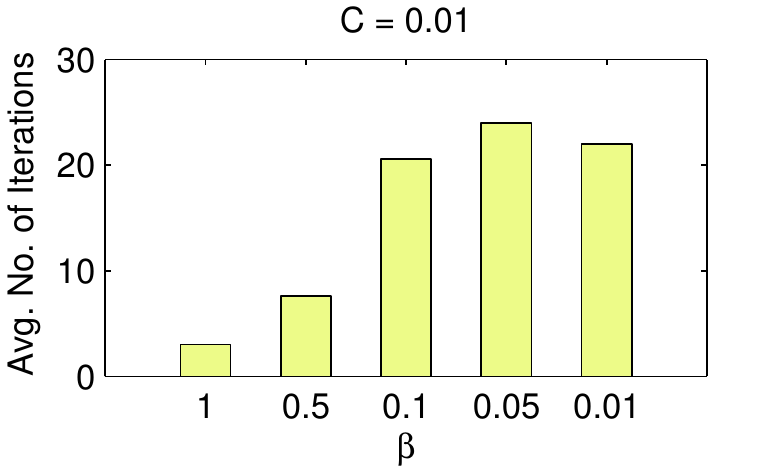}
\includegraphics[scale=0.64]{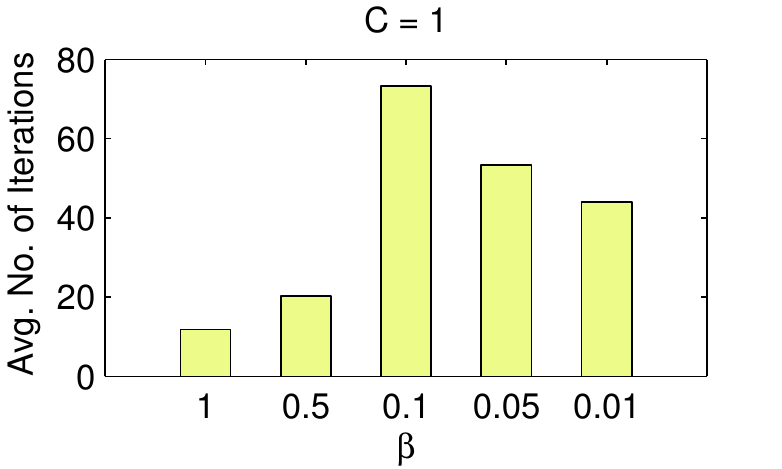}
\includegraphics[scale=0.64]{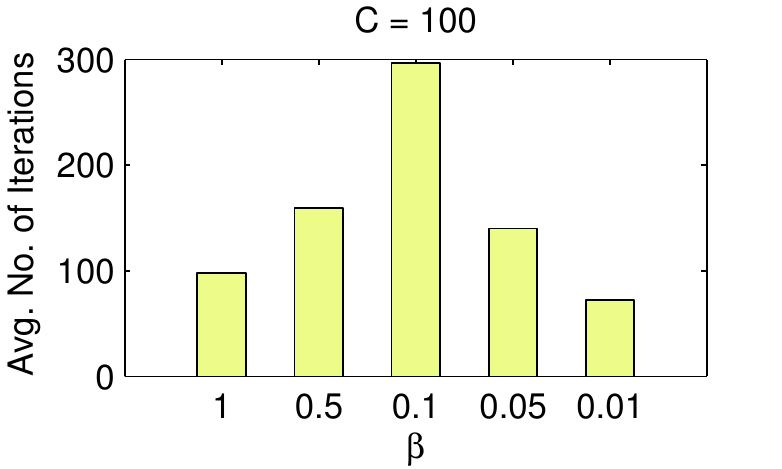}
}
%\vspace{-10pt}
\caption[{Partial AUC in $[0, \beta]$: Average number of cutting plane iterations in $\SVM_\pAUC$ vs. length of FPR interval for different values of $C$}]{Partial AUC in $[0, \beta]$: Average number of cutting plane iterations in $\SVM_\pAUC$ vs. length of FPR interval for different values of regularization parameter $C$. Here $\beta = 1$ corresponds to the full AUC optimizing method $\SVM_\AUC$.}%, with $\epsilon$ set to $10^{-4}$.}
%; it is often seen that the number of cutting plane iterations decreases with size of the FPR interval and increases with the value of $C$.
%; again, the number of cutting plane iterations decreases with size of the FPR interval, while increases as the interval is farther away to the right
\label{fig:cutting-plane-beta}
%\vspace{-10pt}
\end{figure}
\begin{figure}[t]
\centering
\subfigure{
\includegraphics[scale=0.64]{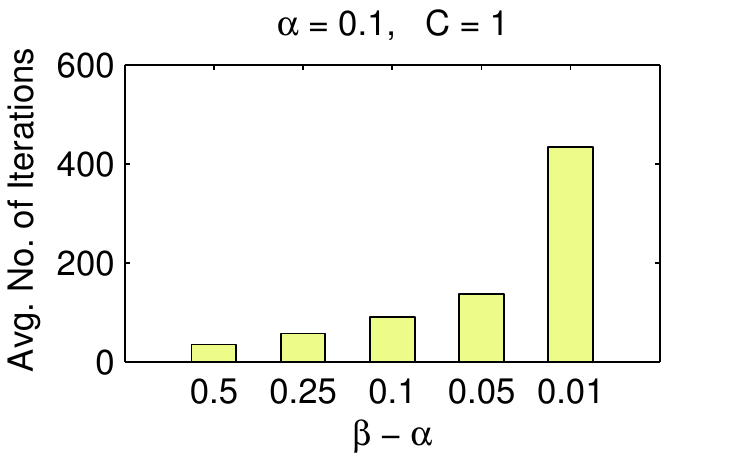}
\includegraphics[scale=0.64]{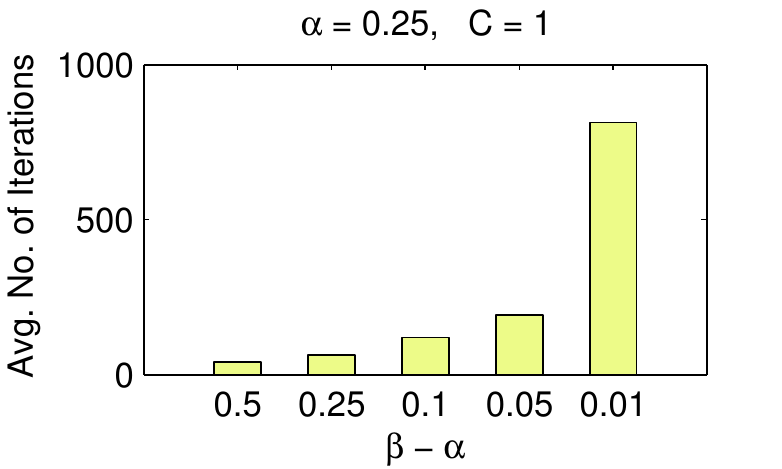}
\includegraphics[scale=0.64]{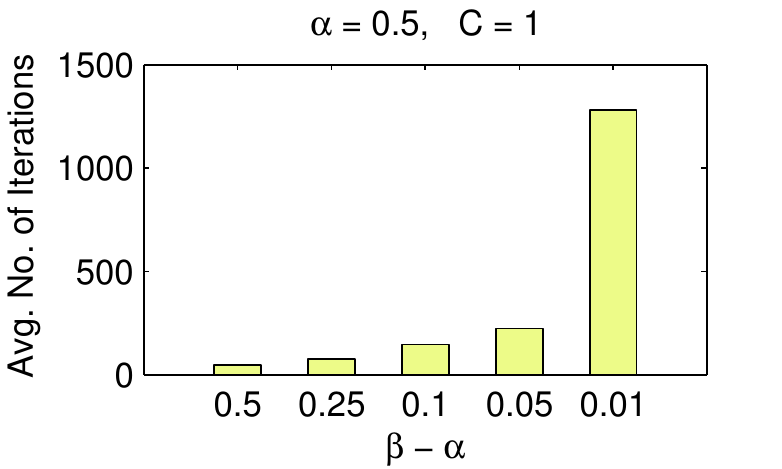}
}	
%\vspace{-25pt}
\caption[{Partial AUC in $[\alpha, \beta]$ on KDD Cup 08 data: Average number of cutting plane iterations  in $\SVM_\pAUC$ vs. length of FPR interval for different values of $\alpha$}]{Partial AUC in $[\alpha, \beta]$ on KDD Cup 08 data: Average number of cutting plane iterations  in $\SVM_\pAUC$ vs. length of FPR interval for different values of $\alpha$.}%, with $\epsilon$ set to $10^{-4}$.}
\label{fig:cutting-plane-general}
%\vspace{-0.1cm}
\end{figure}

%
%%~\\[-10pt]
\noindent \textbf{Run-time comparison with baseline methods.} In our final set of experiments, we compared the running times of the various algorithms evaluated above. Figure \ref{fig:time-beta} contains the average training times (across five train-test splits) for partial AUC maximization tasks involving $[0, \beta]$ FPR intervals on two data sets. We also report the time take for validation/parameter tuning in $\SVM_\pAUC$, $\SVM_\AUC$ and ASVM; the remaining two methods do not have tunable parameters. The running times for partial AUC maximization in a general $[\alpha, \beta]$ range is shown in Figure \ref{fig:time-general} for two data sets. All experiments were run on an Intel Xeon (2.13 GHz) machine with 12 GB RAM. 

Notice that except for $\SVM_\AUC$, all the baselines require higher or similar training times compared to $\SVM_\pAUC$. Also, as expected, for the $[\alpha,\beta]$ intervals, the DC programming based $\SVM^\dc_\pAUC$ (which solves an entire structural SVM problem in each iteration) requires higher training time than $\SVM_\pAUC$. The reason for the full AUC maximizing $\SVM_\AUC$ method being the fastest in all cases, despite the subroutine for finding the most-violated constraint requiring higher computational time compared to that for partial AUC, is because the number of iterations required by the cutting plane solver is lower for AUC. This will become clear in our next experiments where we report the number of iterations taken by the cutting plane solver under different settings.
~\\%[-15pt]

\noindent \textbf{Influence of $\alpha$, $\beta$ and $C$ on number of cutting plane iterations.}
We next analysed the number of iterations taken by the cutting plane method in $\SVM_\AUC$ and $\SVM_\pAUC$, i.e.\ the number of calls made to the routine for finding the most violated constraint (Algorithms \ref{algo:mvc} and \ref{algo:mvc-general}), for different FPR intervals and regularization parameter values. Our results for FPR ranges of the form $[0, \beta]$ are shown in Figure \ref{fig:cutting-plane-beta}, where we provide plots of the average number of cutting plane iterations (over five train-test splits) as a function of $\beta$ for different values of the regularization parameter $C$. It is often seen that the number of iterations of the cutting plane method increases as $\beta$ decreases (i.e.\ as the FPR interval becomes smaller), suggesting that optimizing partial AUC is harder for smaller intervals; this explains why the overall training time is lower for maximizing the full AUC (i.e.\ for $\SVM_\AUC$) compared to the time taken for maximizing partial AUC in a small interval. Similarly, the number of cutting plane iterations increases with $C$, as suggested by the method's convergence rate (see Section \ref{sec:svm-beta}). 

Our results for general FPR intervals $[\alpha, \beta]$ are shown in Figure \ref{fig:cutting-plane-general}, where we have plots of the average number of cutting plane iterations as a function of the interval length, for different values of $\alpha$. Again, it is seen that the number of iterations is higher for smaller FPR intervals; interestingly, the number of iterations also increases as the interval is farther away to the right.

%%%%%%%%%%%%%%%%%%%%%%%%%%%%%%%%%%%%%%%%%%%%%%%%%%%%%%%%%%%%%%

%%%%%%%%%%%%%%%%%%%%% Sec 8: Conclusion %%%%%%%%%%%%%%%%%%%%%%
\section{Conclusion and Open Questions}
\label{sec:conclusion}
%\vspace{-5pt}
The partial AUC is increasingly used as a performance measure in several machine learning and data mining applications. We have developed support vector algorithms for optimizing the partial AUC between two given false positive rates $\alpha$ and $\beta$. Unlike the full AUC, where it is straightforward to develop surrogate optimizing methods, even constructing a (tight) convex surrogate for the partial AUC turns out to be non-trivial. By exploiting the specific structure of the evaluation measure and extending the structural SVM framework of \cite{svmperf}, we have constructed convex surrogates for the partial AUC, and developed an efficient cutting plane method for solving the resulting optimization problem. In addition, we have also provided a DC programming method for optimizing a non-convex hinge surrogate that is tighter for general false positive ranges $[\alpha, \beta]$. Our empirical evaluations on several real-world and benchmark tasks indicate that the proposed methods do indeed optimize the partial AUC in the desired false positive range, often performing comparable to or better than existing baseline techniques.% both in terms of accuracy and running time. 

Subsequent to the conference versions of this paper, there have been a number of follow-up works. As noted earlier, one such work applied our algorithm to an important problem in personalized cancer treatment \cite{Majumder+15}, where the task was to predict clinical responses of patients to chemotherapy. Other works include minibatch extensions of our structural SVM method for the $[0, \beta]$ range to online and large scale stochastic settings \cite{Kar+14}, as well as an ensemble style version of this method with application to a problem in computer vision \cite{Paisitkriangkrai+13,Paisitkriangkrai+14}

There are several questions that remain open. Firstly, we observed in our experiments that the number of iterations required by the proposed cutting plane solvers to converge depends on the length of the specified FPR range, but this is not evident from the current convergence rate for the solver (see Section \ref{sec:svm-beta}) obtained from a result in \cite{lintime}. It would be of interest to see if tighter convergence rates that match our empirical observation can be shown for the cutting plane solver. Secondly, it would be useful to understand the consistency properties of the proposed algorithms. Such studies exist for methods that optimize the full AUC and more recently, for other complex performance measures for classification such as the F-measure \cite{Koyejo+14, Narasimhan+14, Narasimhan+15}, but these results do not extend directly to the methods in this paper. Finally, one could look at extensions of the proposed algorithms to  multiclass classification and ordinal regression settings, where often there are different constraints on the error rates of a predictor on different classes. Again, there has been work on optimizing multiclass versions of the full AUC \cite{Waegeman+08, Clemencon+13, UematsuLee13}, and similar methods for multiclass variants of the partial AUC will be useful to have.
 
  %, where similar evaluation metrics are of interest
%multiclass versions of the partial AUC evaluation measure, and suitably extend the proposed methods to 

%%%%%%%%%%%%%%%%%%%%%%%%%%%%%%%%%%%%%%%%%%%%%%%%%%%%%%%%%%%%%%

\subsection*{Acknowledgments}
HN thanks Prateek Jain and Purushottam Kar for helpful discussions. HN also thanks Purushottam Kar for providing preprocessed  versions of some of the data sets and a wrapper code for the experiments, and for an email correspondence that led us to identify an error in the version of \Thm{thm:surrogate-characterization} that appeared in a conference version of this paper \cite{NarasimhanAg13b}; the error has been corrected here. This work was done when HN was a PhD student at the Indian Institute of Science, Bangalore and was supported by a Google India PhD Fellowship. SA thanks DST for support under a Ramanujan Fellowship.

%%%%%%%%%%%%%%%% BIBLIOGRAPHY %%%%%%%%%%%%%%%%%%%%%%%
%\begin{small}
\bibliographystyle{plain}
\bibliography{partial-auc}
%\end{small

%%%%%%%%%%%%%%%%%%%%% Appendix %%%%%%%%%%%%%%%%%%%%%%

%\section*{Appendix}
\appendix
%\newpage
\allowdisplaybreaks
%\begin{center}
%{\LARGE Support Vector Algorithms for Optimizing the Partial Area Under the ROC Curve}\\[15pt]
%{\LARGE Appendix}
%\end{center}

\section{Proofs}
\subsection{Proof of Theorem \ref{thm:auc-struct}}
\label{app:auc-struct-proof}
\begin{proof}
We simplify the structural SVM surrogate into a pair-wise form:
\begin{eqnarray*}
\widehat{R}^\struct_{\AUC}(w; \, S) 
&=&
\underset{\pi \,\in\, \Pi_{m,n}}{\max}\big\{\Delta_{\AUC}(\pi^*,\pi) \,-\, (w^\top\phi(S,\pi^*) - w^\top \phi(S,\pi))\big\}\\ 
&=&
\underset{\pi  \,\in\,\Pi_{m,n}}
	{\operatorname{max}}
	\bigg\{\frac{1}{mn}\sum_{i=1}^m\sum_{j=1}^{n} \pi_{ij}\big(1 \,-\, w^\top( x^+_i -  x^-_j)\big)
		\bigg\}.
\end{eqnarray*}
Now consider solving a relaxed form of the above argmax over all matrices in $\{0,1\}^{m\times n}$.
Since the objective is linear in the individual entries of $\pi_{ij}$, each $\pi_{ij}$ can be optimized independently, giving us
\begin{eqnarray*}
\lefteqn{
\underset{\pi \,\in\, \{0,1\}^{m\times n}}{\max}\,
\frac{1}{mn}\sum_{i=1}^m\sum_{j=1}^{n} \pi_{ij}\big(1 \,-\, w^\top( x^+_i -  x^-_j)\big)
}\\
&=&
\frac{1}{mn}\sum_{i=1}^m\sum_{j=1}^{n} 
\underset{\pi_{ij} \,\in\, \{0,1\}}{\max}
\pi_{ij}\big(1 \,-\, w^\top( x^+_i -  x^-_j)\big)
\hspace{2cm}
\\
&=&
\frac{1}{mn}\sum_{i=1}^m\sum_{j=1}^{n} 
\bar{\pi}_{ij}\big(1 \,-\, w^\top( x^+_i -  x^-_j)\big),
\end{eqnarray*}
where $\bar{\pi}_{ij} = \1\big(w^\top( x^+_i -  x^-_j) \leq 1\big)$. It can be seen that this optimal matrix $\bar{\pi}$ is in fact a valid ordering matrix in $\Pi_{m,n}$, as it corresponds to ordering of instances where the positives are scored according to $w^\top x$ and the negatives are scored according to $w^\top x + 1$. Thus $\bar{\pi}$ is also a solution to the original unrelaxed problem and hence
\begin{eqnarray*}
\widehat{R}^\struct_{\AUC}(w; \, S) 
&=&
\frac{1}{mn}\sum_{i=1}^m\sum_{j=1}^{n} 
\bar{\pi}_{ij}\big(1 \,-\, w^\top( x^+_i -  x^-_j)\big)\\
&=&
\frac{1}{mn}\sum_{i=1}^m\sum_{j=1}^{n} 
\big(1 \,-\, w^\top( x^+_i -  x^-_j)\big)_+
\\
&=&
%\,=\,
\widehat{R}^\hinge_{\AUC}(w; \, S),
~~\text{as desired.}
\\[-1.5cm]
\end{eqnarray*}
\end{proof}

\subsection{Proof of Theorem \ref{thm:hinge-nonconvex}}
\label{app:hinge-non-convex}
We begin with some intuition. Suppose the given FPR range is $\displaystyle \bigg[\frac{n-1}{n}, 1\bigg]$ where $j_\alpha = n-1 > 0$ and $j_\beta = n$, the hinge surrogate reduces to a `min' of convex functions in $w$ and it is easy to see that there are training samples $S$ where the surrogate is non-convex in $w$. The same argument can be extended to general $k^{\text{th}}$ order statistic of a set of convex functions in $w$ when $k > 1$, and as seen below to general FPR intervals. % $[\alpha,\beta]$, with $\alpha > 0$.

In particular, for any given FPR range $[\alpha,\beta]$ with $\alpha > 0$, we provide a two-dimensional training sample $S = (S_+, S_-)$ with $m = 1$ positive instance and $n$ negative instances, for which the hinge surrogate $\widehat{R}^\hinge_\pAUC(\alpha,\beta)(w; S)$ (in \Eqn{eqn:emp-pauc-hinge-general}) is non-convex in $w$.  The training sample is constructed as follows: $S_+ = (x_1^+ = (0,0)^\top) \in \R^2$ and $S_- = (x^-_1, \ldots, x^-_n) \in (\R^2)^n$, with
\[
x^-_j =
\begin{cases}
(0,-1)^\top & \text{if } j = 1\\
(0,0)^\top & \text{if } j \in \{2,\ldots, j_\alpha\}\\
(-1,0)^\top & \text{if } j = j_\alpha + 1\\
(-1,-1)^\top & \text{otherwise}
\end{cases}.
\]
%
%Consider a one-dimensional data set containing one positive instance and two negative instances: $S_+ = (x^+_1 = 0) \in \R$ and $(x^-_1 = -1, x^-_2 = 1) \in \R^2$. We will be interested in partial AUC in the FPR range $[0.5, 1]$; in this case, $j_\alpha = \lfloor 0.5\times 2 \rfloor = 1$ and $j_\beta = \lceil 1\times 2 \rceil = 2$, and 
%
The corresponding hinge surrogate for a linear model $w \in \R^2$ is given by:
\begin{eqnarray*}
\widehat{R}^\hinge_{\pAUC(\alpha,\beta)}(w; S) 
		&=& \frac{1}{m(j_\beta-j_\alpha)} \sum_{i = 1}^m \sum_{j = j_\alpha+1}^{j_\beta} 
		\big( 1 \,+\, w^\top(x_{(j)_w}^- \,-\, x_i^+)\big)_+\\
		&=& \frac{1}{j_\beta-j_\alpha} \sum_{j = j_\alpha+1}^{j_\beta} 
		\big( 1 \,+\, w^\top(x_{(j)_w}^-)\big)_+.
\end{eqnarray*}
For simplicity let us refer to the above surrogate function in the specified FPR interval using the shorthand $\widehat{R}(w)$. To show that the surrogate is non-convex in $w$, we will now identify linear models $w_1, w_2 \in \R^2$ such that: $\widehat{R}(0.5\,w_1 + 0.5\,w_2) >  0.5\,\widehat{R}(w_1) \,+\, 0.5\,\widehat{R}(w_2).$ Let $w_1 = (1,0)^\top \in \R^2$ and $w_2 = (0,1)^\top \in \R^2$, and denote $w_3 = 0.5\,w_1 + 0.5\,w_2 = (0.5,0.5)^\top$, a convex combination of these models. Note that $w_1$ ranks the instances using the first feature, with all negative instances ranked in positions $j_\alpha+1$ to $j_\beta$ given a score of $-1$; the model $w_2$ ranks the instances using the second feature, with all negative instances ranked in positions $j_\alpha+1$ to $j_\beta$ again given a score of $-1$; the model $w_3$ ranks instances using a convex combination of the features, and in this case, the negative instance ranked in position $j_\alpha+1$ gets a score of $-0.5$, while those in positions $j_\alpha+2$ to $j_\beta$ get a score of $-1$. The surrogate then evaluates to the following values for these models:
%%~\\[-1cm]
%
\begin{table}[H]
\center
\begin{tabular}{l}
$\displaystyle \widehat{R}(w_1) \,=\, \frac{1}{j_\beta-j_\alpha} \sum_{j = j_\alpha+1}^{j_\beta} 
		\big( 1 \,+\, w_1^\top x_{(j)_{w_1}}^-\big)_+ \,=\, \frac{1}{j_\beta-j_\alpha} \sum_{j = j_\alpha+1}^{j_\beta} ( 1 - 1 )_+ \,=\, 0$
\\
%\end{tabular}
%\vspace{-0.3cm}
%\end{table}
%\begin{table}[H]
%\center
%\begin{tabular}{l}
$\displaystyle \widehat{R}(w_2) \,=\, \frac{1}{j_\beta-j_\alpha} \sum_{j = j_\alpha+1}^{j_\beta} 
		\big( 1 \,+\, w_1^\top x_{(j)_{w_1}}^-\big)_+ \,=\, \frac{1}{j_\beta-j_\alpha} \sum_{j = j_\alpha+1}^{j_\beta}  ( 1 - 1)_+ \,=\, 0$\\
$\displaystyle \widehat{R}(w_3) \,=\, \frac{1}{j_\beta-j_\alpha} \sum_{j = j_\alpha+1}^{j_\beta} 
		\big( 1 \,+\, w_3^\top x_{(j)_{w_3}}^-\big)_+ \,=\, \frac{1}{j_\beta-j_\alpha} \Big[(1-0.5)_+ \,+\, \sum_{j = j_\alpha+2}^{j_\beta} ( 1 - 1)_+\Big]$\\[5pt]
\hspace{1.22cm}$\displaystyle\,=\,\frac{0.5}{j_\beta-j_\alpha}.$
\end{tabular}
\vspace{-0.3cm}
\end{table}
\noindent Clearly, $0.5\,\widehat{R}(w_1) \,+\, 0.5\,\widehat{R}(w_2) \,=\, 0 \,<\, \widehat{R}(w_3),$ confirming $\widehat{R}$ is non-convex in $w$.

\subsection{Proof of Theorem \ref{thm:surrogate-characterization}}
\label{app:surrogate-characterization-proof}
\begin{proof}
Let us denote $N = m(j_\beta - j_\alpha)$. From \Eqn{eqn:mvc-Q}, the structural SVM surrogate for partial AUC in a general interval $[\alpha, \beta]$  can be written in the following simplified form:
\begin{eqnarray}
\lefteqn{\widehat{R}^\tight_{\pAUC(\alpha,\beta)}(w; S)} \nonumber\\
&=&
\underset{\pi  \,\in\,\Pi^w_{m,j_\beta}}{\max}\,
		\frac{1}{N}\sum_{i=1}^m\bigg[
			-\sum_{j=1}^{j_\alpha}\pi_{i(j)_w} w^\top (x^+_i- \bar{z}_{j}) \,+\, \sum_{j=j_\alpha+1}^{j_\beta}\pi_{i(j)_w} \big(1-w^\top (x^+_i- \bar{z}_{j})\big)
		\bigg] \nonumber
		\\
&=&
\underset{\pi  \,\in\,\Pi^w_{m,j_\beta}}{\max}\,
		\frac{1}{N}\sum_{i=1}^m\bigg[
			-\sum_{j=1}^{j_\alpha}\pi_{i(j)_w} w^\top (x^+_i- x^-_{(j)_w}) \,+\, \sum_{j=j_\alpha+1}^{j_\beta}\pi_{i(j)_w} \big(1-w^\top (x^+_i- x^-_{(j)_w})\big)
		\bigg], \nonumber
		\\		
		\label{eqn:surrogate-characterization-1}
%~\\[-0.5cm]\nonumber
\end{eqnarray}
where recall that $\bar{z}_j$ denotes the $j$-th ranked instance negative instance (among all instances in $S_-$, in descending order of scores) by $w^\top x$. 

The upper bound is obtained by maximizing each $\pi_{ij}$ independently:
\begin{eqnarray*}
\lefteqn{\widehat{R}^\tight_{\pAUC(\alpha,\beta)}(w; S)}\\
&\leq&
		\frac{1}{N}\sum_{i=1}^m\bigg[
			-\sum_{j=1}^{j_\alpha}
			\max_{\pi_{i(j)_w} \in \{0,1\}}			
			\pi_{i(j)_w} w^\top (x^+_i- x^-_{(j)_w})\\
&&
		\hspace{4cm}
			 \,+\, 
			\sum_{j=j_\alpha+1}^{j_\beta}
			\max_{\pi_{i(j)_w} \in \{0,1\}}			
			\pi_{i(j)_w} \big(1-w^\top (x^+_i- x^-_{(j)_w})\big)
		\bigg]
		\hspace{1cm}		
		\\
&=&	\frac{1}{N}\sum_{i=1}^m\bigg[
			\sum_{j=1}^{j_\alpha}		
			\big(-w^\top (x^+_i- x^-_{(j)_w})\big)_+
			\,+\, 
			\sum_{j=j_\alpha+1}^{j_\beta}		
			\big(1-w^\top (x^+_i- x^-_{(j)_w})\big)_+
		\bigg].
\end{eqnarray*}

The lower bound is obtained by substituting in \Eqn{eqn:surrogate-characterization-1} a specific ordering matrix. Define $\hat{\pi} \in \{0,1\}^{m\times j_\beta}$
as follows: for each $x_i^+$ such that $w^\top x_i^+ < w^\top x_{(j_\alpha)_w}^-$,
\begin{equation*}
\label{eq:ybar1}
\hat{\pi}_{i(j)_w} = 
\begin{cases}
1
& \text{if } j \in \{1, \ldots, j_\alpha\}\\
\textbf{1}\big(w^\top x^+_i- w^\top x^-_{(j)_w} \leq 1\big) 
& \text{otherwise}
\end{cases}
\end{equation*}
and for each $x_i^+$ such that $w^\top x_i^+ \geq w^\top x_{(j_\alpha)_w}^-$,
\begin{equation*}
\label{eq:ybar2}
\hat{\pi}_{i(j)_w} = 
\begin{cases}
\textbf{1}\big(w^\top x^+_i- w^\top x^-_{(j)_w} \leq 0\big) 
& \text{if } j \in \{1, \ldots, j_\alpha - 1\}\\
0
& \text{otherwise}
\end{cases}.
\end{equation*}
The matrix $\hat{\pi}$ corresponds to an ordering of instances where all negative instances and the positive instances $i$ for which $w^\top x_i^+ \geq w^\top x_{(j_\alpha)_w}^-$ are scored according to $w^\top x$, while the remaining positive instances are scored by $w^\top x -1$. Clearly, $\hat{\pi}$ is a valid ordering matrix in $\Pi^w_{m,j_\beta}$. We then have
\begin{eqnarray}
\lefteqn{\widehat{R}^\tight_{\pAUC(\alpha,\beta)}(w; S)}
\nonumber
\\
&\geq&
		\frac{1}{N}\sum_{i=1}^m\bigg[
			\sum_{j=1}^{j_\alpha}
			\hat{\pi}_{i(j)_w} \big(-w^\top (x^+_i- x^-_{(j)_w})\big)
			 \,+\, 
			\sum_{j=j_\alpha+1}^{j_\beta}
			\hat{\pi}_{i(j)_w} \big(1-w^\top (x^+_i- x^-_{(j)_w})\big)
		\bigg] \nonumber\\
&=&
		\frac{1}{N}\sum_{i: w^\top x_i^+ < w^\top x_{(j_\alpha)_w}^-}\bigg[
			\sum_{j=1}^{j_\alpha}
			(1)\big(-w^\top (x^+_i- x^-_{(j)_w})\big)
			 \,+\, 
			\sum_{j=j_\alpha+1}^{j_\beta}
			\big(1-w^\top (x^+_i- x^-_{(j)_w})\big)_+
		\bigg] \nonumber\\
&&
		\hspace{3.8	cm}
		\,+\,
		\frac{1}{N}\sum_{i: w^\top x_i^+ \geq w^\top x_{(j_\alpha)_w}^-}\bigg[
			\sum_{j=1}^{j_\alpha}
			\big(-w^\top (x^+_i- x^-_{(j)_w})\big)_+
			 \,+\, 
			 0
		\bigg],\nonumber
\end{eqnarray}
which follows from the definition of $\hat{\pi}$. Now, for all indices $i$ and $j$ in the first term, $-w^\top (x^+_i- x^-_{(j)_w}) \,=\, -w^\top x^+_i + w^\top x^-_{(j)_w} \,\geq\, -w^\top x^+_i + w^\top x^-_{(j_\alpha)_w} \,\geq\, 0$, which implies  $-w^\top (x^+_i- x^-_{(j)_w}) = \big(-w^\top (x^+_i- x^-_{(j)_w})\big)_+$.  As a result,
\begin{eqnarray}
\lefteqn{\widehat{R}^\tight_{\pAUC(\alpha,\beta)}(w; S)}
\nonumber
\\
&\geq&
		\frac{1}{N}\sum_{i: w^\top x_i^+ < w^\top x_{(j_\alpha)_w}^-}\bigg[
			\sum_{j=1}^{j_\alpha}
			\big(-w^\top (x^+_i- x^-_{(j)_w})\big)_+
			 \,+\, 
			\sum_{j=j_\alpha+1}^{j_\beta}
			\big(1-w^\top (x^+_i- x^-_{(j)_w})\big)_+
		\bigg] \nonumber\\
&&
		\hspace{4.7cm}
		\,+\,
		\frac{1}{N}\sum_{i: w^\top x_i^+ \geq w^\top x_{(j_\alpha)_w}^-}
			\sum_{j=1}^{j_\alpha}
			\big(-w^\top (x^+_i- x^-_{(j)_w})\big)_+.
\label{eqn:pi-hat}
\end{eqnarray}
The desired bound then follows from the non-negativity of the last term.
%Further, since the last term is always non-negative,
%%Similarly, for all indices $i$ and $j$ in the last term, $-w^\top (x^+_i- x^-_{(j)_w}) \,=\, -w^\top x^+_i + w^\top x^-_{(j)_w} \,\leq\, -w^\top (x^+_i- x^-_{(j_\alpha)_w}) \,\leq\, 0$, and as a result $(-w^\top (x^+_i- x^-_{(j)_w}))_+ = 0$. This gives us
%\begin{eqnarray*}
%\lefteqn{\widehat{R}^\tight_{\pAUC(\alpha,\beta)}(w; S) \,\geq\,}\\
%&&
%		\frac{1}{N}\sum_{i: w^\top x_i^+ < w^\top x_{(j_\alpha)_w}^-}\bigg[
%			\sum_{j=1}^{j_\alpha}
%			\big(-w^\top (x^+_i- x^-_{(j)_w})\big)_+
%			 \,+\, 
%			\sum_{j=j_\alpha+1}^{j_\beta}
%			\big(1-w^\top (x^+_i- x^-_{(j)_w})\big)_+
%		\bigg],
%\end{eqnarray*}
%as desired.

Finally, we show that the upper bound on the surrogate holds with equality when $|w^\top x^+_i - w^\top x^-_j| \geq 1,\, \forall\, i \in \{1,\ldots,m\}, \,j \in \{1,\ldots,n\}$. 
%In fact, we shall show that this value is attained at the matrix $\hat{\pi}$ defined above. 
Under this condition, if $w^\top (x^+_i- x^-_{(j)_w})$ is positive for some $i \in \{1,\ldots,m\}$ and $j \in \{1,\ldots,n\}$ , then it is also the case that $w^\top (x^+_i- x^-_{(j)_w}) \geq 1$. This therefore gives us:
\begin{eqnarray*}
%%~\\[-1.2cm]
\frac{1}{N}\sum_{i: w^\top x_i^+ \geq w^\top x_{(j_\alpha)_w}^-}
			\sum_{j=j_\alpha+1}^{j_\beta}
			\big(1-w^\top (x^+_i- x^-_{(j)_w})\big)_+ ~=~ 0
\end{eqnarray*}
Adding this term to the lower bound on the surrogate in \Eqn{eqn:pi-hat}, we have
\begin{eqnarray*}
\lefteqn{
\widehat{R}^\tight_{\pAUC(\alpha,\beta)}(w; S)
}
\nonumber
\\
&\geq&
		\frac{1}{N}\sum_{i = 1}^m\bigg[
			\sum_{j=1}^{j_\alpha}
			\big(-w^\top (x^+_i- x^-_{(j)_w})\big)_+
			 \,+\, 
			\sum_{j=j_\alpha+1}^{j_\beta}
			\big(1-w^\top (x^+_i- x^-_{(j)_w})\big)_+
		\bigg], \hspace{1cm}
\end{eqnarray*}
which is same as the upper bound on the surrogate in theorem statement; hence under the given condition on $w$, the surrogate is equal to the upper bound.
\end{proof}

\subsection{Proof of Theorem \ref{thm:pauc-gen-bound}}
\label{app:pauc-gen-bound-proof}
For simplicity, we provide the proof for FPR ranges of the form $[0, \beta]$. The proof can be easily extended to the general case by rewriting the partial AUC risk in $[\alpha, \beta]$ as a difference of (renormalized) partial AUC risks in $[0,\beta]$ and $[0,\alpha]$. Specifically, denoting henceforth the population pAUC risk in $[0,\beta]$ as $R_\beta$ and the corresponding empirical pAUC risk as $\widehat{R}_\beta$, we will show that with probability at least $1 -\delta$ (over draw of the training sample $S$), for all scoring functions $f \in \F$,
\[
{R}_{\beta}[f; \D] \,-\,\widehat{R}_{\beta}[f; S] ~\leq~ C\bigg(\sqrt{\frac{d\ln(m) + \ln(1/\delta)}{m}} ~\leq~ \frac{1}{\beta}\sqrt{\frac{d\ln(n) + \ln(1/\delta)}{n}}\bigg),
\]
where $d$ is the VC dimension of $\T_\F$, and $C > 0$ is a distribution-independent constant. We shall also assume throughout that $f$ has no ties and that $n\beta$ is an integer. Again it is straightforward to extend the proof to settings where these assumptions do not hold. %to non-integral values of this quantity.

We begin by introducing some notations.

%\textbf{Top-$\beta$ Thresholds}. For a given $f: \X \> \R$, and false positive rate (FPR) $\beta \in (0, 1]$, we shall find it convenient to define the (population) top-$\beta$ threshold w.r.t. distribution $\D_-$, as the smallest threshold $t_{\D_-, f, \beta}$ for which the classifier $\sign \circ (f - t_{\D_-, f, \beta})$ has a FPR greater than or equal to $\beta$. More specifically,
%\[
%t_{\D_-, f, \beta} = \arginf_{t \in \R} \Big\{ t \in \R \,\bigg|\, \P_{x^- \sim \D_-}\big[f(x) > t\big] \geq \beta \Big\}.
%\]
%The empirical version of the top-$\beta$ threshold for a training sample of $n$ negatives $S_-$ is given by
%\[
%\wt_{S_-, f, \beta} = \argmin_{t \in \R} \bigg\{ t \in \R \,\bigg|\, \frac{1}{n}\sum_{j=1}^n \1\big(f(x) > t\big) \leq \beta \bigg\}.
%\]
%The empirical threshold defined above can also be seen as the threshold on $f$ above which $\lceil n\beta \rceil$ of the negative instances in $S_-$ are ranked by $f$; for ease of exposition, we shall assume henceforth that $n\beta$ is an integer, and avoid the use of a `ceil' while describing the top-$\beta$ fraction of negatives. 

\textbf{Top-$\beta$ Thresholds}. For a given $f: \X \> \R$, and false positive rate (FPR) $\beta \in (0, 1]$, we shall find it convenient to define the (population) top-$\beta$ threshold w.r.t.\ distribution $\D_-$ as the smallest threshold $t_{\D_-, f, \beta} \in \R$ for which the classifier $\sign \circ (f - t_{\D_-, f, \beta})$ has a FPR equal to $\beta$.\footnote{For simplicity, we consider only distributions $\D$ and false positive intervals $[0, \beta]$ for which the threshold $t_{\D_-, f, \beta}$ exists.} More specifically,
\[
t_{\D_-, f, \beta} = \arginf_{t \in \R} \Big\{ t \in \R ~\Big|~ \P_{x^- \sim \D_-}\big[f(x^-) > t\big] = \beta \Big\}.
\]
It follows from the above definition that $\E_{x^- \sim \D_-}\big[\1\big(f(x^-) > t_{\D_-, f, \beta}\big)\big] = \beta$. The empirical version of the top-$\beta$ threshold for a training sample of $n$ negatives $S_-$ is then:
\[
\wt_{S_-, f, \beta} = \argmin_{t \in \R} \bigg\{ t \in \R \,\bigg|\, \frac{1}{n}\sum_{j=1}^n \1\big(f(x^-_j) > t\big) \geq \beta \bigg\}.
\]
Given that $f$ has no ties, $\wt_{S_-, f, \beta}$ is the threshold on $f$ above which $n\beta$ of the negative instances in $S_-$ are ranked by $f$; in other words, $\sum_{j=1}^n \1\big(f(x_j^-) > \wt_{S_-, f, \beta}\big) = n\beta$. 

\textbf{Partial AUC risk.} We now rewrite the population and empirical partial AUC risks in \Eqn{eqn:gen-bound-pop-risk} and \ref{eqn:gen-bound-emp-risk} in terms of the above thresholds. For any scoring function $f: \X \> \R$, the population partial AUC risk for an FPR interval $[0, \beta]$ can be written as
\[
R_{\beta}[f; \D] \,=\, \frac{1}{\beta}\E_{x^+ \sim \D_+, x^- \sim \D_-}\Big[\1\big(f(x^+) \leq f(x^-), ~ f(x^-) > t_{\D_-, f, \beta}\big)\Big],
\]
and the empirical partial AUC risk for sample $S$ is given by
\[
\werr_{\beta}[f; S] \,=\, \frac{1}{mn\beta} \sum_{i=1}^{m}\sum_{j=1}^{n}\1\big(f(x_i^+) \leq f(x_j^-), ~ f(x_j^-) > \wt_{S_-, f, \beta}\big).
\]
%When $\beta = 1$, the above definitions correspond to that for the full AUC.
%\[
%\er^\auc_D[f] \,=\, \E_{\x^+ \sim \D_+, \x^- \sim \D_-}\Big[\1\big(f(x^+) \leq f(x^-)\big)\Big];
%~~\werr^\auc_S[f] \,=\, \frac{1}{mn} \sum_{i=1}^{m}\sum_{j=1}^{n}\1\big(f(x_i^+) \leq f(x_j^-)\big).
%\]
%
%
%%Next define $x^-_{(j)_f}$ as the $j^{\text{th}}$ negative instance ranked by $f$; the empirical partial AUC can then be expressed as
%\[
%\werr^\pauc_S[f; \beta] \,=\, \frac{1}{mn\beta} \sum_{i=1}^{m}\sum_{j=1}^{n\beta}\1\Big(f\big(x_i^+\big) < f\big(x_{(j)_f}^-\big)\Big).
%\]
 
%\textbf{Threshold classifiers:} For a given class of real-valued function $\F$ on $\X$, we define $\T_\F$ as the set of classifiers obtained by thresholding the functions in $\F$: $\T_\F = \big\{\sign \circ (f - t) \,|\, f \in \F, \, t \in \R\big\}$.
% 
%We are now ready to state our main theorem.
%
%\begin{thm}
%\label{thm:pauc-gen-bound}
%Let $\F$ be a class of real-valued functions on $\X$, and $\T_\F = \big\{\sign \circ (f - t) \,|\, f \in \F, \, t \in \R\big\}$. Let $\delta > 0$. Then with probability at least $1 - \delta$ (over draw of $S = (S_+, S_-)$ from $\D_+^m \times \D_-^n$), we have for any $f \in \F$,
%\[
%\werr_{\beta}[f; S] ~\leq~ R_{\beta}[f; \D] \,+\, C\bigg(\sqrt{\frac{d\ln(m) + \ln(1/\delta)}{m}} + \frac{1}{\beta}\sqrt{\frac{d\ln(n) + \ln(1/\delta)}{n}}\bigg),
%\]
%where $d$ is the VC-dimension of $\T_\F$, and $C > 0$ is a distribution-independent constant.
%\end{thm}

Before providing the proof of the theorem, we will find it convenient to state the following uniform convergence result.
\begin{lemma}
Let $\F$ be a class of real-valued functions on $\X$, and $\T_\F = \big\{\sign \circ (f - t) \,|\, f \in \F, \, t \in \R\big\}$. Fix any $\epsilon > 0$. We then have
\[
\P_{S_+ \sim \D_+^m} \bigg(\bigcup_{f \in \F}
\bigcup_{t \in \R}\bigg\{
\bigg|
\frac{1}{m}\sum_{i=1}^{m}\1\big(f(x_i^+) \leq t\big)
\,-\,
 \E_{x^+ \sim \D_+}\Big[\1\big(f(x^+) \leq t\big)\Big]
 \bigg|
\,\geq \, \epsilon \bigg\}\bigg)
\,\leq\,
C_1 \, m^d e^{-2m\epsilon^2},
\]
where $d$ is the VC-dimension of $\T_\F$, and $C_1 > 0$ is a distribution-independent constant. Similarly,
\[
\P_{S_- \sim \D_-^n} \bigg(\bigcup_{f \in \F}
\bigcup_{t \in \R}\bigg\{
\bigg|
\frac{1}{n}\sum_{j=1}^{n}\1\big(f(x_j^-) > t\big)
\,-\,
 \E_{x^- \sim \D_-}\Big[\1\big(f(x^-) > t\big)\Big]\bigg|
\,\geq \, \epsilon \bigg\}\bigg)
\,\leq\,
C_2 \, n^d e^{-2n\epsilon^2},
\]
where $C_2 > 0$ is a distribution-independent constant.
\label{lem:uniform-covergence}
\end{lemma}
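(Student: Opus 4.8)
The plan is to recognize both inequalities as instances of the classical Vapnik--Chervonenkis uniform convergence theorem for a family of $\{0,1\}$-valued functions whose complexity is governed by the VC dimension $d$ of $\T_\F$. First I would rewrite each indicator in terms of a thresholded classifier: for any $f \in \F$ and $t \in \R$, the event $\{f(x^+) \leq t\}$ is precisely $\{\sign(f(x^+) - t) = -1\}$, i.e.\ the region where the classifier $\sign \circ (f - t) \in \T_\F$ predicts negative. Hence the collection of sets $\big\{ \{x \in \X : f(x) \leq t\} : f \in \F,\, t \in \R \big\}$ is obtained from the positive-prediction regions of $\T_\F$ by complementation, and since complementing every set in a family leaves its shattering behaviour (and therefore its VC dimension) unchanged, this collection also has VC dimension $d$. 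The quantity inside the first probability is then exactly the uniform (over this set family) deviation between the empirical frequency $\frac{1}{m}\sum_i \1(x_i^+ \in A)$ and the true probability $\P_{x^+ \sim \D_+}(x^+ \in A)$.

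Next I would invoke the standard VC argument to control this uniform deviation. By Sauer's lemma, the growth function of a set family of VC dimension $d$ evaluated on a sample of size $m$ (or $2m$, after symmetrization) is at most $(em/d)^d = O(m^d)$, which produces the polynomial prefactor $m^d$. Each fixed set induces a sum of i.i.d.\ $\{0,1\}$ random variables, to which a Hoeffding-type bound applies and contributes the exponential factor $e^{-2m\epsilon^2}$. Combining these via the usual ghost-sample symmetrization step and a union bound over the finitely many dichotomies realizable on the double sample yields a bound of the form $C_1\, m^d e^{-2m\epsilon^2}$ for a distribution-independent constant $C_1$; I would cite the standard statement (e.g.\ Vapnik--Chervonenkis, or Devroye--Gy\"orfi--Lugosi) rather than reproduce the symmetrization from scratch.

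The second inequality, concerning $\frac{1}{n}\sum_j \1(f(x_j^-) > t)$ against $\E_{x^-}[\1(f(x^-) > t)]$, is handled identically. Here the relevant set family is $\big\{ \{x : f(x) > t\} : f \in \F,\, t \in \R \big\}$, which is exactly the family of positive-prediction regions of $\T_\F$ and thus has VC dimension $d$ by hypothesis; the same Sauer-plus-Hoeffding argument then gives $C_2\, n^d e^{-2n\epsilon^2}$.

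The only genuine subtlety, and hence the step I would be most careful about, is the reduction to the VC framework: one must confirm that ranging over \emph{both} $f \in \F$ and the continuous threshold $t \in \R$ produces a set family of VC dimension exactly $d$, which is precisely why the hypothesis is stated in terms of $\T_\F = \{\sign \circ (f - t)\}$ rather than $\F$ alone. Once this identification is in place, everything else is a direct appeal to off-the-shelf VC concentration, with the $\frac{1}{\beta}$ and $\frac{1}{\beta-\alpha}$ normalizers playing no role at this stage; they enter only later, when the partial-AUC risk is reassembled from these empirical-process building blocks in the proof of \Thm{thm:pauc-gen-bound}.
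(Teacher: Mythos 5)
Your proposal is correct and follows essentially the same route as the paper, which itself offers no detailed proof but simply remarks that the lemma ``can be proved by a standard VC-dimension based uniform convergence argument over the class of thresholded classifiers $\T_\F$''; your reduction of the two set families $\{x: f(x)\leq t\}$ and $\{x: f(x)>t\}$ to the prediction regions of $\T_\F$ (noting invariance of VC dimension under complementation), followed by Sauer's lemma, symmetrization, and Hoeffding, is exactly that standard argument spelled out. The one point worth flagging is that the classical symmetrization bounds typically yield a weaker exponent (e.g.\ $e^{-m\epsilon^2/32}$) or a larger polynomial prefactor (e.g.\ $S(\cdot,m^2)\sim m^{2d}$ in Devroye's sharp-exponent refinement), so to land on the precise form $C_1\,m^d e^{-2m\epsilon^2}$ one should cite the appropriate refined statement --- but this is a feature of the paper's own formulation of the lemma, not a defect of your argument.
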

The above lemma can be proved by a standard VC-dimension based uniform convergence argument over the class of thresholded classifiers $\T_\F$. We are now ready to prove \Thm{thm:pauc-gen-bound}.

\begin{proof}[Proof of Theorem \ref{thm:pauc-gen-bound}]
We will find it convenient to define for a function $f$ and negative instance $x^-$, the expected loss on a randomly drawn positive instance, $\ell_+(f, x^-) = \E_{x^+ \sim \D_+}\big[\1\big(f(x^+) \leq f(x^-)\big)\big]$. We further introduce the following error terms defined in terms of the population and empirical top-$\beta$ thresholds respectively: %, and the expected pair-wise loss ${\ell(f) = \E_{x^- }\big[\ell_+(f, x^-)\big]}$
\begin{eqnarray*}
\terr_{\beta}[f; \D, S_-] &=& \frac{1}{n\beta} \sum_{j=1}^n \ell_+(f, x_j^-)\,\1\big(f(x_j^-) > t_{\D_-, f, \beta}\big);\\
\berr_{\beta}[f; \D_+, S_-] &=& \frac{1}{n\beta} \sum_{j=1}^n \ell_+(f, x_j^-)\,\1\big(f(x_j^-) > \wt_{S_-, f, \beta}\big).
\end{eqnarray*}

We then have for any $f \in \F$,
\begin{eqnarray*}
\lefteqn{R_{\beta}[f; \D]\,-\,\werr_{\beta}[f; S]
~=~ 
%\underbrace{
\big(R_{\beta}[f; \D] \,-\, \terr_{\beta}[f; \D, S_-]\big)
%}_{\term_1(f; S)}
}
&&\\
&& 
\hspace{2cm}
~+~
%\underbrace{
\big( \terr_{\beta}[f; \D, S_-] \,-\, \berr_{\beta}[f; \D_+, S_-]\big)
%}_{\term_2(f; S_-)}
~+~
\big(\berr_{\beta}[f; \D_+, S_-]\,-\, \werr_{\beta}[f; S]\big)
%\underbrace{
%}_{\term_3(f; S_-)}
.
\end{eqnarray*}
Thus for any $\epsilon > 0$,
\begin{eqnarray*}
\lefteqn{\P_{S \sim D_+^m \times D_-^n} \bigg(\bigcup_{f \in \F}\Big\{
R_{\beta}[f; \D] \,-\, \werr_{\beta}[f; S] \,\geq \, \epsilon \Big\}\bigg)}\\
&\leq& 
\underbrace{
 \P_{S_- \sim D_-^n} \bigg(\bigcup_{f \in \F}\Big\{R_{\beta}[f; \D]\,-\,\terr_{\beta}[f; \D, S_-] \,\geq \, \epsilon/3 \Big\}\bigg)
}_{\term_1}
\\
&& \hspace{2.5cm}
~+~ 
\underbrace{
\P_{S_- \sim D_-^n} \bigg(\bigcup_{f \in \F}\Big\{\terr_{\beta}[f; \D, S_-]\,-\,\berr_{\beta}[f; \D_+, S_-] \,\geq \, \epsilon/3\Big\} \bigg)}_{\term_2}
\\
&& \hspace{2.5cm}
 ~+~ 
\underbrace{ 
\P_{S \sim D_+^m \times D_-^n} 
\bigg(\bigcup_{f \in \F}\Big\{\berr_{\beta}[f; \D_+, S_-]\,-\,\werr_{\beta}[f; S]  \,\geq \, \epsilon/3\Big\} \bigg)
}_{\term_3} 
.
\end{eqnarray*}

We now bound each of the above probability terms separately. We start with the first term.
\begin{eqnarray*}
\lefteqn{\, R_{\beta}[f; \D]\,-\,\terr_{\beta}[f; \D, S_-]}\\
&=&
\frac{1}{\beta}\E_{x^- }\Big[ \ell_+(f, x^-)\,\1\big(f(x^-) > t_{\D_-, f, \beta}\big) \Big]\,-\,
\frac{1}{n\beta} \sum_{j=1}^n \ell_+(f, x_j^-)\,\1\big(f(x_j^-) > t_{\D_-, f, \beta}\big) \\
&=&
\frac{1}{\beta}
\E_{x^+}\bigg[
\E_{x^- }\Big[\1\big(f(x^+) \leq f(x^-)\big)\1\big(f(x^-) > t_{\D_-, f, \beta}\big)\Big]
\\
&&
\hspace{4.5cm}
~-~\frac{1}{n} \sum_{j=1}^n \1\big(f(x^+) \leq f(x_j^-)\big)\1\big(f(x_j^-) > t_{\D_-, f, \beta}\big) 
  \bigg]
 \\
&=&
\frac{1}{\beta}
\E_{x^+}\bigg[\E_{x^- }\Big[\1\Big(f(x^-) > \max\big\{f(x^+), t_{\D_-, f, \beta}\big\}\Big)
\\
&&
\hspace{4.5cm}
 ~-~
\frac{1}{n} \sum_{j=1}^n \1\Big(f(x_j^-) > \max\big\{f(x^+), t_{\D_-, f, \beta}\big\}\Big)\Big]\bigg]
\\
&\leq&
\frac{1}{\beta}
\sup_{x^+ \in \X}\bigg|
\E_{x^- }\Big[\1\Big(f(x^-) > \max\big\{f(x^+), t_{\D_-, f, \beta}\big\}\Big) \Big]
\\
&&
\hspace{4.5cm}
~-~
\frac{1}{n} \sum_{j=1}^n \1\Big(f(x_j^-) > \max\big\{f(x^+), t_{\D_-, f, \beta}\big\}\Big)
\bigg|\\
&\leq&
\sup_{t \in \R}\bigg|\frac{1}{\beta}
\E_{x^- }\big[\1\big(f(x^-) > t\big) \big]
~-~
\frac{1}{n} \sum_{j=1}^n \1\big(f(x_j^-) > t\big)\bigg|
,
\end{eqnarray*}
where in the third step, we use the fact that $f$ has no ties. Thus
\begin{eqnarray*}
\term_1 &=&
\P_{S_- \sim D_-^n} \bigg(\bigcup_{f \in \F}\Big\{R_{\beta}[f; \D] \,-\,
\terr_{\beta}[f; \D, S_-] 
 \,\geq \, \epsilon/3 \Big\}\bigg) \\
&\leq&
\P_{S_- \sim D_-^n} \bigg(\bigcup_{f \in \F}\bigg\{
\sup_{t \in \R}\bigg|\E_{x^- }\big[\1\big(f(x^-) > t\big) \big]
~-~
\frac{1}{n} \sum_{j=1}^n \1\big(f(x_j^-) > t\big)
\bigg|
~\geq~ \beta\epsilon/3 \bigg\}\bigg)\\
&=&
\P_{S_- \sim D_-^n} \bigg(\bigcup_{f \in \F}\bigcup_{t \in \R}\bigg\{
\bigg|\E_{x^- }\big[\1\big(f(x^-) > t\big) \big] 
~-~
\frac{1}{n} \sum_{j=1}^n \1\big(f(x_j^-) > t\big)\bigg|	
~\geq~ \beta\epsilon/3 \bigg\}\bigg)\\
&\leq&
C_2 \, n^d e^{-2n\beta^2\epsilon^2/9},
\end{eqnarray*}
which follows by applying the result in \Lem{lem:uniform-covergence}.

For the second term, we have
%~\\[-0.5cm]
\begin{eqnarray*}
\lefteqn{\term_2}\\
 &=& \P_{S_- \sim D_-^n} \bigg(\bigcup_{f \in \F}\Big\{\terr_{\beta}[f; \D, S_-] \,-\, \berr_{\beta}[f; \D_+, S_-]
 \,\geq \, \epsilon/3\Big\} \bigg)
\\
&=& 
\P_{S_- \sim D_-^n} \bigg(\bigcup_{f \in \F}\bigg\{
\frac{1}{n\beta} \sum_{j=1}^n \ell_+(f, x_j^-)\,\1\big(f(x_j^-) > t_{\D_-, f, \beta}\big)
\\
&&
\hspace{4.5cm}
\,-\,
\frac{1}{n\beta} \sum_{j=1}^n \ell_+(f, x_j^-)\,\1\big(f(x_j^-) > \wt_{S_-, f, \beta}\big)
~\geq~ \epsilon/3 
 \bigg\}
 \bigg)\\
&\leq&
\P_{S_- \sim D_-^n} \bigg(\bigcup_{f \in \F}\bigg\{
\bigg| \frac{1}{n\beta} \sum_{j=1}^n \ell_+(f, x_j^-) \Big[\1\big(f(x_j^-) > t_{\D_-, f, \beta}\big)\\
&&
\hspace{4.5cm}
\,-\,
\1\big(f(x_j^-) > \wt_{S_-, f, \beta}\big)\Big] \bigg|
~\geq~ \epsilon/3 
 \bigg\}
 \bigg).
\end{eqnarray*}
Note that if $t_{\D_-, f, \beta} \leq  \wt_{S_-, f, \beta}$, then $\1\big(f(x^-) > t_{\D_-, f, \beta}\big) - \1\big(f(x^-) > \wt_{S_-, f, \beta}\big) \geq 0, \,\forall x^- \in \X$, and if $t_{\D_-, f, \beta} >  \wt_{S_-, f, \beta}$, then $\1\big(f(x^-) > t_{\D_-, f, \beta}\big) - \1\big(f(x^-) > \wt_{S_-, f, \beta}\big) \leq 0, \,\forall x^- \in \X$; since one of these two cases will always hold, and because $\ell$ is bounded ($0 \leq \ell_+(f, x^-) \leq 1, \, \forall x^- \in \X$), we have
\begin{eqnarray*}
\term_2
&\leq&
\P_{S_- \sim D_-^n} \bigg(\bigcup_{f \in \F}\bigg\{
\bigg|\frac{1}{n\beta} \sum_{j=1}^n \Big[\1\big(f(x_j^-) > t_{\D_-, f, \beta}\big)
\\
&&
\hspace{4.5cm}
\,-\, 
\1\big(f(x_j^-) > \wt_{S_-, f, \beta}\big)
\Big]
\bigg|
~\geq~ \epsilon/3 
 \bigg\}
 \bigg)\\
%&=&
%\P_{S_- \sim D_-^n} \bigg(\bigcup_{f \in \F}\bigg\{
%\bigg|
%\frac{1}{n\beta} \sum_{j=1}^n\1\big(f(x_j^-) > \wt_{S_-, f, \beta}\big)
%\,-\, 
%\frac{1}{n\beta} \sum_{j=1}^n \1\big(f(x_j^-) > t_{\D_-, f, \beta}\big)
%\bigg|
%~\geq~ \epsilon/3 
% \bigg\}
% \bigg)\\
&=&
\P_{S_- \sim D_-^n} \bigg(\bigcup_{f \in \F}\bigg\{
\bigg|
\frac{1}{n} \sum_{j=1}^n\1\big(f(x_j^-) > t_{\D_-, f, \beta}\big)\\
&&
\hspace{4.5cm}
\,-\, 
\frac{1}{n} \sum_{j=1}^n \1\big(f(x_j^-) > \wt_{S_-, f, \beta}\big)
\bigg|
~\geq~ \beta\epsilon/3 
 \bigg\}
 \bigg)\\
&=&
\P_{S_- \sim D_-^n} \bigg(\bigcup_{f \in \F}\bigg\{
\bigg|
\frac{1}{n} \sum_{j=1}^n \1\big(f(x_j^-) > t_{\D_-, f, \beta}\big)
\,-\,
\beta
\bigg|
~\geq~ \beta\epsilon/3 
 \bigg\}
 \bigg)\\
&=&
\P_{S_- \sim D_-^n} \bigg(\bigcup_{f \in \F}\bigg\{
\bigg|
\frac{1}{n} \sum_{j=1}^n \1\big(f(x_j^-) > t_{\D_-, f, \beta}\big)\\
&&
\hspace{4.5cm}
\,-\, 
\E_{x^-}\big[\1\big(f(x^-) > t_{\D_-, f, \beta}\big)\big]
\bigg|
~\geq~ \beta\epsilon/3 
 \bigg\}
 \bigg)\\
&\leq&
\P_{S_- \sim D_-^n} \bigg(\bigcup_{f \in \F}\bigcup_{t \in \R}\bigg\{
\bigg|
\frac{1}{n} \sum_{j=1}^n \1\big(f(x_j^-) > t\big)
\,-\, 
\E_{x^-}\big[\1\big(f(x^-) > t\big)\big]
\bigg|
~\geq~ \beta\epsilon/3 
 \bigg\}
 \bigg)\\
&\leq&
C_2 \, n^d e^{-2n\beta^2\epsilon^2/9},
\end{eqnarray*}
where the third and fourth steps follow respectively from the definitions of the thresholds $\wt_{S_-, f, \beta}$ and $t_{\D_-, f, \beta}$ respectively; the last step uses the uniform convergence result in \Lem{lem:uniform-covergence}.

We next focus on bounding $\term_3$.
\begin{eqnarray}
\lefteqn{\berr_{\beta}[f; \D_+, S_-] \,-\, \werr_{\beta}[f; S]}
\nonumber
\\
&=&
\frac{1}{n\beta} \sum_{j=1}^{n} \E_{x^+}\Big[\1\big(f(x^+) \leq f(x_j^-), ~ f(x_j^-) > \wt_{S_-, f, \beta}\big)\Big]
\nonumber\\
&&
\hspace{4cm}
~-~ 
\frac{1}{mn\beta}\sum_{j=1}^{n}\sum_{i=1}^{m}\1\big(f(x_i^+) \leq f(x_j^-), ~ f(x_j^-) > \wt_{S_-, f, \beta}\big)
\nonumber
\\
&=&
\frac{1}{n\beta} \sum_{j=1}^{n}\1\big(f(x_j^-) > \wt_{S_-, f, \beta}\big)\bigg[
 \E_{x^+}\Big[\1\big(f(x^+) \leq f(x_j^-)\big)\Big]
\,-\,
\frac{1}{m}\sum_{i=1}^{m}\1\big(f(x_i^+) \leq f(x_j^-)\big) 
 \bigg]
\nonumber 
 \\
&\leq&
\frac{1}{n\beta} \sum_{j=1}^{n}\1\big(f(x_j^-) > \wt_{S_-, f, \beta}\big) \, \sup_{t \in \R}\bigg| \E_{x^+}\Big[\1\big(f(x^+) \leq t\big)\Big]\,-\,\frac{1}{m}\sum_{i=1}^{m}\1\big(f(x_i^+) \leq t\big)
\bigg|
\nonumber 
 \\
&=& \frac{1}{n\beta}(n\beta)\sup_{t \in \R}\bigg|
 \E_{x^+}\Big[\1\big(f(x^+) \leq t\big)\Big]
\,-\,
\frac{1}{m}\sum_{i=1}^{m}\1\big(f(x_i^+) \leq t\big)\bigg|
\nonumber 
 \\
&=& \sup_{t \in \R}\bigg|
 \E_{x^+}\Big[\1\big(f(x^+) \leq t\big)\Big]
\,-\,
\frac{1}{m}\sum_{i=1}^{m}\1\big(f(x_i^+) \leq t\big) 
\bigg|
 ,
 \label{eqn:term1-subs}
\end{eqnarray}
where the fourth step uses the definition of the empirical top-$\beta$ threshold $\wt_{S_-, f, \beta}$.
Then
\begin{eqnarray*}
{\term_3} &=&
\P_{S \sim D_+^m \times D_-^n} \bigg(\bigcup_{f \in \F}\Big\{\berr_{\beta}[f; \D_+, S_-]\,-\,\werr_{\beta}[f; S]  \,\geq \, \epsilon/3 \Big\}\bigg)\\
&=&
\E_{S_-}\Bigg[\P_{S_+|S_-} \bigg(\bigcup_{f \in \F}\bigg\{
\berr_{\beta}[f; \D_+, S_-]\,-\,\werr_{\beta}[f; S]
~\geq~ \epsilon/3 \bigg\}\bigg)\Bigg]\\
&\leq&
\E_{S_-}\Bigg[\P_{S_+|S_-} \bigg(\bigcup_{f \in \F}\bigg\{
\sup_{t \in \R}\bigg|
\E_{x^+}\Big[\1\big(f(x^+) \leq t\big)\Big]
\\
&&\hspace{3.5cm}
\,-\,
 \frac{1}{m}\sum_{i=1}^{m}\1\big(f(x_i^+) \leq t\big)\bigg|
\,\geq \, \epsilon/3 \bigg\}\bigg)
\Bigg]\\
&=&
\E_{S_-}\Bigg[\P_{S_+|S_-} \bigg(\bigcup_{f \in \F}
\bigcup_{t \in \R}\bigg\{
\bigg|
 \E_{x^+}\Big[\1\big(f(x^+) \leq t\big)\Big]\\
&&
\hspace{3.5cm}
\,-\,
\frac{1}{m}\sum_{i=1}^{m}\1\big(f(x_i^+) \leq t\big)
\bigg|
\,\geq \, \epsilon/3 \bigg\}\bigg)
\Bigg]\\
&\leq&
C_1 \, m^d e^{-2m\epsilon^2/9},
\end{eqnarray*}
where the third step follows from \Eqn{eqn:term1-subs}, and the last step follows from the uniform convergence result in Lemma \ref{lem:uniform-covergence}.
%\begin{eqnarray}
%\lefteqn{\P_{S_- \sim D_-^n} \bigg(\bigcup_{f \in \F}\Big\{\terr^\pauc_\beta[f; \D, S_-] \,-\, R^\pauc_\beta[f; \D] \,\geq \, \epsilon/3 \Big\}\bigg)} 
%\nonumber\\
%&=& 
%\P_{S_- \sim D_-^n} \bigg(\bigcup_{f \in \F}\bigg\{
%\frac{1}{n\beta} \sum_{j=1}^n \ell_+(f, x_j^-)\,\1\big(f(x_j^-) > t_{\D_-, f, \beta}\big) \,-\,
%\frac{1}{\beta}\E_{x^- }\Big[ \ell_+(f, x^-)\,\1\big(f(x^-) > t_{\D_-, f, \beta}\big) \Big]
%~\geq~ \epsilon/3 \bigg\}\bigg) \nonumber\\
%&=& 
%\P_{S_- \sim D_-^n} \bigg(\bigcup_{f \in \F}\bigg\{
%\underbrace{
%\frac{1}{n} \sum_{j=1}^n \ell_+(f, x_j^-)\,\1\big(f(x_j^-) > t_{\D_-, f, \beta}\big) \,-\,
%\E_{x^- }\Big[ \ell_+(f, x^-)\,\1\big(f(x^-) > t_{\D_-, f, \beta}\big) \Big]
%}_{\term_B}
%~\geq~ \beta\epsilon/3 \bigg\}\bigg).\nonumber\\
%\label{eqn:prob-term-3-subs-back}
%\end{eqnarray}

Combining the bounds on each of the three terms gives us our desired result.
\end{proof}

\section{Other Supplementary Material}
\subsection{Efficient Implementation of Algorithms \ref{algo:mvc} and \ref{algo:mvc-general}}
\label{app:mvc-efficient}
In their current form, Algorithms \ref{algo:mvc} and \ref{algo:mvc-general} work with ordering matrices of size $mj_\beta$, and hence require a run time of $O(mj_\beta)$ to at least visit each entry of the matrix once. However, as in the case of the full AUC \cite{svmperf}, a more compact representation of size $m+j_\beta$ is sufficient to implement these algorithm. The specific representation that we use maintains counts of number of negative instances ranked below each positive instance and the number of positives ranked above each negative instance. With this new representation, the operations on the ordering matrices can be implemented using a simple sort operation, with a reduced time complexity of $O((m+j_\beta)\log(m+j_\beta))$.

More formally, given a sample with positive instances $S_+ = \{x^+_1, \ldots, x^+_m\}$ and negative instances $Z = \{z_1, \ldots, z_{j_\beta}\}$, we use a $(m+n)$-vector $\a = [\a^+, \a^-] \in \N^{m+j_\beta}$ to represent a relative ordering of the positives and negatives, where $a^+_i$ shall denote the number of negatives ranked below positive instance $x^+_i$ and $a^-_j$ is the number of positives ranked above negative instance $z_j$. The joint feature map in \Eqn{eqn:psi} for sample $(S_+, Z)$ can now be rewritten in terms of this representation:
%\vspace{-10pt}
\begin{eqnarray*}
\phi((S_+, Z),\pi) &=& \frac{1}{mj_\beta} \sum_{i=1}^m \sum_{j=1}^{j_\beta} (1-\pi_{ij}) (x^+_i - z_j)
	%\\
	\,\,=\,\, \frac{1}{mj_\beta} \bigg[\sum_{i=1}^m a^+_i x^+_i - \sum_{j=1}^{j_\beta} a^-_j z_j\bigg].
\end{eqnarray*}
Similarly, the AUC loss in \Eqn{eqn:auc-loss} for $m$ positive instances and $j_\beta$ negative instances can be written as:
\begin{eqnarray*}
\Delta_{\AUC}(\pi^*,\pi) =  \frac{1}{mj_\beta}\sum_{i = 1}^m \sum_{j=1}^n \pi_{i,j}
	= \frac{1}{mj_\beta} \sum_{j=1}^{j_\beta} (m- a^-_j),
\end{eqnarray*}
and the pAUC loss in \Eqn{eqn:pauc-loss} for $m$ positives and $j_\beta$ negatives becomes:
\[
\Delta^\text{tr}_\pAUC(\pi^*,\pi) \,=\,  \frac{1}{m(j_\beta-j_\alpha)}\sum_{i = 1}^m \sum_{j=j_\alpha+1}^{j_\beta} \pi_{i,(j)_\pi}
\,=\, \frac{1}{m(j_\beta-j_\alpha)} \sum_{j=j_\alpha+1}^{j_\beta} \big(m - a^-_{(j)_\a}\big),
\]
where $(j)_\a$ denotes the index of the $j$-th ranked negative instance by any fixed ordering of instances consistent with $\a$ (note that all such orderings yield the same loss).

%Each step of Algorithms \ref{algo:mvc} and \ref{algo:mvc-general} can now be implemented efficiently using the new representation. 
In the case of Algorithm \ref{algo:mvc}, line 3 can now be implemented in $O((m+j_\beta)\log(m+j_\beta))$ time by sorting the instances in $(S_+, \bar{Z})$ according to scores $w^\top x$ on positive instances and $w^\top x + 1$ on negative instances, and by constructing the desired output ordering $\bar{\a}$ from the sorted list. Since line 2 requires a sorting of negatives, the overall run time becomes $O((m+j_\beta)\log(m+j_\beta) \,+\, n\log(n))$. Similarly for Algorithm \ref{algo:mvc}, one can rewrite $H^i_w$ in terms of the new representation; the loop in lines 3--7 can then be implemented using two sorted lists (corresponding to $\pi$ and $\pi'$) and by maintaining appropriate counts while making a single pass over these lists. One again obtains a run time of $O((m+j_\beta)\log(m+j_\beta) \,+\, n\log(n))$ in this case.

%
%
%\subsection{Tightness of Surrogates for pAUC in $[\alpha, \beta]$}
%\label{app:hinge-tight}

%
%
\subsection{Additional Experimental Details}
\label{app:expts}
Here we provide additional details about our experiments.

\textbf{Parameter setting.} In most experiments, the parameters for the proposed SVM methods were chosen as follows: the regularization parameter $C$ was chosen from the range $\{10^{-5}, \ldots, 10^4\}$ for $\text{SVM}_{\text{pAUC}}$ and $\text{SVM}_{\text{AUC}}$, and from $\{10^{-2}, \ldots, 10^4\}$ for $\text{SVM}^\text{dc}_{\text{pAUC}}$ using a held-out portion of training set; the tolerance parameter $\epsilon$ was set to $10^{-4}$ for $\text{SVM}_{\text{pAUC}}$ and $\text{SVM}_{\text{AUC}}$, and the parameter $\tau$ in $\text{SVM}^\text{dc}_{\text{pAUC}}$ was set to $10^{-3}$. 

For the experiments on UCI data sets in Tables  \ref{tab:0.02-0.05}, a slightly smaller range of values $\{10^{-5}, \ldots, 10^{3}\}$ was used while tuning the regularization parameter $C$ for $\text{SVM}_{\text{pAUC}}$ on the a9a and covtype data sets. For the experiments on breast cancer detection in Table \ref{tab:kddcup08}, the tolerance parameter $\epsilon$ was set to $10^{-3}$ for $\text{SVM}_{\text{pAUC}}$ and $\text{SVM}^{\text{dc}}_{\text{pAUC}}$, and to $10^{-4}$ for $\text{SVM}_{\text{AUC}}$. For the results on pulmonary emboli detection in Table \ref{tab:kddcup06}, we used a smaller range of $\{10^{-5}, \ldots, 10^3\}$ for choosing the parameter $C$ in $\text{SVM}_{\text{pAUC}}$ and $\text{SVM}^\text{dc}_{\text{pAUC}}$.%; the tolerance parameter $\tau$ was set to 0.01 while applying $\text{SVM}^\text{dc}_{\text{pAUC}}$.% for the $\beta = 0.1$ case.
%Also for these results, we did not average over random train-test splits, but instead used the train and test sets provided in the KDD Cup 2006 challenge. % and did no averaged over random train-test splits.

We now move to the baseline methods. The two tunable parameters $\mu$ and $\tau$ in the baseline ASVM method \cite{asvm} were chosen from the ranges $\{0.0005, 0.001, \ldots, 0.1\}$ and $\{5 \times 10^{-5}, 10^{-4}, \ldots, 0.01\}$ respectively; in the case of the baseline pAUCBoost method \cite{paucboost}, the number of iterations was set to 100 for the smaller letter, chemo, leukemia and kddcup06 data sets and to 25 for the rest; in the Greedy-Heuristic method \cite{paucmax}, the coarseness parameter (number of equally spaced intervals into which each feature needs to be divided) was set to 100.

%\textbf{Implementation details.} The structural SVM algorithms $\text{SVM}_{\text{AUC}}$ and $\text{SVM}_{\text{pAUC}}$ were implemented in C using a publicly available API for structural SVM from \cite{structsvm}.\footnote{\url{http://svmlight.joachims.org/svm_struct.html}} Similarly, the DC programming based $\text{SVM}^\text{dc}_{\text{pAUC}}$ method was implemented using an API for latent structural SVM from \cite{YuJoachims09}.\footnote{\url{www.cs.cornell.edu/~cnyu/latentssvm/}} The baseline ASVM method was implemented using the quadratic program solver available in the MOSEK optimization toolbox.\footnote{\url{https://www.mosek.com/}} 

\textbf{Data preprocessing.} For data sets with more than two classes, one of the classes was taken as positive and the remaining were combined into the positive class. All continuous valued features in the training sets were normalized to zero mean and unit variance, and the same transformation was replicated on the test set. In case of the ppi data set alone, we assumed a transductive setting (this is a reasonable assumption as the set of all protein pairs is known apriori), where the statistics for normalization were computed on the entire data set.

%%%%%%%%%%%%%%%%%%%%%%%%%%%%%%%%%%%%%%%%%%%%%%%%%%%%%

\end{document}